%% Author_tex.tex
%% V1.0
%% 2011/14/12
%% developed by Techset
%%
%% This file describes the coding for imaiai.cls

\documentclass{imaiai}%%%%where imaiai is the template name
%%%% To get the numbered reference the authors should use numbib as given after the \documentclass

%The authors can define any packages after the \documentclass{imaiai} command.

%\usepackage{amsmath} for dealing with mathematics,
%\usepackage{amsthm} for dealing with theorem environments,
%\usepackage{cite} for dealing with citations
%\usepackage{hyperref} for linking the cross references
%\usepackage{graphics} for dealing with figures.
%\usepackage{algorithmic} for describing algorithms
%\usepackage{subfig} for getting the subfigures e.g., "Figure 1a and 1b" etc.
%\usepackage{url} It provides better support for handling and breaking URLs.

%The author can find the documentation of the above style file and any additional
%supporting files if required from "http://www.ctan.org"

% *** Do not adjust lengths that control margins, column widths, etc. ***

\usepackage{bm}
\usepackage{graphicx}
\usepackage{enumerate}
\usepackage{amssymb,amsmath,amsthm,mathrsfs}
\usepackage{algorithm}
\usepackage{amsfonts,delarray}
\usepackage{algorithm,algcompatible,amsmath}
\usepackage{rotating,color}
\usepackage{subfigure}
\usepackage{longtable}
\usepackage{bbm}
\usepackage{rotating}
\usepackage{times,url}
%\usepackage{authblk}
%\usepackage{dblfloatfix}
%\usepackage[flushleft]{threeparttable}
%\usepackage{arydshln}
%\usepackage[colorlinks=true, allcolors=blue]{hyperref}
%\usepackage[inline]{showlabels}
%\usepackage{lineno}
%\titleformat{\section}{\bf\large\center\uppercase}{\thesection.}{1em}{}

%\graphicspath{{../fig/}}

\newtheorem{assumption}{Assumption}

\newcommand{\var}{\mathrm{Var}}

\newcommand{\av}{\bm a}
\newcommand{\Av}{\bm A}
\newcommand{\bv}{\bm b}

\newcommand{\sign}{\mathrm{sign}}

\newcommand{\gv}{\bm g}
\newcommand{\Gv}{\bm G}
\newcommand{\hv}{\bm h}
\newcommand{\Iv}{\bm I}

\newcommand{\projv}{\bm \Pi}
\newcommand{\lambdav}{\bm \lambda}

\newcommand{\Deltav}{\bm \Delta}
\newcommand{\sv}{\bm s}
\newcommand{\Sigmav}{\bm \Sigma}
\newcommand{\uv}{\bm u}
\newcommand{\wv}{\bm w}
\newcommand{\vv}{\bm v}

\newcommand{\xv}{\bm x}
\newcommand{\yv}{\bm y}
\newcommand{\zv}{\bm z}

\DeclareMathOperator*{\argmin}{arg\,min}
\DeclareMathOperator*{\argmax}{arg\,max}

%\definecolor{rp}{RGB}{83,54,106}

%\def\boxit#1{\vbox{\hrule\hbox{\vrule\kern6pt\vbox{\kern6pt#1\kern6pt}\kern6pt\vrule}\hrule}}
%\def\adc#1{\vskip 2mm\boxit{\vskip 2mm{\color{blue}\bf#1} {\color{blue}\bf\vskip 2mm}}\vskip 2mm}

\begin{document}

\title{Does SLOPE outperform bridge regression?}

\shorttitle{Does SLOPE outperform bridge regression?} %%%for recto running head
\shortauthorlist{Shuaiwen, Haolei, and Arian} %%% for verso running head

\author{{%%%% First author details
\sc Shuaiwen Wang}\\[2pt]
Department of Statistics, Columbia University, NY 10027, USA\\
{sw2853@columbia.edu}\\[6pt]
%%%%%%% Second author details
{\sc Haolei Weng}$^*$\\[2pt]
Department of Statistics and Probability, Michigan State university, MI 48824, USA \\
{\email{wenghaol@msu.edu}}\\[6pt]
%%%%%%%
{\sc and}\\[6pt]
%%%%%%% Third author details
{\sc Arian Maleki} \\[2pt]
Department of Statistics, Columbia University, NY 10027, USA \\
{arian@stat.columbia.edu}}

\maketitle

\begin{abstract}
{    
A recently proposed SLOPE estimator \cite{bogdan2015slope} has been shown to adaptively achieve the minimax $\ell_2$ estimation rate under high-dimensional sparse linear regression models \cite{su2016slope}. Such minimax optimality holds in the regime where the sparsity level $k$, sample size $n$, and dimension $p$ satisfy $k/p\rightarrow 0, k\log p/n\rightarrow 0$. In this paper, we characterize the estimation error of SLOPE under the complementary regime where both $k$ and $n$ scale linearly with $p$, and provide new insights into the performance of SLOPE estimators. We first derive a concentration inequality for the finite sample mean square error (MSE) of SLOPE. The quantity that MSE concentrates around takes a complicated and implicit form. With delicate analysis of the quantity, we prove that among all SLOPE estimators, LASSO is optimal for estimating $k$-sparse parameter vectors that do not have tied non-zero components in the low noise scenario. On the other hand, in the large noise scenario, the family of SLOPE estimators are sub-optimal compared with bridge regression such as the Ridge estimator. 
}
{Concentration inequality, LASSO, mean square error, noise sensitivity, Ridge, SLOPE}
%%%% If classification number provided then
\\
2000 Math Subject Classification: 34K30, 35K57, 35Q80,  92D25
\end{abstract}

\section{Introduction}\label{sec:intro}

In high-dimensional statistics, one of the most fundamental problems is the estimation of $k$-sparse parameter vector $\xv \in \mathbb{R}^p$ in the linear regression model:
\begin{equation} \label{eq:model-setup0}
    \yv = \Av \xv + \zv,
\end{equation}
where $\Av \in \mathbb{R}^{n \times p}$ is the design matrix, $\yv \in \mathbb{R}^n$ is the response vector, and $\zv \in \mathbb{R}^n$ is the noise vector of independent entries with variance $\sigma_z^2$. It has been established that the minimax rate for estimating $\xv$ over the class of $k$-sparse parameters is $(k/n)\log (p/k)$ \cite{ye2010rate, raskutti2011minimax, verzelen2012minimax}. Several $\ell_1$ based methods such as the LASSO and the Dantzig selector were proved to obtain the rate $(k/n)\log p$ \cite{candes2004near, bickel2009simultaneous, negahban2012unified}. However, it was largely unknown whether there exists a computationally feasible approach to adaptively achieve the optimal minimax rate $(k/n)\log (p/k)$ untile recent years. There has been considerable progress since \cite{bogdan2015slope} introduced the
\textit{sorted-$\ell_1$ penalized estimator (SLOPE)}, defined as
\begin{equation} \label{eq:slope-solution}
    \hat{\xv} \in \argmin_{\xv} \frac{1}{2}\|\yv-\Av \xv\|_2^2+ \sum_{i=1}^p \lambda_i |\xv|_{(i)},
\end{equation}
where $\lambda_1\geq\lambda_2 \geq \cdots \geq \lambda_p \geq 0$ is a sequence of nonincreasing weights, and $|\xv|_{(i)}$ denotes the $i$\textsuperscript{th} largest value of $\{|x_i|\}_{i=1}^p$. The regularization term $\sum_{i=1}^p \lambda_i |\xv|_{(i)}$, as a function of $\xv$, is a norm in $\mathbb{R}^p$ \cite{bogdan2015slope}. Hence \eqref{eq:slope-solution} is a convex optimization problem and can be solved in polynomial time. SLOPE was originally proposed to control \textit{false discovery rate (FDR)} in the problems of multiple testing and variable selection. Shortly thereafter, \cite{su2016slope} proved that for the Gaussian random design $\Av$ with $A_{ij} \overset{i.i.d.}{\sim} \mathcal{N}(0, \frac{1}{n})$, with the choice $\lambda_i=\sigma_z\cdot \Phi^{-1}(1-(iq)/(2p))$ where $\Phi(\cdot)$ is the cdf of a standard normal and $0<q<1$ is a fixed constant, SLOPE attains the minimax estimation rate $(k/n)\log (p/k)$ in the asymptotic regime $k/p\rightarrow 0, k\log p/n\rightarrow 0$. The same result was extended to designs with independent sub-Gaussian entries in \cite{lecue2018regularization}. The recent work \cite{bellec2018slope} further showed that SLOPE continues to achieve the optimal rate for more general designs satisfying a restricted eigenvalue type condition. The authors also proved that the rate of LASSO can be improved to $k/n\log(p/k)$ with tuning parameter of order $\sqrt{\log(p/k)/n}$\footnote{This requires the knowledge of the sparsity $k$. When $k$ is unknown, the paper proposed an adaptive method for LASSO to achieve the same rate.}. According to the aforementioned results, both LASSO (optimally tuned) and SLOPE (with appropriately chosen $\{\lambda_i\}$) attain the optimal rate. The question then arises as to which one of the two estimators is better. We note that LASSO is a special case of SLOPE by choosing $\lambda_i=\lambda~(i=1,\ldots, p)$. Thus, the question can be generally formulated as the comparison of different SLOPE estimators. This problem is not only theoretically appealing, but can provide helpful guidance for practitioners to pick the right method. 

In this paper, we address the above question by providing a refined analysis of the mean square error (MSE) of SLOPE. Rather than order-wise results, the comparison of rate optimal estimators requires a sharp characterization of MSE. We will derive the sharp expression of MSE, and evaluate the expression for different SLOPE estimators. Along this line, we further leverage the high-dimensional asymptotic results of bridge regression estimators \cite{wang2017bridge} for the comparison to shed more light on the performance of SLOPE. Our main contributions can be summarized in the following:
\begin{enumerate}
    \item We provide concentration inequalities for the finite-sample MSE of SLOPE estimators under different scenarios. The quantity that MSE concentrates around is characterized by a system of non-linear equations. 
    \item We characterize the phase transition and low noise sensitivity of SLOPE. The results show that LASSO has the optimal phase transition and low noise sensitivity performance among all the SLOPE estimators, for the estimation of sparse signals without tied non-zero components.
    \item We prove that in the large noise setting, all the SLOPE estimators are outperformed by a family of bridge regression estimators such as the Ridge regression.
\end{enumerate}

\paragraph{\emph{Related Works.}} To obtain precise error characterization, we focus on the high-dimensional regime where both the sparsity $k$ and sample size $n$ scale linearly with the dimension $p$. This asymptotic framework evolved in a series of papers by Donoho and Tanner \cite{donoho2006most, donoho2005neighborliness, donoho2005sparse, donoho2006high} to characterize the phase transition curve for LASSO and some of its variants. Since then several analytical tools have been developed and adopted to study different problems under this asymptotic setting. Examples include message passing analysis \cite{donoho2009message, donoho2011noise, bayati2011dynamics, bayati2011lasso, zheng2017does, weng2018overcoming}, convex Gaussian min-max theorem \cite{stojnic2009various, stojnic2013upper, thrampoulidis2015regularized, thrampoulidis2018precise, dhifallah2017phase}, and leave-one-out analysis \cite{lei2018asymptotics, wang2018approximate, wang2018approximatelearn, sur2017likelihood}. 

In this paper, we will use convex Gaussian min-max theorem (CGMT) to help with the derivation of concentration inequality. CGMT has been developed in \cite{thrampoulidis2018precise} to obtain asymptotic expression of MSE for a large class of regularized estimators. However, due to the non-separability of the SLOPE penalty, it requires potentially strong assumption on the weight sequence $\{\lambda_i\}_{i=1}^p$ to derive the asymptotic expression for its MSE. Our concentration inequality provides a more quantitative way to evaluate MSE, requires weaker assumptions on $\{\lambda_i\}_{i=1}^p$ and covers the limiting result as a simple corollary. We should also mention that noise sensitivity analysis has been performed for some other regularized estimators such as bridge regression \cite{donoho2011noise, wang2017bridge, zheng2017does, weng2018overcoming, weng2019lownoise}. Given the fact that the regularization term in SLOPE is non-separable, the analysis for SLOPE is much more subtle. 

While we were preparing our paper, we became aware of three recent works \cite{hu2019asymptotics, celentano2019approximate,bu2019algorithmic} that are relevant to the study of SLOPE. However, there are substantial differences between the contributions of these papers and ours. The work by Hu and Lu studied SLOPE under a similar high-dimensional regime. Nevertheless, \cite{hu2019asymptotics} assumed more restrictive assumptions on $\{\lambda_i\}_{i=1}^p$ and derived the asymptotic limit of MSE, while we obtain the finite-sample concentration inequality for MSE. More importantly, the main focus of \cite{hu2019asymptotics} is on a practical algorithm that aims to search for the optimal SLOPE estimator. In contrast, our work provides an analytical comparison for different SLOPE estimators, and reveals the optimal SLOPE estimator under different noise levels. \cite{celentano2019approximate} derived a finite-sample concentration inequality for symmetrically penalized least squares including SLOPE. The concentration is measured under the Wasserstein distance for the empirical joint distribution of the estimator and the truth. There is no definite conclusion whether the concentration result in \cite{celentano2019approximate} is stronger or weaker than ours, because the constants appearing in these concentration inequalities exhibit different dependence on the model parameters and are not directly comparable. Moreover, the key issue addressed in \cite{celentano2019approximate} is the role of non-separability of the penalty for adaptive estimation, while we provide an answer to the noise sensitivity performance of different SLOPE estimators. \cite{bu2019algorithmic} developed an asymptotically exact characterization of the SLOPE estimator via the framework of approximate message passing (AMP). The performance is measured under a pseudo-Lipschitz loss function between the estimator and the truth, including MSE as a special example. However, the main result of \cite{bu2019algorithmic} is the derivation and characterization of an iterative AMP algorithm that provably (asymptotically) converges to the SLOPE solution. On the contrary, our work first provides a finite-sample concentration for the MSE of SLOPE, and proceeds with delicate noise sensitivity analysis.

\paragraph{\emph{Notations.}} Throughout the paper, we use bold and regular letters for vectors and scalars, respectively. For a given vector $\vv=(v_1,\ldots,v_p)\in \mathbb{R}^p$, $|\vv|_{(i)}$ denotes the $i$\textsuperscript{th} largest value of $\{|v_i|\}_{i=1}^p$ and $\vv_{[i:j]}$ denotes the subvector of $\vv$ with components $(v_{i}, \ldots, v_j)$. Further $\|\vv\|_0=\sum_i|v_i|^0$ with convention $0^0=0$, $\|\vv\|_2=\sqrt{\sum_{i}v_i^2}$, and $\|\vv\|_{\infty}=\max_{i}|v_i|$. We use $\|\vv\|_{\lambdav}$ to denote the sorted $\ell_1$ norm $\sum_{i} \lambda_i|\vv|_{(i)}$. When $\vv$ is random, $\|\vv\|_{\mathcal{L}_2} := \frac{1}{\sqrt{p}} \big[\mathbb{E}\|\vv\|_2^2\big]^{\frac{1}{2}}$ denotes the averaged expected $\ell_2$ norm. For a matrix $\Av\in \mathbb{R}^{p\times p}$, $\|\Av\|_2$ is its spectral norm. With two vectors $\vv_1$, $\vv_2$, we use $\langle \vv_1, \vv_2 \rangle$ and $\vv_1^\top \vv_2$ exchangeably for their inner product. For a function $f$, $\|f\|_{\mathrm{Lip}}$ denotes its Lipschitz norm. $\Phi(\cdot)$ and $\phi(\cdot)$ are the cdf and pdf of a standard normal respectively, and $\Phi^{-1}(\cdot)$ is the inverse function of $\Phi(\cdot)$. We denote $a_n=\Omega(b_n)$ when $b_n=O(a_n)$, and $a_n=\Theta(b_n)$ if and only if $a_n=\Omega(b_n)$ and $a_n=O(b_n)$. We use $\lesssim$ and $\gtrsim$ to denote less than and greater than up to an absolute constant. For $a, b\in \mathbb{R}, a\vee b=\max(a,b), a\wedge b=\min(a,b), \sign(a)$ equals $\frac{a}{|a|}$ if $a\neq 0$ and equals $0$ otherwise. For a positive integer $k, [k]=\{1,2,\ldots, k\}$. $\mathbb{R}_+=\{x\in \mathbb{R}: x\geq 0\}, \mathbb{R}_{++}=\{x\in \mathbb{R}: x>0\}$.\\

The remainder of the paper is organized as follows. Section \ref{sec:our} discusses in details our main contributions. Section \ref{sec:experiments} presents some numerical studies to validate our theoretical results and explore possible generalizations. We conclude the paper with a discussion in Section \ref{sec:discussions}, and relegate all the proofs to Section \ref{sec:proof}.

\section{Main Results} \label{sec:our}
In this section, we present our main results. We will show a concentration inequality for the MSE of SLOPE estimator in Section \ref{ssec:main:concentration}. In Section \ref{ssec:noiseless-compare}, we perform the noise sensitivity analysis of SLOPE, and provide a detailed comparison with the standard bridge estimators. Before delving into the details, we first clarify the setup of our study. In this work, we consider the following linear model:
\begin{equation} \label{eq:model-setup}
    \yv = \Av \xv + \zv,
\end{equation}
where $\Av \in \mathbb{R}^{n \times p}$ is the design matrix, $\yv \in \mathbb{R}^n$ is the response vector, $\xv\in \mathbb{R}^p$ is the unknown $k$-sparse coefficient vector that we want to estimate, and $\zv\in \mathbb{R}^n$ denotes the noise. We study the family of SLOPE estimators given by
\begin{align}
\label{new:slope:def}
 \hat{\xv}(\gamma) \in \argmin_{\xv} \frac{1}{2}\|\yv-\Av \xv\|_2^2+\gamma \|\xv\|_{\lambdav},
\end{align}
where $\gamma>0$ is a regularization parameter. Note that for notational simplicity, we have suppressed $\lambdav$ in $\hat{\xv}(\gamma)$. Given a weight vector $\lambdav\in \mathbb{R}^p$, \eqref{new:slope:def} defines a SLOPE estimator. We observe that setting $\lambda_1=\cdots=\lambda_p=1$ in \eqref{new:slope:def} yields the LASSO estimator. In our analysis of the SLOPE estimators, we make the following assumptions. Once we mention all the assumptions, we will provide a detailed discussion of why each assumption has been made. 

\begin{assumption}[Linear scaling] \label{assum:asymp} $\|\xv\|_0 = k > 0$. Furthermore, there exist $\kappa_1, \kappa_2,\kappa_3 >0$, such that
\begin{eqnarray}
\kappa_1 &\leq& \frac{n}{p} = \delta < \kappa_2, \nonumber \\
\kappa_3 &\leq& \frac{k}{p} = \epsilon < \delta. \nonumber
\end{eqnarray}
\end{assumption}

\begin{assumption}[IID Gaussian design] \label{assum:design}
    $A_{ij} \overset{i.i.d.}{\sim} \mathcal{N}(0, \frac{1}{n})$.
\end{assumption}

\begin{assumption}[Noise distribution] \label{assum:noise}
    $z_i$'s are i.i.d. sub-Gaussian with $\mathbb{E}z_i=0$, $\var(z_i) = \sigma_z^2$. Furthermore, there exists $\kappa_4>0$, such that $\|z_i\|_{\psi_2}\leq \sigma_z \kappa_4$.\footnote{The sub-Gaussian norm of a random variable $Z$ is defined as \[\|Z\|_{\psi_2} = \inf\{s>0 \ : \ \mathbb{E} ({\rm e}^{Z^2/s^2}-1) \leq 1\}. 
    \]}
\end{assumption}

\begin{assumption}[Bounded signal] \label{assum:bounded-signal}
    There exist $\kappa_5, \kappa_6 >0$ such that $\kappa_5 \leq \frac{\|\xv\|_2}{\sqrt{p}} \leq \kappa_6$.
\end{assumption}

\begin{assumption}[Reasonable weights] \label{assum:weight}
    $\lambda_1 \leq 1$, $\frac{\|\lambdav\|_2^2}{p} \geq \kappa_7$, for some $\kappa_7>0$.
\end{assumption}

Before we proceed to our main results, let us discuss these assumptions.  

Assumption \ref{assum:asymp} specifies the high-dimensional regime that our analysis will focus on. As we discussed in the last section, the usefulness of this regime has led many researchers to adopt this framework \cite{donoho2009message,  bayati2011lasso, weng2018overcoming, stojnic2009various, stojnic2013upper, thrampoulidis2015regularized, thrampoulidis2018precise, dhifallah2017phase,lei2018asymptotics, wang2018approximate, wang2018approximatelearn, sur2017likelihood}. The condition $\epsilon<\delta$ is very mild, as it merely requires the sample size to be larger than the number of non-zero elements of the signal. This is the information-theoretic limit for the exact recovery of a sparse signal from noiseless undersampled linear measurements \cite{wu2010renyi}.  

Assumption \ref{assum:design} is also a standard assumption that has been made in the linear asymptotic studies we cited above. While this assumption is admittedly restrictive, it has allowed a careful analysis of many estimators/algorithms and provided an accurate prediction of phenomena that are observed in high-dimensional settings, such as phase transitions. Furthermore, extensive simulation results reported elsewhere (see e.g. \cite{wang2017bridge,mousavi2018consistent}) have confirmed that the conclusions drawn for iid matrices hold for much broader classes of matrices. We will also report simulation results in Section \ref{sec:experiments} that show our main conclusions regarding SLOPE continue to hold for dependent and non-Gaussian designs. 

Assumption \ref{assum:noise} is another standard assumption in high-dimensional asymptotics. This assumption can possibly be weakened at the expense of obtaining slower concentration. However, to keep the discussion as simple as possible we consider sub-Gaussian noises.

In Assumption \ref{assum:bounded-signal}, the normalized $\ell_2$ norm square of the signal $\xv$ is assumed to be of order one. This together with Assumptions \ref{assum:asymp}, \ref{assum:design}, and \ref{assum:noise}  guarantees that the signal-to-noise ratio in each observation remains bounded. To clarify why one would like to keep the signal-to-noise ratio of order one, let us consider the well-studied LASSO problem. If the signal-to-noise ratio in each observation goes to $\infty$, then as $n,p \rightarrow \infty$ it is known that the estimation error of LASSO converges to zero above its phase transition\footnote{``Above (below) phase transition" refers to the success (failure) regime for exact recovery. A more specific explanation will be given in Section \ref{sec:experiments}.} (hence the problem is very similar to the noiseless setting). Furthermore, if the signal-to-noise ratio goes to zero, then no estimator can provide an accurate estimation of the signal under the linear asymptotic regime \cite{wu2010renyi}. Hence, this assumption ensures that the signal-to-noise ratio is bounded and the estimation problem does not have a trivial estimation error. 

%  Assumption \ref{assum:bounded-signal} also imposes the constraint that the elements of $\xv$ are uniformly bounded. It might be possible to remove this condition and work out the explicit dependence of the results on $\|\xv\|_{\infty}$. However, for notational simplicity, we stick to the boundedness assumption. 

Assumption \ref{assum:weight} imposes some constraints on the weights so that neither the loss function nor the penalty term in \eqref{new:slope:def} dominate. The upper bound in Assumption \ref{assum:weight} can be assumed without loss of generality due to the existence of the tuning parameter $\gamma$. 

Under these assumptions we study the performance of SLOPE in the next section. 

\subsection{A Concentration Inequality for SLOPE estimator}
\label{ssec:main:concentration}

Define the proximal operator of the SLOPE norm $\|\cdot\|_{\lambdav}$ as
\begin{equation*}
    \eta(\uv; \gamma, \lambdav)
    =
    \argmin_{\xv}\frac{1}{2} \|\uv-\xv \|_2^2+\gamma \|\xv\|_{\lambdav}.
\end{equation*}
When the weight sequence $\lambdav$ is clear from the context, we suppress the dependency of $\eta$ on $\lambdav$ and use the notation $\eta(\uv; \gamma)$ for the proximal operator. Below, we present a concentration inequality for the SLOPE estimators.
\begin{theorem} \label{thm:concentration0}
    Assume $\sigma_z, \gamma > 0$. Let $\hv \sim \mathcal{N}(0, \Iv_{p})$. There exist positive constants $\{C_i\}_{i=1}^5$ only possibly depending on the $\kappa_i$'s in Assumptions \ref{assum:asymp}-\ref{assum:weight} such that the following holds 
    \begin{equation} \label{eq:concentration-main}
        \mathbb{P}\bigg(\Big|\frac{1}{\sqrt{p}} \|\hat{\xv}(\gamma) - \xv \|_2
        - \alpha^* \Big| > t \bigg)
        \leq
        C_1\exp\Big(-C_2p\min\Big\{t^4A_1(\gamma, \sigma_z),t^2A_2(\gamma,\sigma_z)\Big\}\Big)
    \end{equation}
    for $0\leq t \leq \alpha^* A_3(\gamma,\sigma_z)$. Here, $\{A_i\}_{i=1}^3$ are functions that will be specified below under different scenarios. The quantity $\alpha^*$ takes the form $\alpha^*=\sqrt{\frac{1}{p}\mathbb{E}\|\eta(\xv + \sigma^*\hv; \sigma^*\chi^*) - \xv\|_2^2}$ where the unknown parameters $(\sigma^*, \chi^*)$ can be solved from the following two equations:
    \begin{align}
        (\sigma^*)^2 =& \sigma_z^2 + \frac{1}{\delta p}\mathbb{E}\|\eta(\xv + \sigma^*\hv; \sigma^*\chi^*) - \xv\|^2, \label{eq:state-evolution} \\
        \gamma =& \sigma^*\chi^* \Big(1 - \frac{1}{\delta \sigma^* p} \mathbb{E}\langle \eta(\xv + \sigma^*\hv; \sigma^*\chi^*), \hv \rangle \Big).
        \label{eq:calibration}
    \end{align}
    These two equations will be referred to as state evolution throughout the paper.\footnote{State evolution is  a term that is used for these two equations in the message passing literature \cite{donoho2009message}.} Below we consider three different scenarios and explain how $A_1, A_2,A_3$ are set in each case. The importance of these three cases in our paper will be clarified right after the theorem. Define the quantity
        \begin{align}
        \label{the:threshold}
        M_{\lambdav}(\chi^*) := \lim_{\sigma \rightarrow 0} \frac{1}{p} \mathbb{E}\|\eta(\xv/\sigma + \hv; \chi^*) -\xv/\sigma\|_2^2.     
        \end{align}
        Note that $M_{\lambdav}(\chi^*)$ depends on $\gamma,\sigma_z$ through $\chi^*$.
    \begin{enumerate}[(i)]
        \item \label{item:main-low-noise-abovePT0} Suppose $M_{\lambdav}(\chi^*) < \delta$ and $\xv$ has no tied nonzero components. Denote $C_M = \sqrt{\frac{\delta}{\delta - M_{\lambdav}(\chi^*)}}, C_\epsilon = \sqrt{\frac{\delta - \epsilon}{\epsilon}}$. Then we have
        \begin{align*}
            &A_1(\gamma, \sigma_z)=\big(C_M^6\sigma_z^2+C_M^8\sigma_z^4+C_M^8\sigma_z^4C_{\epsilon}^2(1+C_{\epsilon})^2\log p\big)^{-1} ,~~~A_2(\gamma, \sigma_z)=\sigma_z^{-2}C_M^{-4}, \\
            &A_3(\gamma, \sigma_z)=1 \wedge \sqrt{\frac{C_3 \gamma^2\sigma_z^{-2}}{C_{\epsilon}^2(1+C_4e^{C_5C_{\epsilon}^2})}}
        \end{align*}

        \item \label{item:main-low-noise-belowPT}
        Suppose $M_{\lambdav}(\chi^*) > \delta$ and $\xv$ has no tied nonzero components. Let $\sigma_0$ be the value that satisfies $\delta \sigma_0^2 = \frac{1}{p} \mathbb{E}\big\|\eta\big(\xv + \sigma_0\hv; \sigma_0 \chi^*\big) - \xv \big\|_2^2$, 
         and $b_0 = \frac{\partial}{\partial\sigma^2}\frac{1}{p} \mathbb{E}\big\|\eta\big(\xv + \sigma\hv; \sigma\chi^*\big) - \xv \big\|_2^2 \big|_{\sigma = \sigma_0}$. Denote $\bar{D}_M = \sqrt{\frac{\sigma_0^2}{\sigma_z^2}+\frac{\delta}{\delta-b_0}}, D_M = \sqrt{\frac{M_{\lambdav}(\chi^*) - \epsilon}{\epsilon}}$. We have 
            \begin{align*}
            &A_1(\gamma, \sigma_z)=\big(\bar{D}_M^6\sigma_z^2+\bar{D}_M^8\sigma_z^4+\bar{D}_M^8\sigma_z^4D_M^2(1+D_M)^2\log p\big)^{-1} ,~~~A_2(\gamma, \sigma_z)=\sigma_z^{-2}\bar{D}_M^{-4},\\
            &A_3(\gamma, \sigma_z)=1 \wedge \sqrt{\frac{C_3 \gamma^2\sigma_z^{-2}}{D_M^2(1+C_4e^{C_5D_M^2})}}.
        \end{align*}
        \item \label{item:main-large-noise-deltaB1}
         If $\frac{\gamma}{\sigma_z} > \frac{1}{\|\lambdav\|_2^2 / p} \sqrt{0 \vee \log \frac{16\delta + 8} {\delta^2}}$ and $\sigma_z > \frac{\sqrt{2(\delta + 1)} \|\xv\|_2}{\delta \sqrt{p}}$, then we have
        \begin{align*}
            & A_1(\gamma, \sigma_z)=\big(\sigma_z^2+\sigma_z^4+(\sigma_z^2\gamma^2+\gamma^4)\log p\big)^{-1},~~A_2(\gamma, \sigma_z)=\sigma_z^{-2}, ~~A_3(\gamma, \sigma_z)=1 \wedge \sqrt{\frac{C_3}{1+C_4e^{C_5\gamma \sigma_z^{-1}}}}
        \end{align*}
    \end{enumerate}
\end{theorem}
The proof of Theorem \ref{thm:concentration0} is presented in Section \ref{ssec:proof-concen}.

\begin{figure}[!t]
    \begin{center}
        \setlength\tabcolsep{2pt}
        \renewcommand{\arraystretch}{0.3}
        \begin{tabular}{rcccc}
            &
            \footnotesize{$\delta = 2$, $\sigma_z=1$}
            &
            \footnotesize{$\delta = 2$, $\sigma_z=0$}
            &
            \footnotesize{$\delta = 0.8$, $\sigma_z=1$}
            &
            \footnotesize{$\delta = 0.8$, $\sigma_z=0$} \\
            \rotatebox{90}{\qquad\qquad\quad\scriptsize{$\mathrm{MSE}$}}
            &
            \includegraphics[scale=0.3]{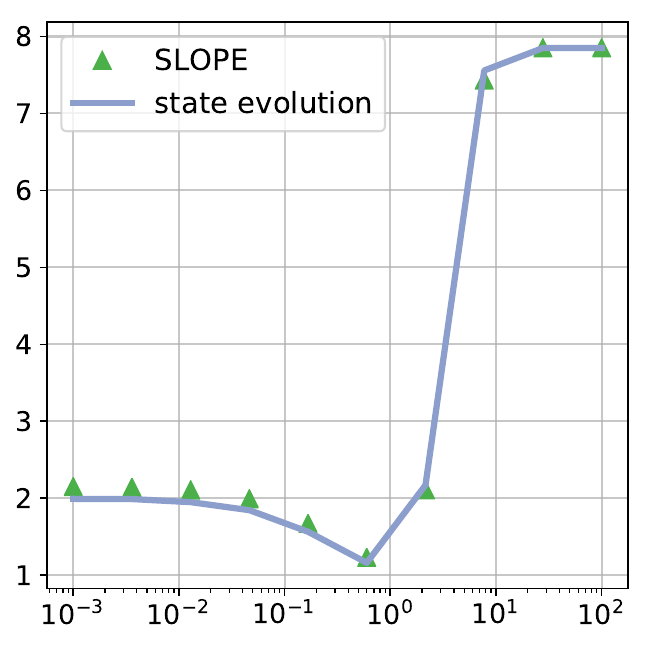}
            &
            \includegraphics[scale=0.3]{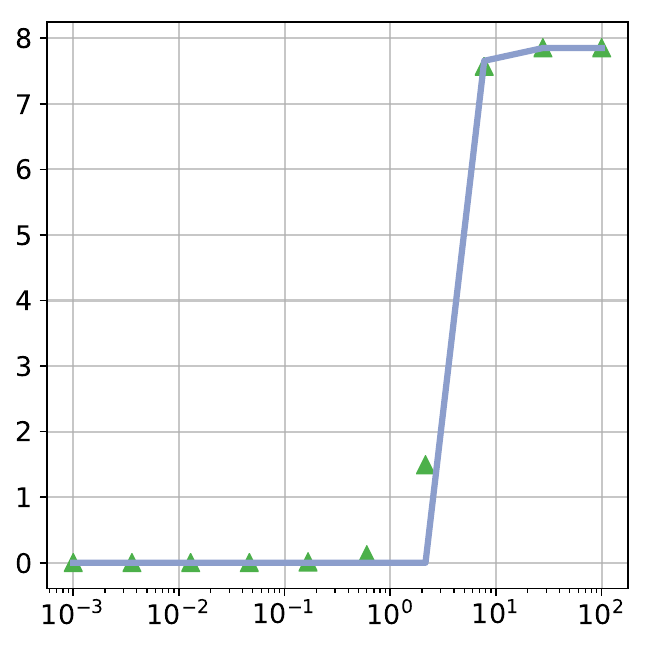}
            &
            \includegraphics[scale=0.3]{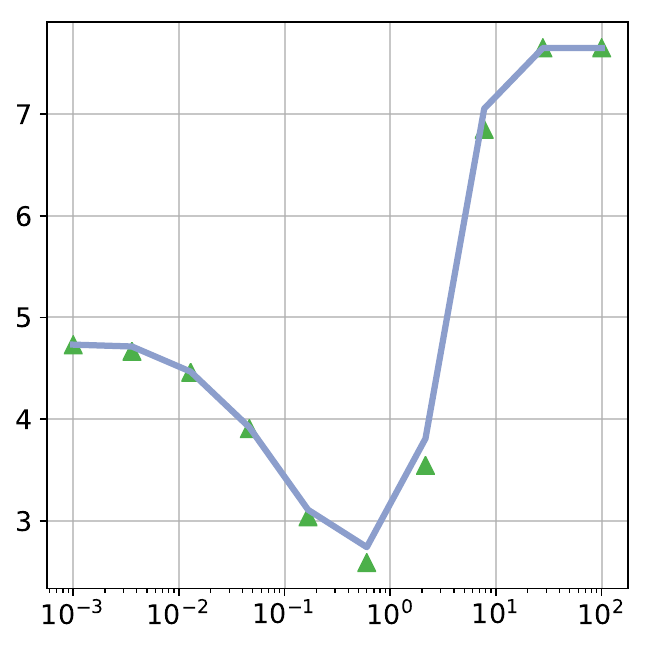}
            &
            \includegraphics[scale=0.3]{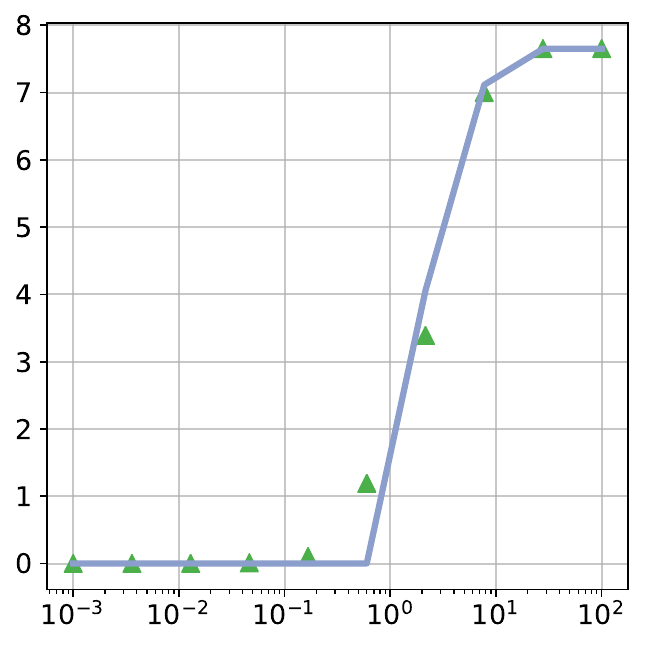} \\
            &
            \footnotesize{$\gamma$}
            &
            \footnotesize{$\gamma$}
            &
            \footnotesize{$\gamma$}
            &
            \footnotesize{$\gamma$}
        \end{tabular}
        \caption{Comparing the MSE of SLOPE estimator and the quantity
            $\delta((\sigma^*)^2 - \sigma_z^2)$ from state evolution
            equations \eqref{eq:state-evolution} and \eqref{eq:calibration}. The sequence of weights $\{\lambda_i\}$ are equally
            spaced within $[0.01, 1]$. We set $p=1000$, the components of $\xv$ are i.i.d. samples from
            $5 \ast \mathrm{Bernoulli}(\mathrm{prob}=0.3)$, and the components of
            $\zv$ are i.i.d. samples from $\mathcal{N}(0, \sigma_z^2)$.
        } \label{fig:sample-expected-mse}
    \end{center}
\end{figure}

We make several important remarks below to interpret and discuss the results of Theorem \ref{thm:concentration0}. 

\begin{remark}
Theorem \ref{thm:concentration0} shows that the MSE of a given SLOPE estimator concentrates tightly around $\frac{1}{p}\mathbb{E}\|\eta(\xv + \sigma^*\hv; \sigma^*\chi^*) - \xv\|^2$ which equals to $\delta((\sigma^*)^2 - \sigma_z^2)$ from \eqref{eq:state-evolution}. Given all the model and SLOPE parameters, we can compute the preceding quantity from the state evolution equations \eqref{eq:state-evolution} and \eqref{eq:calibration}. Such a quantity is expected to be an accurate prediction for MSE. This is empirically verified in Figure \ref{fig:sample-expected-mse}.  
\end{remark}

\begin{remark}
The condition on $t$ is technical and might be weakened by a more sophisticated analysis. However, since $A_3(\gamma,\sigma_z)\leq 1$, the concentration inequality holds for $t$ that is smaller than $\alpha^*$, which is the most interesting regime given that $\alpha^*$ is the location where the concentration is around. The rate $t^4$ in the exponent of \eqref{eq:concentration-main} emerges from our analysis of the objective function in \eqref{new:slope:def} based on convex Gaussian min-max theorem (CGMT). We conjecture that the sharp dependency on $t$ is $t^2$ instead of $t^4$, although proving it seems challenging under the CGMT framework. We leave a thorough analysis of the optimal rate on $t$ for a future research.
\end{remark}

\begin{remark}
Given that the SLOPE estimation problem involves several important parameters, such as the noise level $\sigma_z$ and the tuning parameter $\gamma$, we should expect these quantities to play a role in the concentration of the mean square error. Hence, obtaining a single concentration inequality that exhibits the accurate dependence on all the parameters seems to be remarkably challenging. As described in Theorem \ref{thm:concentration0}, to overcome this difficulty, we have chosen to present concentration results under three different scenarios. We now discuss the result of each scenario below.
\begin{enumerate}[(1)]
\item \label{item:discuss-above-PT} Scenarios (\ref{item:main-low-noise-abovePT0}) and (\ref{item:main-low-noise-belowPT}) are concerned with the concentration in the low noise regime. Scenario (\ref{item:main-low-noise-abovePT0}) considers the case in which the sample size (per dimension), $\delta$, is above the threshold $M_{\lambdav}(\chi^*)$. Note that in this case, it is clear that the probability bound becomes smaller as the noise level decreases, which captures qualitatively correct effect of $\sigma_z$. Moreover, if we choose $\gamma=\Theta(\sigma_z)$, $A_3(\gamma,\sigma_z)$ is of order one. As will be seen in the proof of Theorem \ref{thm:noiseless-phase-transition}, the condition $\gamma=\Theta(\sigma_z)$ holds for the optimal tuning of the parameter $\gamma$. The assumption that $\xv$ has no tied nonzero components is crucial for the comparison of different SLOPE estimators. We will discuss this assumption in more details in Section \ref{ssec:low-noise-compare}. Finally, note that the condition $\delta>M_{\lambdav}(\chi^*)$ ensures that SLOPE is ``performing above its phase transition", i.e., as the noise level $\sigma_z \rightarrow 0$, the MSE $\frac{1}{p}\mathbb{E}\|\eta(\xv + \sigma^*\hv; \sigma^*\chi^*) - \xv\|^2$ goes to zero as well. For studying the important features of the phase transition, the reader may refer to \cite{weng2018overcoming}.

\item \label{item:discuss-below-PT} Scenario (\ref{item:main-low-noise-belowPT}) characterizes the behavior when $\delta$ is below the threshold $M_{\lambdav}(\chi^*)$. In this regime, the mean square error of SLOPE does not vanish (in fact converges to $\delta \sigma_0^2$ from \eqref{eq:state-evolution}) when the noise level $\sigma_z\rightarrow 0$, so that SLOPE is ``performing below its phase transition". As a result, the probability bound we derived in this scenario becomes degenerate as $\sigma_z$ approaches zero, hence does not reveal the accurate expression of the noise level in the concentration inequality. Nevertheless, the concentration inequality is still valid in terms of the dimension or sample size, holding all the other parameters fixed. Moreover, as will be clear in Section \ref{ssec:noiseless-compare}, this scenario is not of particular interest for our low noise sensitivity analysis.

\item \label{item:discuss-large-noise} Scenario (\ref{item:main-large-noise-deltaB1}) shows the concentration result in the large noise regime. The requirement on the tuning $\gamma\geq \frac{1}{\|\lambdav\|_2^2 / p} \sqrt{0 \vee \log \frac{16\delta + 8} {\delta^2}}\cdot \sigma_z$ is reasonable in this setting, because it is desirable to set a large value of the tuning to reduce the variance of the SLOPE estimate, when the noise level is high. In particular, as we will discuss in Section \ref{ssec:noiseless-compare}, the condition is satisfied by the optimal tuning. Note that as the system has larger noise ($\sigma_z$ increases), the concentration is expected to become worse. Our probability bound is consistent with such intuition. 
\end{enumerate}
\end{remark}

\begin{remark}
In the proof of Theorem \ref{thm:concentration0}, we have derived a more general concentration theorem (c.f. Theorem \ref{master:thm}) including the three scenarios from Theorem \ref{thm:concentration0} as special cases. Nevertheless, the probability bound in the general concentration result depends on additional parameters $(\sigma^*,\chi^*)$, thus does not reveal an explicit dependency on the noise level $\sigma_z$. Since the paper is focused on the noise sensitivity analysis, the concentration results in Theorem \ref{thm:concentration0} are more interpretable and relevant.
\end{remark}

\begin{remark}
The non-separability of the sorted $\ell_1$ norm in SLOPE and the complicated form of the equations \eqref{eq:state-evolution} \eqref{eq:calibration} bring substantial difficulty to derive the concentration inequality. Hence we do not claim our results to be the optimal ones. For example, there might exist a sharper result for LASSO due to its amenable structure.  
\end{remark}

\begin{remark}
\cite{hu2019asymptotics} has showed that as $n\rightarrow \infty$, the MSE of a given SLOPE estimator converges to the limit of $\frac{1}{p}\mathbb{E}\|\eta(\xv + \sigma^*\hv; \sigma^*\chi^*) - \xv\|^2$ for specialized weight sequence $\{\lambda_i\}$. Using Borel-Cantelli lemma, such asymptotic result is directly obtained from the concentration inequality \eqref{eq:concentration-main}. Moreover, setting $\lambda_1=\cdots=\lambda_p=1$ recovers the asymptotic result of LASSO \cite{donoho2011noise, bayati2011lasso}.
\end{remark}

\subsection{Noise sensitivity analysis of SLOPE}
\label{ssec:noiseless-compare}
The concentration inequality in Theorem \ref{thm:concentration0} accurately characterizes the behavior of SLOPE estimator under different noise levels. In this section, we aim to employ this result and obtain a fair comparison among different SLOPE estimators. Toward this goal, define 
\begin{align} \label{amse:like}
    e_{\lambdav}(\gamma, \sigma_z)=\frac{1}{p}\mathbb{E}\|\eta(\xv+\sigma^* \hv;\sigma^* \chi^*)-\xv\|_2^2, 
\end{align}
where $(\sigma^*, \chi^*)$ is the solution to the state evolution equations \eqref{eq:state-evolution} and \eqref{eq:calibration}. According to Theorem \ref{thm:concentration0} and as empirically verified in Figure \ref{fig:sample-expected-mse}, the squared error $\frac{1}{p}\|\hat{\xv}(\gamma)-\xv\|_2^2$ of the SLOPE estimator $\hat{\xv}(\gamma)$ concentrates tightly around $e_{\lambdav}(\gamma, \sigma_z)$. Hence, we use $e_{\lambdav}(\gamma, \sigma_z)$ to evaluate the quality of the estimate $\hat{\xv}(\gamma)$. 

As is clear from the expressions in \eqref{eq:state-evolution}, \eqref{eq:calibration}, and \eqref{amse:like}, the value of $e_{\lambdav}(\gamma, \sigma_z)$ depends on the signal $\xv$, the noise level $\sigma_z$, the regularization parameter $\gamma$, and the sample size $\delta$ (per dimension) in an implicit, nonlinear and complicated way. Hence, in order to gain useful information about the performance of $\hat{\xv}(\gamma)$, we will focus our study on the impact of the noise level $\sigma_z$ on $e_{\lambdav}(\gamma, \sigma_z)$. In particular, we analyze $e_{\lambdav}(\gamma, \sigma_z)$ under the low noise and large noise scenarios in Sections \ref{ssec:low-noise-compare} and \ref{ssec:large-noise-compare}, respectively. Our delicate noise sensitivity analysis will turn $e_{\lambdav}(\gamma, \sigma_z)$ into explicit and informative quantities that provide interesting insights into the behavior of the family of SLOPE estimators. Towards that goal, we consider the value of $\gamma$ that minimizes $e_{\lambdav}(\gamma, \sigma_z)$,
\begin{align*}
    \gamma_{\lambdav}^*=\argmin_{\gamma>0}~ e_{\lambdav}(\gamma, \sigma_z).
\end{align*}
Thus, $e_{\lambdav}(\gamma_{\lambdav}^*,\sigma_z)$ characterizes the performance of $\hat{\xv}(\gamma_{\lambdav}^*)$, i.e., the SLOPE estimator under the optimal tuning $\gamma=\gamma_{\lambdav}^*$ that minimizes the mean square error (or equivalently prediction error). This is the best MSE that each SLOPE estimator can possibly achieve. Our subsequent analyses and results are tailored to estimators with the regularization parameter $\gamma$ being optimally tuned.
\subsubsection{Low noise sensitivity analysis of SLOPE}
\label{ssec:low-noise-compare}

In this section, we aim to perform a noise sensitivity analysis of SLOPE. In this analysis, we consider the noise level $\sigma_z$ to be very small, and calculate the asymptotic MSE $e_{\lambdav}(\gamma_{\lambdav}^*,\sigma_z)$. The following theorem summarizes our main result regarding the low noise sensitivity analysis of SLOPE:

\begin{theorem} \label{thm:noiseless-phase-transition}
Let $k = \|\xv\|_0$ and suppose $\xv\in \mathbb{R}^p$ does not have tied non-zero elements. Define
\begin{equation} \label{eq:M-lambda}
    M_{\lambdav} = \frac{1}{p}\inf_{\alpha>0}\bigg\{k+\alpha^2 \sum_{i=1}^k\lambda_i^2+\mathbb{E}\|\eta(\tilde{\hv};\alpha, \lambdav_{[k+1:p]})\|_2^2\bigg\},
\end{equation}
where $\tilde{\hv}\in \mathbb{R}^{p-k}\sim \mathcal{N}(0,\Iv_{p-k})$. Then, we have 
\begin{enumerate}[(a)]
\item \label{item:optimal-mse-above-PT}
\begin{align*}
\lim_{\sigma_z\rightarrow 0} e_{\lambdav}(\gamma_{\lambdav}^*,\sigma_z) =
\begin{cases}
>0, & \mbox{~~if~} \delta < M_{\lambdav},\\
=0, & \mbox{~~if~}\delta > M_{\lambdav}.
\end{cases}
\end{align*}
\item \label{item:optimal-mse-below-PT} Furthermore, 
\begin{align*}
\lim_{\sigma_z\rightarrow 0} \frac{e_{\lambdav}(\gamma_{\lambdav}^*,\sigma_z)}{\sigma_z^2}=
\begin{cases}
\infty, & \mbox{~~if~} \delta < M_{\lambdav},\\
\frac{\delta M_{\lambdav}}{\delta-M_{\lambdav}}, & \mbox{~~if~}\delta > M_{\lambdav}.
\end{cases}
\end{align*}

\end{enumerate}
\end{theorem}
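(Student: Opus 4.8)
The plan is to reduce everything to the low-noise behavior of the state evolution system, exploiting the positive homogeneity $\eta(c\uv; c\gamma) = c\,\eta(\uv;\gamma)$ of the sorted-$\ell_1$ prox. First I would rewrite $e_{\lambdav}(\gamma,\sigma_z)$ using this scaling: since $\eta(\xv+\sigma^*\hv;\sigma^*\chi^*) = \sigma^*\eta(\xv/\sigma^*+\hv;\chi^*)$,
\[
e_{\lambdav}(\gamma,\sigma_z) = \frac{(\sigma^*)^2}{p}\mathbb{E}\|\eta(\xv/\sigma^*+\hv;\chi^*) - \xv/\sigma^*\|_2^2 .
\]
Combined with the first state evolution equation, which gives $e_{\lambdav}(\gamma,\sigma_z) = \delta((\sigma^*)^2-\sigma_z^2)$, the whole analysis hinges on the small-$\sigma^*$ limit of the normalized residual, which is exactly the quantity $M_{\lambdav}(\chi^*)$ defined in \eqref{the:threshold}. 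I would first show that, above the phase transition, $\sigma^*\to 0$ as $\sigma_z\to 0$, so the residual converges to $M_{\lambdav}(\chi^*)$; eliminating $e_{\lambdav}$ between the two relations then yields $(\sigma^*)^2 = \delta\sigma_z^2/(\delta - M_{\lambdav}(\chi^*))$ and hence $e_{\lambdav}(\gamma,\sigma_z)/\sigma_z^2 \to \delta M_{\lambdav}(\chi^*)/(\delta - M_{\lambdav}(\chi^*))$, self-consistent precisely when $\delta > M_{\lambdav}(\chi^*)$.

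The second step is to make $M_{\lambdav}(\chi^*)$ explicit and match it to \eqref{eq:M-lambda}. Here the no-ties assumption is essential: as $\sigma\to 0$ the $k$ nonzero coordinates of $\xv/\sigma$ diverge and, having distinct magnitudes, occupy the top $k$ ranks in a fixed order with high probability, so the sorting/pooling of the SLOPE prox decouples the support block from the off-support block. On the support, each diverging coordinate lies in the locally affine (thresholding) region and its residual is $h_i - \chi^*\lambda_{r(i)}\sign(x_i)$, contributing $\sum_{i=1}^k (1+(\chi^*)^2\lambda_i^2) = k + (\chi^*)^2\sum_{i=1}^k\lambda_i^2$ in expectation; off the support the input is pure noise $\tilde{\hv}$ denoised by the reduced-weight prox, contributing $\mathbb{E}\|\eta(\tilde{\hv};\chi^*,\lambdav_{[k+1:p]})\|_2^2$. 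This yields $M_{\lambdav}(\chi^*) = \frac1p\{k+(\chi^*)^2\sum_{i=1}^k\lambda_i^2 + \mathbb{E}\|\eta(\tilde{\hv};\chi^*,\lambdav_{[k+1:p]})\|_2^2\}$, so that $\inf_{\chi^*}M_{\lambdav}(\chi^*) = M_{\lambdav}$ with $\alpha$ playing the role of $\chi^*$.

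Third, I would optimize over the tuning $\gamma$. Rewriting the calibration equation \eqref{eq:calibration} via Gaussian integration by parts gives $\gamma = \sigma^*\chi^*(1 - D_p(\chi^*)/\delta)$, where $D_p$ is the normalized expected divergence of the prox; as $\sigma_z\to 0$ this forces $\gamma = \Theta(\sigma_z)$ and sets up a correspondence in which $\chi^*$ sweeps out $(0,\infty)$. Because $x\mapsto \delta x/(\delta-x)$ is strictly increasing on $[0,\delta)$, minimizing the limiting ratio over $\gamma$ is equivalent to minimizing $M_{\lambdav}(\chi^*)$ over $\chi^*$, whose value is $M_{\lambdav}$. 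To justify interchanging $\min_\gamma$ and $\lim_{\sigma_z\to 0}$ I would use a $\liminf$/$\limsup$ sandwich: the lower bound follows from $M_{\lambdav}(\chi^*)\ge M_{\lambdav}$ for every $\gamma$, and the upper bound by plugging in a near-minimizer $\alpha$ of $M_{\lambdav}(\cdot)$ and choosing $\gamma$ to realize $\chi^*\approx\alpha$. This proves part (b) when $\delta>M_{\lambdav}$, and together with $e_{\lambdav}=\delta((\sigma^*)^2-\sigma_z^2)$, part (a). When $\delta<M_{\lambdav}$, every admissible $\chi^*$ satisfies $M_{\lambdav}(\chi^*)\ge M_{\lambdav}>\delta$, so the system sits below the phase transition: the $\sigma_z\to 0$ fixed point $\sigma^*$ stays bounded away from zero uniformly in $\gamma$, giving $\lim_{\sigma_z\to 0}e_{\lambdav}(\gamma_{\lambdav}^*,\sigma_z) = \min_\gamma e_{\lambdav}(\gamma,0)>0$ and ratio $\to\infty$.

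The hard part will be the rigorous decoupling of the SLOPE prox in Step 2 and the passage to the limit inside the expectation. Unlike separable soft-thresholding, the sorted-$\ell_1$ prox couples all coordinates through its global sorting and pool-adjacent-violators step, so I must show that for small enough $\sigma$, with overwhelming probability the diverging support coordinates neither pool among themselves nor with the $O(1)$ noise coordinates, and that they inherit exactly the weights $\lambda_1,\dots,\lambda_k$ in the order dictated by $|x_i|$ — this is precisely where the no-ties hypothesis enters and where a dominated-convergence / uniform-integrability argument is needed to exchange limit and expectation. A secondary difficulty is controlling this convergence uniformly enough in $\gamma$ (equivalently $\chi^*$) to legitimize the interchange of minimization and limit in Step 3.
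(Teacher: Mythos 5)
Your proposal follows essentially the same route as the paper: the identity $e_{\lambdav}(\gamma^*_{\lambdav},\sigma_z)=\delta((\sigma^*)^2-\sigma_z^2)$, the small-$\sigma$ decoupling of the SLOPE prox into a support block $k+\chi^2\sum_{i\le k}\lambda_i^2$ plus an off-support block $\mathbb{E}\|\eta(\tilde{\hv};\chi,\lambdav_{[k+1:p]})\|_2^2$ (the paper's Lemma \ref{lemma:bounding-mse-limit}, where the no-ties hypothesis enters exactly as you describe), and the monotonicity of $v\mapsto f(v)/v$ to extract the ratio $\delta M_{\lambdav}/(\delta-M_{\lambdav})$. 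The only difference is packaging of the $\min_\gamma$/limit interchange: where you propose a $\liminf$/$\limsup$ sandwich requiring a $\gamma\leftrightarrow\chi^*$ correspondence as $\sigma_z\to 0$ (which you rightly flag as delicate), the paper first shows (Lemma \ref{optimal:fixed:point:eq}) that the optimally tuned error solves a single fixed-point equation with $\inf_{\chi>0}$ already pulled inside, and then only needs $\lim_{v\to0}\inf_{\chi>0}\frac{1}{p}\mathbb{E}\|\eta(\xv/\sqrt{v}+\hv;\chi)-\xv/\sqrt{v}\|_2^2=M_{\lambdav}$ (Lemma \ref{pp4}), proved by showing the pointwise minimizer $\chi(v)$ stays bounded and the limiting objective is strictly convex.
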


The proof of this theorem can be found in Section \ref{sssec:proof-low-noise}. Several remarks are in order.

\begin{remark}
The low noise sensitivity analysis is aligned with the concentration results of Scenarios (\ref{item:main-low-noise-abovePT0}) and (\ref{item:main-low-noise-belowPT}) in Theorem \ref{thm:concentration0}. As will be shown in Lemma \ref{pp4}, $M_{\lambdav}$ defined in \eqref{eq:M-lambda} equals to $M_{\lambdav}(\chi^*)$ in \eqref{the:threshold} under optimal tuning $\gamma=\gamma^*_{\lambdav}$. Thus, the cases $\delta> M_{\lambdav}$ and $\delta<M_{\lambdav}$ correspond to Scenarios (\ref{item:main-low-noise-abovePT0}) and (\ref{item:main-low-noise-belowPT}), respectively. 
\end{remark}

\begin{remark}
    Part (\ref{item:optimal-mse-above-PT}) in Theorem \ref{thm:noiseless-phase-transition} characterizes the phase transition of SLOPE estimators. Specifically, as the noise vanishes, SLOPE can fully recover the $k$-sparse signal $\xv$ if and only if $\delta >M_{\lambdav}$. Thus, $M_{\lambdav}$ is the sharp threshold of SLOPE for exact recovery.
\end{remark}

\begin{remark}
    Part (\ref{item:optimal-mse-below-PT}) in Theorem \ref{thm:noiseless-phase-transition} further reveals the low noise sensitivity of SLOPE. Above phase transition where exact recovery is attainable, the error $e_{\lambdav}(\gamma_{\lambdav}^*,\sigma_z)$ of all the SLOPE estimators reduces to zero in the same rate of $\sigma_z^2$. Hence the constant $\frac{\delta M_{\lambdav}}{\delta-M_{\lambdav}}$ represents the noise sensitivity of each SLOPE estimator. The smaller $M_{\lambdav}$ is, the smaller the constant is. 
\end{remark}

The explicit formulas we derived in Theorem \ref{thm:noiseless-phase-transition} enable us to compare different SLOPE estimators with each other and also with more standard estimators such as bridge regression. According to this theorem, the key quantity that determines the performance of SLOPE is $M_{\lambdav}$. Hence, in order to find the best SLOPE estimator we should find a sequence $\lambdav$ that minimizes $M_{\lambdav}$. The following proposition addresses this issue.

\begin{proposition} \label{lem:best_slope_small_noise}
  $M_{\lambdav}$ as a function of $\lambdav$, is minimized when $\lambda_1=\cdots=\lambda_p$. 
\end{proposition}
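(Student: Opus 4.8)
The plan is to exploit the structural split in \eqref{eq:M-lambda} between the $k$ ``signal'' coordinates and the $p-k$ ``noise'' coordinates, together with the ordering constraint $\lambda_1 \ge \cdots \ge \lambda_p \ge 0$. The largest $k$ weights enter $M_{\lambdav}$ only through the bias term $\alpha^2\sum_{i=1}^k\lambda_i^2$, which one wants small, while the remaining weights $\lambdav_{[k+1:p]}$ enter only through the shrinkage term $\mathbb{E}\|\eta(\tilde{\hv};\alpha,\lambdav_{[k+1:p]})\|_2^2$, which one wants small by shrinking harder, i.e.\ by taking those weights large. Since the ordering constraint forces every signal weight to dominate every noise weight, both objectives push all weights toward the common value $\lambda_{k+1}$, and the constant sequence is exactly the point where the two competing demands are reconciled. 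I would therefore prove the proposition by showing that, for every fixed $\alpha>0$, the bracketed objective in \eqref{eq:M-lambda} at $\lambdav$ is at least its value at the constant sequence $(\lambda_{k+1},\ldots,\lambda_{k+1})$, and then take the infimum over $\alpha$.

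Concretely, I would carry out two reductions at a fixed $\alpha$. First, since $\lambda_i \ge \lambda_{k+1}$ for all $i \le k$, one has $\sum_{i=1}^k \lambda_i^2 \ge k\lambda_{k+1}^2$, with equality when $\lambda_1 = \cdots = \lambda_k = \lambda_{k+1}$; this replacement lowers the bias term and leaves the shrinkage term untouched, as the latter depends only on $\lambdav_{[k+1:p]}$. Second, raising each tail weight $\lambda_{k+2},\ldots,\lambda_p$ up to $\lambda_{k+1}$ leaves the bias term untouched and, I claim, does not increase the shrinkage term. After both reductions every weight equals $\lambda_{k+1}$, the sorted-$\ell_1$ penalty collapses to the $\ell_1$ norm, and $\eta$ becomes coordinatewise soft-thresholding, which is the LASSO. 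Because each reduction decreases the objective for every $\alpha$ and preserves the ordering constraint, taking $\inf_{\alpha>0}$ yields $M_{\lambdav} \ge M_{(\lambda_{k+1},\ldots,\lambda_{k+1})}$; and since \eqref{eq:M-lambda} is invariant under $\lambdav \mapsto c\lambdav$ (absorb $c$ into $\alpha$), all nonzero constant sequences share one value, so the minimum of $M_{\lambdav}$ is attained at $\lambda_1 = \cdots = \lambda_p$.

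The crux, and the step I expect to be the main obstacle, is the monotonicity claim in the second reduction: if two admissible tail-weight vectors satisfy $\bw \le \bw'$ componentwise, then $\|\eta(\uv;\alpha,\bw)\|_2 \ge \|\eta(\uv;\alpha,\bw')\|_2$ for every $\uv$. The naive two-inequality argument from the two proximal optimality conditions only produces a comparison of weighted order statistics, $\sum_i (w_i'-w_i)\,|\eta(\uv;\alpha,\bw)|_{(i)} \ge \sum_i (w_i'-w_i)\,|\eta(\uv;\alpha,\bw')|_{(i)}$, which is not directly an $\ell_2$ statement. I would instead invoke the explicit form of the sorted-$\ell_1$ proximal map: after sorting $|\uv|$ in decreasing order, the vector of sorted output magnitudes equals the Euclidean projection of $|\uv|_{\downarrow} - \alpha\bw$ onto the cone of nonincreasing, nonnegative vectors. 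Increasing $\bw$ componentwise decreases the argument $|\uv|_{\downarrow} - \alpha\bw$ componentwise; since that cone is a sublattice, projection onto it is order preserving (as is also transparent from the min--max representation of isotonic regression), so the projected vector decreases componentwise and hence so does its $\ell_2$ norm. This gives the claim pointwise in $\uv$, and taking expectations over $\tilde{\hv}$ yields the inequality for the shrinkage term. The degenerate case $\lambda_{k+1}=0$, where the target constant sequence collapses to zero, is handled separately by noting that vanishing tail weights leave the noise coordinates unshrunk and are strictly worse than any nonzero constant sequence.
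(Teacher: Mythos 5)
Your proposal is correct, and its skeleton — lower the top-$k$ weights to shrink the bias term, raise the tail weights to shrink the noise term, observe that the ordering constraint forces both moves toward a common constant, then absorb that constant into $\alpha$ by scale invariance — is the same reduction the paper performs (the paper parametrizes by an intermediate level $\bar\lambda$ with $\lambda_k\ge\bar\lambda\ge\lambda_{k+1}$ and optimizes over it, whereas you fix $\bar\lambda=\lambda_{k+1}$; both work). Where you diverge is the justification of the crux step, the monotonicity of $\mathbb{E}\|\eta(\tilde\hv;\alpha,\lambdav_{[k+1:p]})\|_2^2$ in the tail weights. The paper argues geometrically via Lemma \ref{p1}: $\eta(\tilde\hv;\alpha,\lambdav_{[k+1:p]})=\tilde\hv-\projv_{\tilde{\mathcal{D}}_\alpha}(\tilde\hv)$, so the shrinkage term is a squared distance to the dual ball, and enlarging the weights enlarges the ball and hence decreases the distance — a one-line norm comparison. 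You instead prove the stronger componentwise statement $|\eta_i(\uv;\alpha,\bw')|\le|\eta_i(\uv;\alpha,\bw)|$ for $\bw'\ge\bw$ via order-preservation of isotonic regression on the monotone cone; this is in fact exactly the paper's Lemma \ref{lemma:component-mono} (proved there by an exchange argument rather than the sublattice/min--max formula you cite), which the paper states but does not actually invoke in its proof of the proposition. Your route is sound and self-contained; the dual-ball argument is shorter because it delivers precisely the $\ell_2$ comparison needed, while yours buys a componentwise conclusion that is stronger than required. Your handling of the degenerate case $\lambda_{k+1}=0$ and of the rescaling step is also consistent with the paper's final computation.
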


The proof of this proposition can be found in Section \ref{sssec:proof-low-noise}.
 
According to this proposition, we can conclude that LASSO is optimal among all SLOPE estimators in the low noise scenario. Note that it has been proved that LASSO outperforms all the convex bridge estimators in the low-noise regime \citep{weng2018overcoming}, but not necessarily the non-convex bridge estimators \citep{zheng2017does}.  

\begin{remark}
We should emphasize that the requirement that the unknown signal $\xv$ does not have tied non-zero components is critical for both Theorem \ref{thm:noiseless-phase-transition} and Proposition \ref{lem:best_slope_small_noise}. Intuitively speaking, for signal $\xv$ with tied non-zero components, given the fact that setting unequal weights $\{\lambda_i\}$ can produce estimators having tied non-zero elements (cf. Lemma \ref{property:primal} Part (\ref{lemma:item:prox-form})), a SLOPE estimator (with appropriately chosen weights) makes better use of the signal structure than LASSO does. Hence, the optimality of LASSO will not hold for such signals. We provide some empirical results in Section \ref{sec:experiments} to support this claim. That being said, it is also important to point out that the assumption about signals without tied non-zero components is not necessarily required for characterizing the mean squared error of each SLOPE estimator. See, for example, the general concentration inequality (Theorem \ref{master:thm}) we have derived in Section \ref{master:for:all:case}. This assumption is made to enable a sharp comparison among all SLOPE estimators and reveal the optimality of LASSO. 
\end{remark}

\subsubsection{Large noise sensitivity analysis of SLOPE}
\label{ssec:large-noise-compare}

In the last section, we discussed the performance of the SLOPE estimators in the situations where the noise in the observations is small. Under such circumstances we showed that the LASSO is the best SLOPE estimator. In this section, we aim to study the SLOPE estimators in the low signal-to-noise ratio regimes. The following theorem summarizes our result in the low signal-to-noise ratio regime:

\begin{theorem} \label{thm:large-noise-ridge-better}
As $\sigma_z \rightarrow \infty$,
\begin{align} \label{large:noise:for:slope}
e_{\lambdav}(\gamma_{\lambdav}^*,\sigma_z)=\frac{1}{p}\|\xv\|_2^2+O(\exp(-c \sigma_z^2)),
\end{align}
where $c>0$ is a constant possibly depending on $\kappa_5,\kappa_6$, and $\kappa_7$ in Assumptions \ref{assum:bounded-signal} and \ref{assum:weight}.  
\end{theorem}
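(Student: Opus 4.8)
The plan is to sandwich $e_{\lambdav}(\gamma_{\lambdav}^*,\sigma_z)$ between $\frac1p\|\xv\|_2^2$ and $\frac1p\|\xv\|_2^2-Ce^{-c\sigma_z^2}$, observing that $\frac1p\|\xv\|_2^2$ is exactly the risk of the null estimator $\hat{\xv}\equiv\bm{0}$. Throughout I write $\chi=\chi^*$ for the effective threshold attached to a given $\gamma$ via the state evolution \eqref{eq:state-evolution}--\eqref{eq:calibration}, and exploit the positive homogeneity of the SLOPE proximal map, $\eta(\xv+\sigma^*\hv;\sigma^*\chi)=\sigma^*\yv$ with $\yv:=\eta(\xv/\sigma^*+\hv;\chi)$, so that $e_{\lambdav}(\gamma,\sigma_z)=\frac{(\sigma^*)^2}{p}\,\mathbb{E}\|\yv-\xv/\sigma^*\|_2^2$. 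Since $\xv/\sigma^*$ will be shown to be vanishingly small, $\yv$ behaves like $\eta(\hv;\chi)$, which drives all the estimates below.

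First, the upper bound $e_{\lambdav}(\gamma_{\lambdav}^*,\sigma_z)\le\frac1p\|\xv\|_2^2$ is immediate: as $\gamma\to\infty$ the threshold $\chi\to\infty$, hence $\yv\to\bm{0}$ and $e_{\lambdav}(\gamma,\sigma_z)\to\frac1p\|\xv\|_2^2$, and since $\gamma_{\lambdav}^*$ minimizes $e_{\lambdav}(\cdot,\sigma_z)$ its value is at most this limit. Feeding $e_{\lambdav}(\gamma_{\lambdav}^*,\sigma_z)\le\frac1p\|\xv\|_2^2\le\kappa_6^2$ back into \eqref{eq:state-evolution} gives $\sigma_z\le\sigma^*\le\sigma_z\sqrt{1+\kappa_6^2/(\delta\sigma_z^2)}$, so $\sigma^*\to\infty$ and $\sigma^*\asymp\sigma_z$, which I use repeatedly.

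The bulk of the work is the matching lower bound, i.e. showing that no SLOPE estimator beats the null risk by more than $Ce^{-c\sigma_z^2}$. Set $G(\gamma)=\frac1p\|\xv\|_2^2-e_{\lambdav}(\gamma,\sigma_z)$; I must bound $\max_\gamma G(\gamma)$. Splitting the expectation defining $e_{\lambdav}$ on the event $\{\yv=\bm{0}\}$ and discarding the nonnegative remainder yields the clean inequality
\begin{equation*}
e_{\lambdav}(\gamma,\sigma_z)\ \ge\ \frac{\|\xv\|_2^2}{p}\,\mathbb{P}(\yv=\bm{0}),\qquad\text{equivalently}\qquad G(\gamma)\ \le\ \frac{\|\xv\|_2^2}{p}\,\mathbb{P}(\yv\neq\bm{0}).
\end{equation*}
I then control $\mathbb{P}(\yv\neq\bm{0})$ through the dual characterization of the SLOPE prox: $\yv=\bm{0}$ iff $\sum_{i=1}^j|\uv|_{(i)}\le\chi\sum_{i=1}^j\lambda_i$ for every $j$, where $\uv=\xv/\sigma^*+\hv$. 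A union bound over $j$, together with the weight estimates extracted from Assumption \ref{assum:weight} --- namely $\lambda_1\ge\sqrt{\kappa_7}$ and $\frac1j\sum_{i=1}^j\lambda_i\ge\frac1p\sum_{i=1}^p\lambda_i^2\ge\kappa_7$ (using monotonicity of $\lambda_i$ and $\lambda_i\ge\lambda_i^2$) --- reduces each term to a sub-Gaussian large deviation $\mathbb{P}\big(\sum_{i=1}^j|u_i|>\kappa_7\chi j\big)\le\binom{p}{j}e^{-c\sigma_z^2 j}$ once $\chi\gtrsim\sigma_z$; summing the resulting geometric series gives $\mathbb{P}(\yv\neq\bm{0})\le Ce^{-c\sigma_z^2}$, hence $G(\gamma)\le Ce^{-c\sigma_z^2}$, on the whole range $\chi\ge c_0\sigma_z$.

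It remains to rule out a large gain in the complementary range of moderate thresholds $\chi\le c_0\sigma_z$, and this is the step I expect to be the main obstacle, since there both competing effects are of the same exponentially small order and the sign of $G$ is fixed by a delicate cancellation. My plan is to prove directly that $G(\gamma)\le0$ when $\chi\le c_0\sigma_z$ for a suitable $c_0>0$. Using $\langle\yv,\xv\rangle\le\kappa_6\|\yv\|_1$, the bound $\|\yv\|_1^2\le\|\yv\|_0\,\|\yv\|_2^2$, and the identity $pG(\gamma)=2\sigma^*\mathbb{E}\langle\yv,\xv\rangle-(\sigma^*)^2\mathbb{E}\|\yv\|_2^2$, the claim $G\le0$ reduces to a lower bound of the form $\mathbb{E}\|\yv\|_2^2/\mathbb{E}\|\yv\|_0\ge 4\kappa_6^2/(\sigma^*)^2$, i.e. to showing that the surviving coordinates of the prox carry, on average, squared magnitude $\gtrsim1/\sigma_z^2$. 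This is precisely where the sorted-$\ell_1$ prox structure must be used: I will lower bound the typical surviving magnitude via shrinkage properties of the map (e.g. $|y|_{(1)}\ge(|\uv|_{(1)}-\chi\lambda_1)_+$) and the fact that for $\chi\le c_0\sigma_z$ an $\Omega(1)$ fraction of the mass of the order statistics $|\uv|_{(i)}$ exceeds the threshold by an $\Omega(1)$ amount, keeping the ratio above the required $O(1/\sigma_z^2)$ level. Combining the two ranges gives $\max_\gamma G(\gamma)\le Ce^{-c\sigma_z^2}$, which with the upper bound establishes \eqref{large:noise:for:slope}; the constant $c$ depends on $\kappa_5,\kappa_6$ through $\|\xv\|_2/\sqrt p$ and on $\kappa_7$ through the weight estimates, as claimed.
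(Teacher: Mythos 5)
Your skeleton — sandwiching $e_{\lambdav}(\gamma_{\lambdav}^*,\sigma_z)$ below the null risk via the $\gamma\to\infty$ limit, deducing $\sigma^*\asymp\sigma_z$, and reducing the lower bound to the sign and size of $pG(\gamma)=2\sigma^*\mathbb{E}\langle\yv,\xv\rangle-(\sigma^*)^2\mathbb{E}\|\yv\|_2^2$ — matches the first half of the paper's argument. You diverge on the lower bound. The paper never analyzes suboptimal tunings in the moderate range: it first proves (Lemma \ref{large:noise:optimal:tuning:value}) that the \emph{optimal} threshold satisfies $\chi(\sigma)=\Omega(\sigma)$, by noting that optimality forces $\mathbb{E}\|\yv\|_2^2\le\frac{2}{\sigma}\mathbb{E}\langle\yv,\xv\rangle$ (see \eqref{the:basic:fact}; this is essentially your ``$G\ge 0$ at the optimum'') and that a Laplace-approximation computation of the ratio $\mathbb{E}\|\yv\|_2^2/\mathbb{E}\langle\yv,\xv\rangle$ rules this out when $\chi=o(\sigma)$. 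It then needs only your regime (a), in the cleaner form $|\Delta|\le\frac{2\sigma^*\|\xv\|_2}{\sqrt{p}}\|\yv\|_{\mathcal{L}_2}$ followed by the $\ell_1$-prox domination of Lemma \ref{slope:prop} (\ref{item:basic-prox-bound}) and a Gaussian tail bound. That coordinatewise route also avoids the $\binom{p}{j}$ (or even a bare $p$) prefactor produced by your union bound over supports, which cannot be absorbed into $e^{-c\sigma_z^2}$ uniformly in $p$.

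The genuine gap is your regime (b), which you correctly identify as the crux but do not close. The justification offered — that for $\chi\le c_0\sigma_z$ an $\Omega(1)$ fraction of the order statistics exceeds the threshold by an $\Omega(1)$ amount — is false exactly where it is needed: when $\chi\asymp\sigma_z$, only an $e^{-\Theta(\sigma_z^2)}$ fraction of coordinates survives, and a surviving coordinate exceeds its threshold by $O(1/\sigma_z)$, so your target bound $\mathbb{E}\|\yv\|_2^2/\mathbb{E}\|\yv\|_0\ge 4\kappa_6^2/(\sigma^*)^2$ compares two quantities of the \emph{same} order $1/\sigma_z^2$ and its validity hinges on constants (roughly $2/(\chi\lambda)^2$ versus $4\kappa_6^2/\sigma_z^2$ in the soft-thresholding caricature), not on a crude order-of-magnitude argument. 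For the SLOPE prox the difficulty is compounded by tied groups: a surviving block's common value is an average of excesses and can be much smaller than any individual excess, so the per-coordinate magnitude cannot be read off from $|y|_{(1)}\ge(|\uv|_{(1)}-\chi\lambda_1)_+$. Making regime (b) rigorous would require precisely the sharp near-threshold second-moment asymptotics that the paper isolates in Lemma \ref{large:noise:optimal:tuning:value}; as written, your plan asserts the conclusion in the hard range rather than proving it.
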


The proof can be found in Section \ref{proof:of:theorem:large:noise}. The large noise sensitivity analysis in this theorem is consistent with Scenario (\ref{item:main-large-noise-deltaB1}) in Theorem \ref{thm:concentration0}. As will be seen in the proof the optimal tuning $\gamma^*_{\lambdav}=\Omega(\sigma_z^2)$, thus satisfying the requirement of the tuning in Scenario (\ref{item:main-large-noise-deltaB1}). To provide a good benchmark to understand and interpret Theorem \ref{thm:large-noise-ridge-better}, let us mention the large noise sensitivity result for bridge regression from \cite{wang2017bridge}. Consider the bridge estimator
\begin{align*}
\hat{\xv}(\gamma) \in \argmin_{\xv} \frac{1}{2}\|\yv-\Av \xv\|_2^2+\gamma \cdot \sum_{i=1}^p|x_i|^q.
\end{align*}
Let $e_q(\gamma, \sigma_z)$ denote the (asymptotically) exact expression of $\frac{1}{p} \| \hat{\xv}(\gamma) - \xv\|_2^2$ and define 
\[
\gamma_q^*=\argmin_{\gamma>0}e_q(\gamma, \sigma_z). 
\]
Thus, $e_{q}(\gamma_{q}^*,\sigma_z)$ measures the performance of the bridge estimator under optimal tuning. It has been proved \citep{wang2017bridge} that 
\begin{align} \label{large:noise:for:bridge}
    e_{q}(\gamma_{q}^*,\sigma_z)=\frac{1}{p}\|\xv\|_2^2- \frac{c_q\|\xv\|_2^4p^{-2}}{\sigma_z^2}+o(\sigma_z^{-2}),
    \qquad \text{for} \quad
    q \in (1, \infty).
\end{align}
Here, the positive constant $c_q$ only depends on $q$. Combing the results \eqref{large:noise:for:slope} and \eqref{large:noise:for:bridge}, we reach the following conclusions:
\begin{enumerate}[1.]
\item The SLOPE and bridge estimators share the same first order term $\|\xv\|_2^2/p$. This is expected because as the noise level goes to infinity, the variance will dominate the estimation error and thus the optimal estimator will eventually converge to zero.
\item The second order term is exponentially small for all SLOPE estimators, while it is negative and polynomially small for all bridge estimators with $q\in (1,\infty)$. Hence, bridge estimators outperform all the SLOPE estimators in the large noise scenario. Moreover, \cite{wang2017bridge} showed that the constant $c_q$ in \eqref{large:noise:for:bridge} attains the maximum at $q=2$. Therefore, Ridge regression turns out to be the optimal bridge estimator in the large noise scenario. In Section \ref{sec:experiments}, we use the Ridge estimator as a representative bridge estimator for numerical studies.
\item Theorem \ref{thm:large-noise-ridge-better} does not answer which SLOPE estimator is optimal. However, together with the result \eqref{large:noise:for:bridge} it reveals that the family of SLOPE estimators generally do not perform well compared with bridge estimators. We may prefer using bridge regression such as Ridge to estimate the sparse vector $\xv$ in the large noise scenario. 
\end{enumerate}

\section{Numerical Experiments} \label{sec:experiments}
In this section, we present our numerical studies. We pursue the following goals in our simulations:
\begin{enumerate}
\item Check the accuracy of our conclusions for finite sample sizes.
\item Show that the main conclusions hold even if some of the assumptions that we made in our theoretical studies, such as the independence or Gaussianity of the elements of $\Av$, are violated. 

\item Show that if the non-zero elements of $\xv$ are equal, then LASSO might not be the optimal SLOPE estimator in the low noise regime. Hence, the assumption that $\xv$ has no tied non-zero components in Theorem \ref{thm:noiseless-phase-transition} and Proposition \ref{lem:best_slope_small_noise} is necessary in this sense. 

\end{enumerate}

We consider the following simulation setups:
\begin{itemize}
    \item Design: $\Av = \tilde{\Av}\Sigmav^{\frac{1}{2}}$ where the $\tilde{A}_{ij}$'s (up to a scaling) are iid $t$-distributed with degrees of freedom equal 3 to test the validity of our conclusions when the elements of $\Av$ have a heavy-tailed distribution, and iid Gaussian otherwise. The elements $\tilde{A}_{ij}$ are re-scaled by $\sqrt{n}\;\mathrm{std}(\tilde{A}_{ij})$. Furthermore, in our simulation results we will consider two choices of $\Sigma$: $\Sigma_{ij} = \rho^{i-j}$ with (i) $\rho=0.8$, and (ii) $\rho=0$. The first choice will test the validity of our conclusions for the case that the elements of $\Av$ are dependent. 
    
    \item Noise: $\zv \sim \mathcal{N}(0, \sigma_z^2 \Iv_n)$. The values for $\sigma_z$ will be specified in each simulation below.
    
    \item Signal: for a given value of $\epsilon$ and $p$, we randomly set $(1 - \epsilon) p$ components of $\xv$ as 0. For the rest non-zero components, two configurations are considered: (i) iid samples from $\mathrm{Unif}[0, 5]$; (ii) all equal to 5. We use the second case to show that when the non-zero coefficients are equal, then LASSO might not be optimal in the low noise scenario. 
    
    %Note that according to Theorem \ref{thm:noiseless-phase-transition}, in low noise scenarios, we expect LASSO to outperform other SLOPE estimator under setting (i). For setting (ii) where the signal has tied non-zero components, Theorem \ref{thm:noiseless-phase-transition} might not hold. we will explore the SLOPE estimator with specially picked weights $\lambdav$ that may outperform LASSO.
    
    \item $p=500$, $n=\delta p$. $\delta$ and $\epsilon$ will be determined later.
    
    \item Once each problem instance is set, we will run our simulations $m =20$ times, and we will report the average MSE and the standard error bars. 
    
    \item Recall that the comparison results in Section \ref{ssec:noiseless-compare} are valid for optimally-tuned estimators. In our simulations, we use $5$-fold cross-validation to find the optimal tuning parameters.  
\end{itemize}

Figure \ref{fig:sample-mse-compare-corr-tail} shows the MSE of SLOPE, LASSO and Ridge estimators under different types of design matrices. The estimator denoted by SLOPE:BH is the SLOPE estimator that was proposed in \cite{bogdan2015slope} and shown to be minimax optimal in \cite{su2016slope, bellec2018slope}. We first discuss the results for iid Gaussian designs in the first plot. We set the parameters $(\delta, \epsilon) = (0.9, 0.5)$ so that the setting is above phase transition for LASSO, and below phase transition for the two SLOPE estimators.\footnote{From Theorem \ref{thm:noiseless-phase-transition} we know that $\delta >M_{\lambdav}$ means the corresponding setting is above phase transition. For LASSO, the inequality can be simplified as $\delta \geq \inf_{\chi} 2(1 - \epsilon)((1 + \chi^2)\Phi(-\chi) - \chi \phi(\chi)) + \epsilon(1 + \chi^2)$, and analytically verified. For the two SLOPE estimators, since $M_{\lambdav}$ can not be directly evaluated, we conclude it is below phase transition based on the numerical results in the figure.} It is clear that LASSO outperforms the SLOPE estimators when the noise level is low, as  predicted by Theorem \ref{thm:noiseless-phase-transition} and Proposition \ref{lem:best_slope_small_noise}. Moreover, as the noise level increases above $\sigma_z=2$, Ridge starts to have a smaller MSE compared to LASSO and SLOPE. This is consistent with the result from Theorem \ref{thm:large-noise-ridge-better}. These phenomena are also observed in the other three plots where iid Gaussian assumptions are not satisfied on the design matrix. Such empirical results suggest that the main comparison conclusions drawn from Proposition \ref{lem:best_slope_small_noise} and Theorems \ref{thm:noiseless-phase-transition} and \ref{thm:large-noise-ridge-better} are valid for non-Gaussian and correlated designs too. We leave a precise analysis of such designs as an open avenue for a future research. For the performance of other bridge regression estimators, we refer to the extensive simulations in \cite{wang2017bridge}.

\begin{figure}[thb!]
    \begin{center}
        \setlength\tabcolsep{2pt}
        \renewcommand{\arraystretch}{0.3}
        \begin{tabular}{rcccc}
            &
            \footnotesize{iid}
            &
            \footnotesize{correlated}
            &
            \footnotesize{heavy tail}
            &
            \footnotesize{correlated + heavy tail}\\
            \rotatebox{90}{\qquad\qquad\quad\scriptsize{$\mathrm{MSE}$}}
            &
            \includegraphics[scale=0.285]{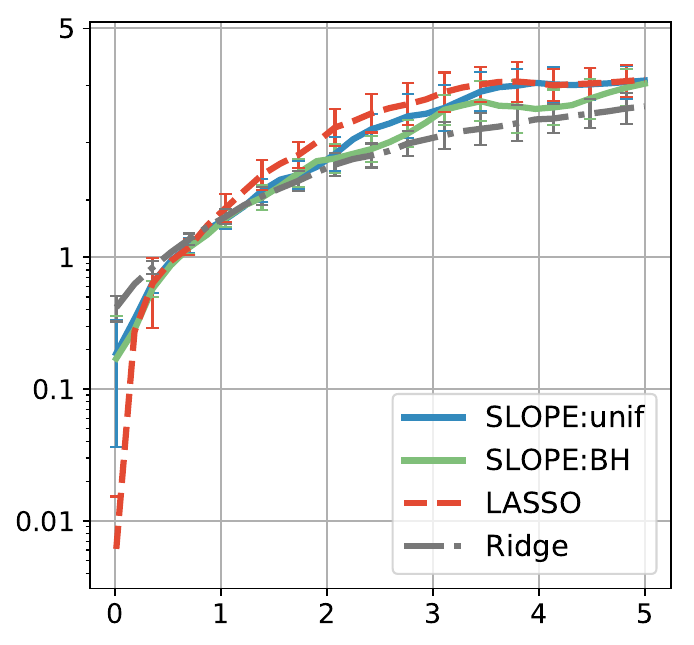}
            &
            \includegraphics[scale=0.285]{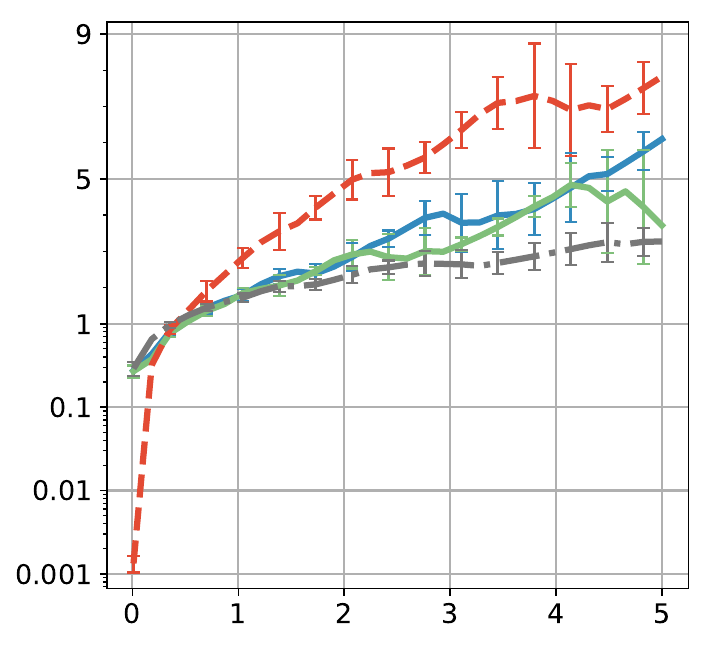}
            &
            \includegraphics[scale=0.285]{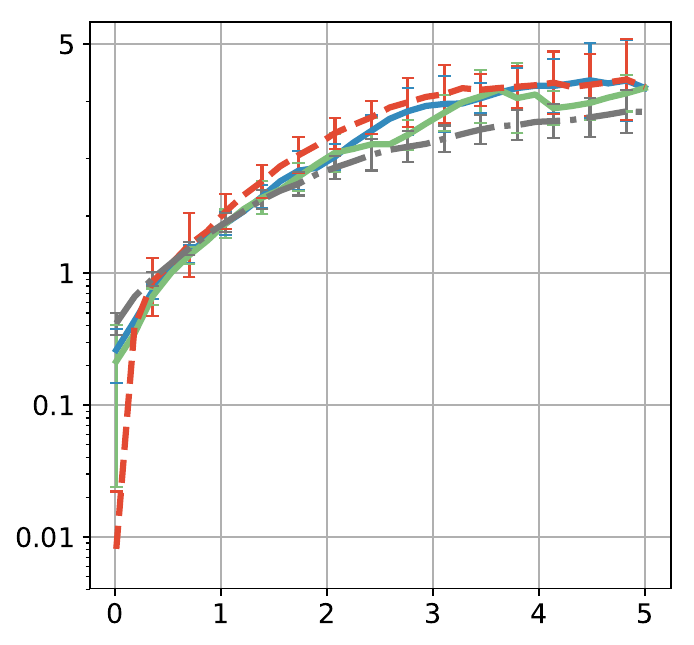}
            &
            \includegraphics[scale=0.285]{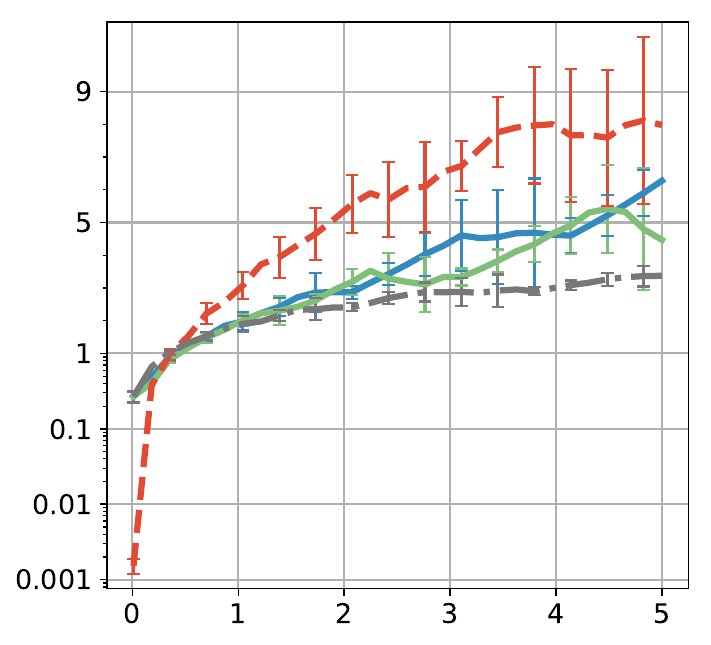} \\
            &
            \footnotesize{$\sigma_z$}
            &
            \footnotesize{$\sigma_z$}
            &
            \footnotesize{$\sigma_z$}
            &
            \footnotesize{$\sigma_z$}
        \end{tabular}
        \caption{MSE of SLOPE, LASSO and Ridge estimators. SLOPE:BH and SLOPE:unif denote the SLOPE estimators with weights $\lambda_i = \Phi^{-1}(1 - \frac{iq}{2p})/\Phi^{-1}(1 - \frac{q}{2p})$ with $q=0.5$ and $\lambda_i = 1 - 0.99(i-1)/p$, respectively. Other model parameters are $\delta=0.9$, $\epsilon=0.5$; The nonzero components of $\xv$ are iid samples from $\mathrm{Uniform}[0, 5]$; $\sigma_z \in [0, 5]$.
        } \label{fig:sample-mse-compare-corr-tail}
    \end{center}
\end{figure}

In Figure \ref{fig:sample-mse-compare-above-pt1}, we further compare the MSE of LASSO with that of SLOPE in two cases when the system is above phase transition for both SLOPE and LASSO. As is clear from the first column, for iid Gaussian designs, LASSO has a smaller MSE when $\sigma_z$ is small, which is accurately characterized in Theorem \ref{thm:noiseless-phase-transition} and Proposition \ref{lem:best_slope_small_noise}. Again, similar result seems to hold under more general settings, including correlated design, heavy tail design and a combination of the two, as shown in the rest of the graphs.

\begin{figure}[!t]
    \begin{center}
        \setlength\tabcolsep{2pt}
        \renewcommand{\arraystretch}{0.3}
        \begin{tabular}{rcccc}
            &
            \footnotesize{iid}
            &
            \footnotesize{correlated}
            &
            \footnotesize{heavy tail}
            &
            \footnotesize{correlated + heavy tail} \\
            \rotatebox{90}{\qquad\scriptsize{$(\delta, \epsilon) = (0.9, 0.2)$}}
            &
            \includegraphics[scale=0.285]{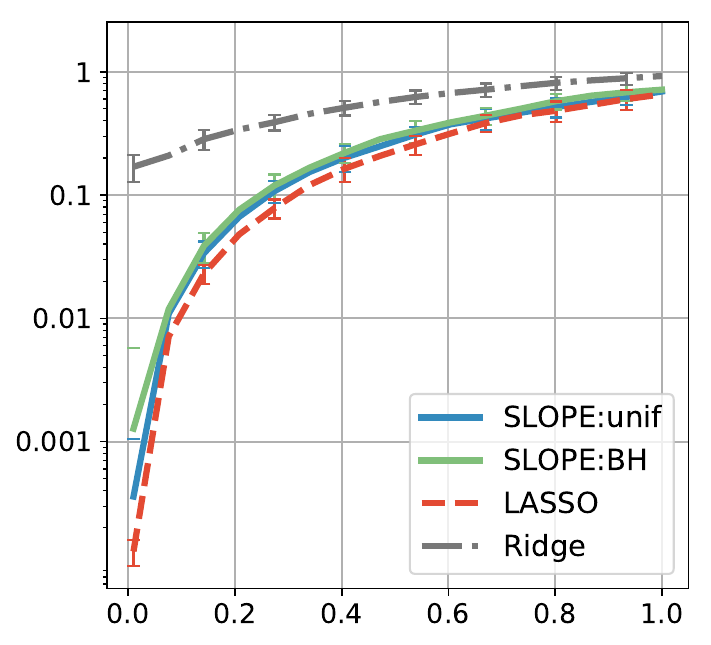}
            &
            \includegraphics[scale=0.285]{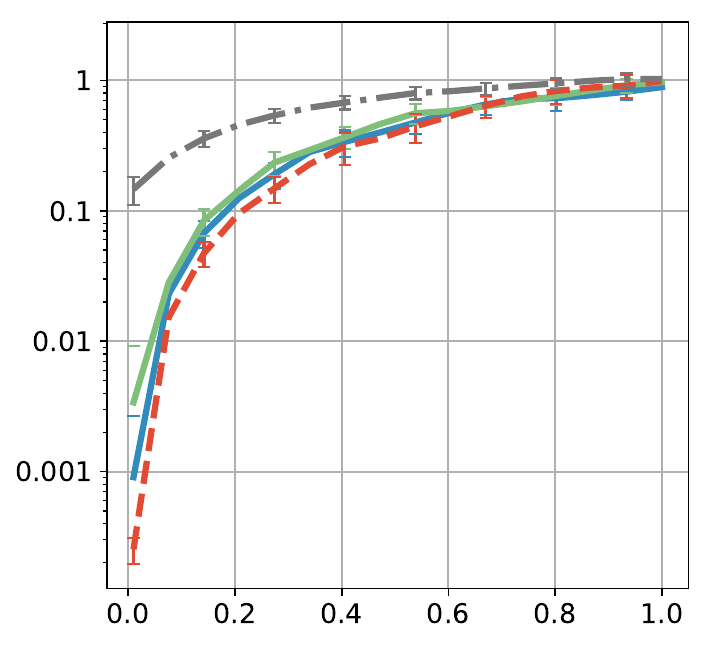}
            &
            \includegraphics[scale=0.285]{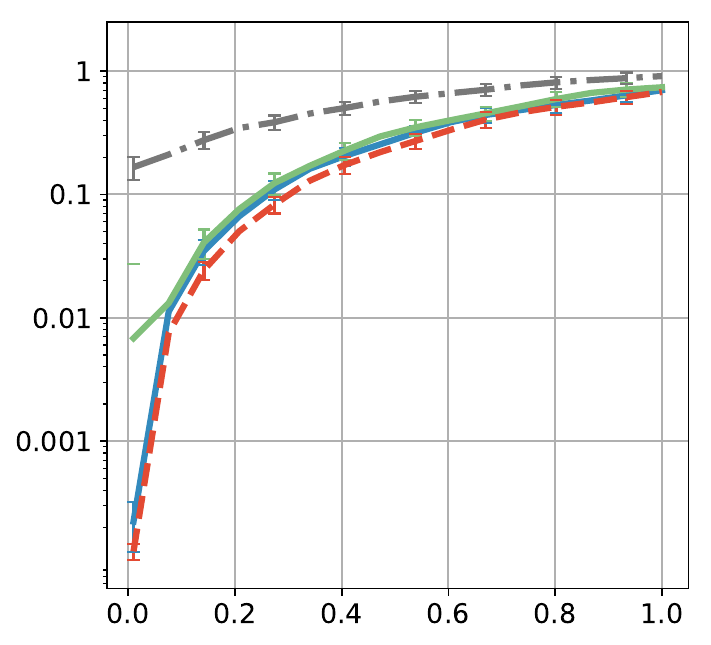}
            &
            \includegraphics[scale=0.285]{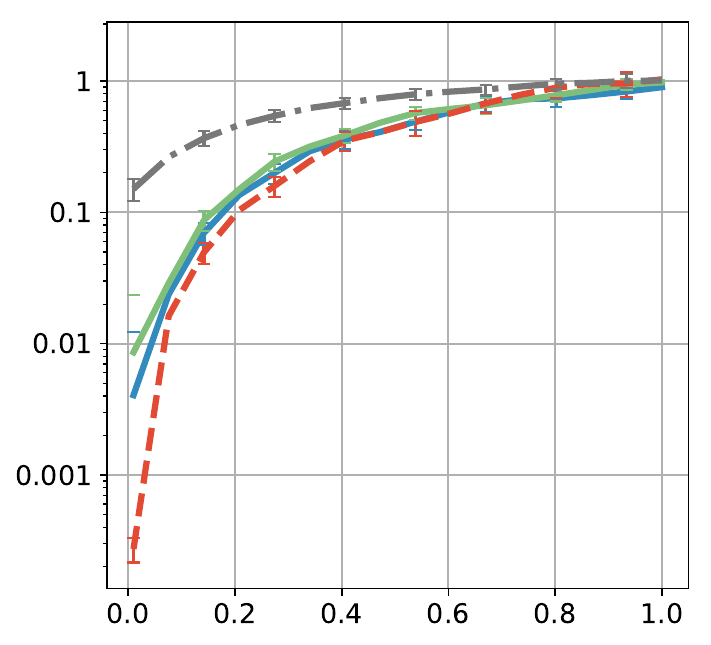} \\
            \rotatebox{90}{\qquad\scriptsize{$(\delta, \epsilon) = (1.1, 0.5)$}}
            &
            \includegraphics[scale=0.285]{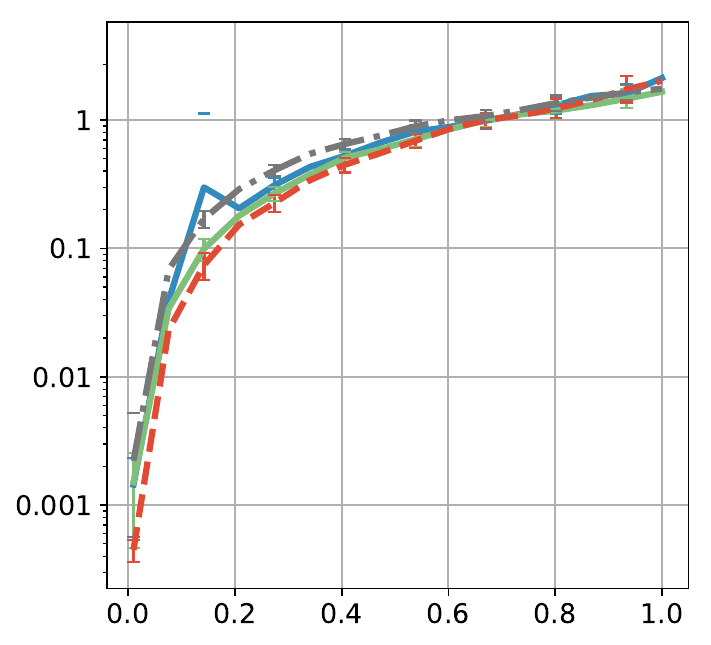}
            &
            \includegraphics[scale=0.285]{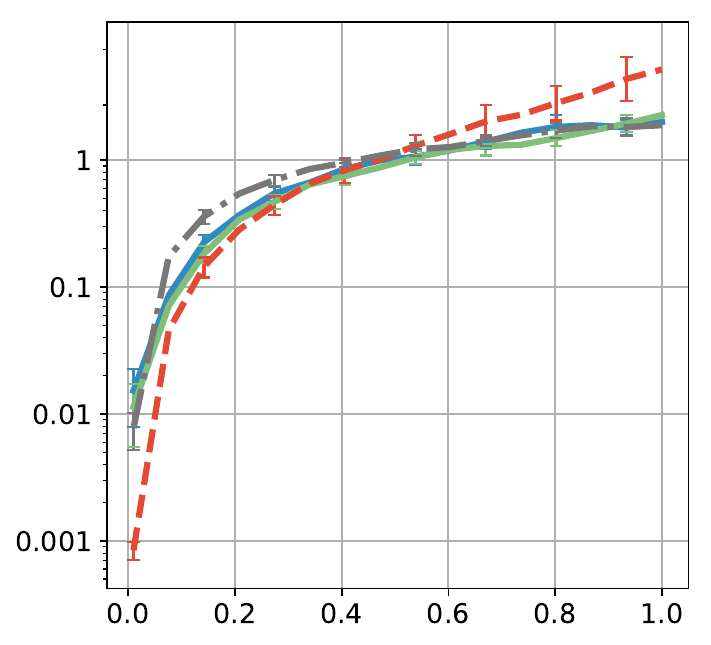}
            &
            \includegraphics[scale=0.285]{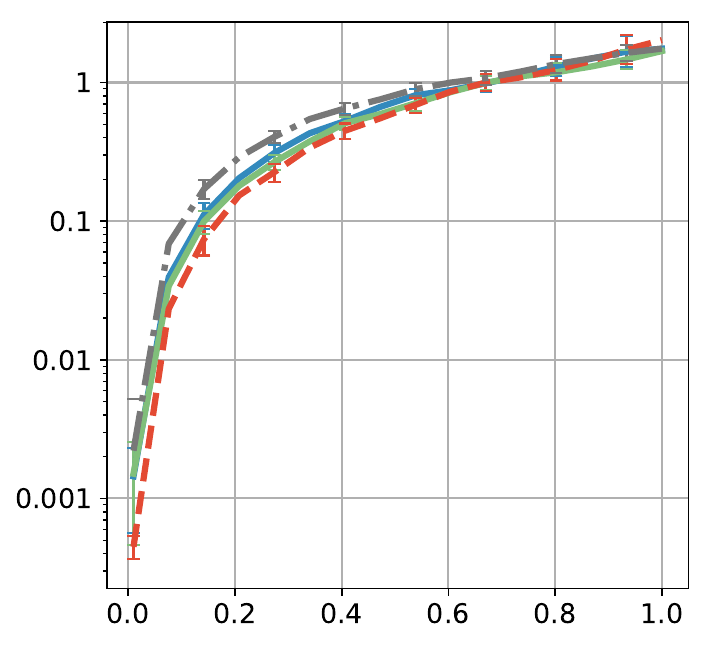}
            &
            \includegraphics[scale=0.285]{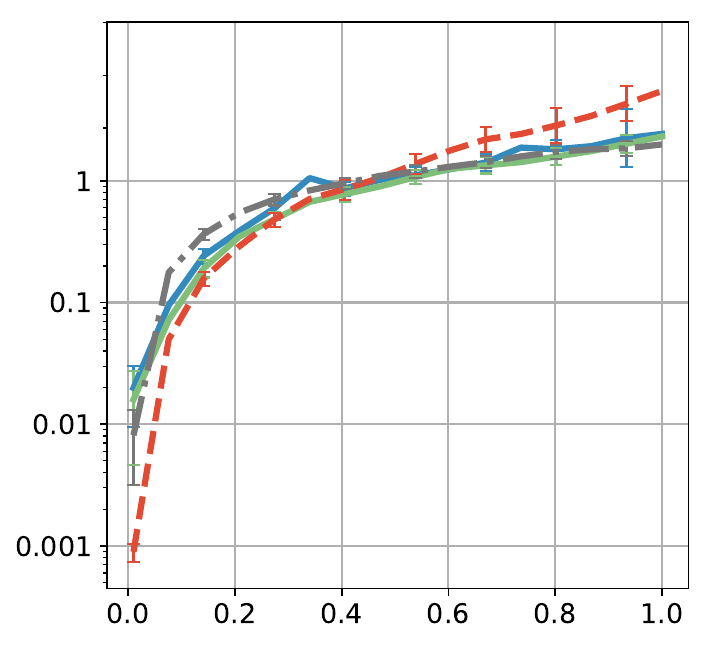} \\
            &
            \footnotesize{$\sigma_z$}
            &
            \footnotesize{$\sigma_z$}
            &
            \footnotesize{$\sigma_z$}
            &
            \footnotesize{$\sigma_z$}
        \end{tabular}
        \caption{MSE of SLOPE, LASSO and Ridge estimators, when the system is above phase transition for both SLOPE and LASSO. A case of $\delta < 1~(\epsilon=0.2)$ is presented in the upper panel, while one for $\delta > 1~(\epsilon=0.5)$ is in the lower panel. The other parameters are the same as in Figure \ref{fig:sample-mse-compare-corr-tail}.} \label{fig:sample-mse-compare-above-pt1}
    \end{center}
\end{figure}

Finally, we examine the condition that the signal $\xv$ does not have tied non-zero components, as required in Theorem \ref{thm:noiseless-phase-transition} and Proposition \ref{lem:best_slope_small_noise}. We empirically demonstrate in Figure \ref{fig:sample-mse-compare-tie-signal} that the condition is necessary for Theorem \ref{thm:noiseless-phase-transition} and Proposition \ref{lem:best_slope_small_noise} to hold. As is clear from the figure, for the signal $\xv$ of which the non-zero components are all equal to 5, LASSO is significantly outperformed by the SLOPE estimator (SLOPE:max2) with $\lambda_1 = \lambda_2 = 1 > 0 = \lambda_3 = \ldots = \lambda_p$ in the low noise scenario. This is because the sorted $\ell_1$ penalty in SLOPE (with appropriately chosen weights) promotes estimators that have tied non-zero elements, while $\ell_1$ penalty can only promote sparsity. Therefore, SLOPE better exploits the existing structures in the signals. Note that the choice of the penalty weights is critical for SLOPE to take full advantage of the signal structures. For example, the other SLOPE estimator (SLOPE:unif), with the (unordered) weights being uniformly sampled, does not behave as well as SLOPE:max2.

\begin{figure}[t!]
    \begin{center}
        \setlength\tabcolsep{2pt}
        \renewcommand{\arraystretch}{0.3}
        \begin{tabular}{rcc}
            &
            \footnotesize{iid}
            &
            \footnotesize{correlated} \\
            \rotatebox{90}{\qquad\qquad\quad\scriptsize{$\mathrm{MSE}$}}
            &
            \includegraphics[scale=0.33]{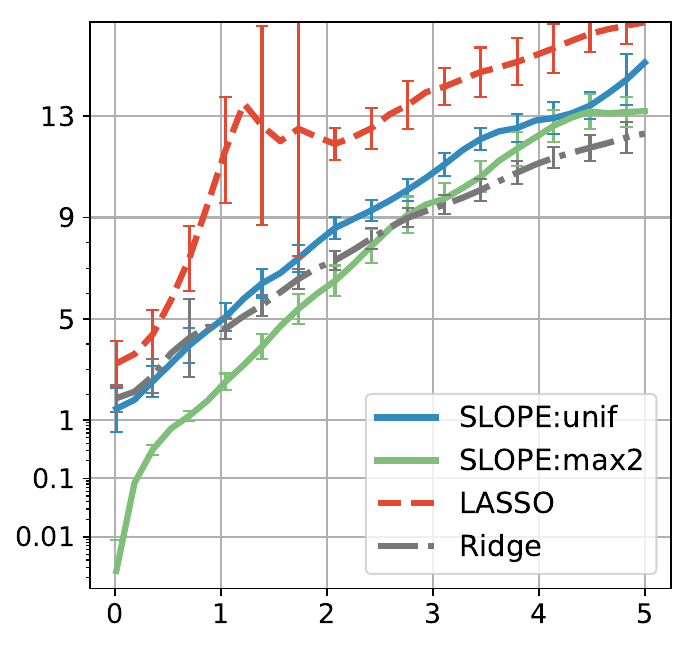}
            &
            \includegraphics[scale=0.33]{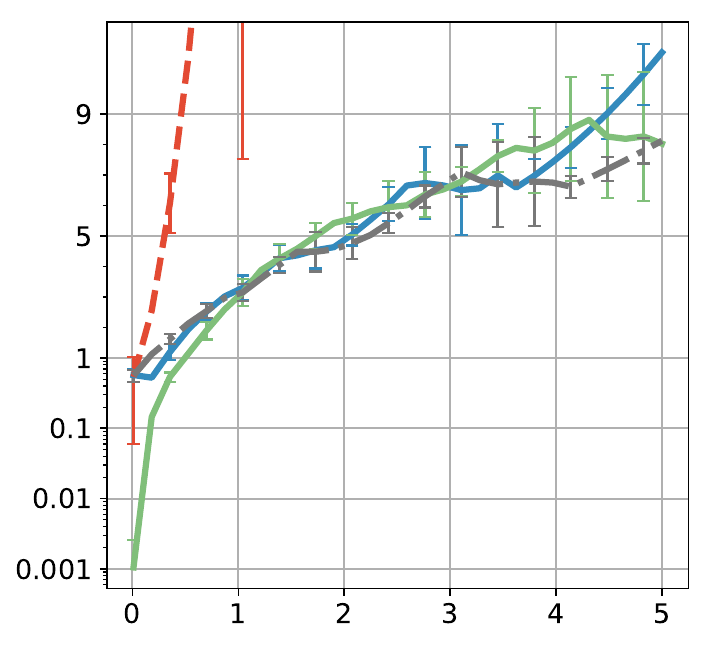} \nonumber \\
            &
            \footnotesize{$\sigma_z$}
            &
            \footnotesize{$\sigma_z$}
        \end{tabular}
        \caption{MSE of SLOPE, LASSO and Ridge estimators, when there are tied non-zero elements in the signal. SLOPE:max2 denotes the SLOPE estimator with weights $\lambda_1 = \lambda_2=1$ and $\lambda_i=0$ for $i \geq 3$. SLOPE:unif is the same as in Figure \ref{fig:sample-mse-compare-corr-tail}. We set $\delta=0.9$, $\epsilon=0.7$. The non-zero components of $\xv$ all equal to 5.} \label{fig:sample-mse-compare-tie-signal}
    \end{center}
\end{figure}

\section{Discussions} \label{sec:discussions}
We have studied the MSE of SLOPE estimators in the high-dimensional regime where both $k$ and $n$ scale linearly with $p$. With an accurate characterization of MSE, we demonstrated that LASSO and Ridge outperform all the SLOPE estimators in the low and large noise scenarios, respectively. Several important directions are left open.
\begin{enumerate}[(1)]
%\item The concentration inequality we obtained might not be sharp in terms of some model or SLOPE parameters such as noise level $\sigma_z$ and regularization parameter $\gamma$. Due to the non-separability of the SLOPE penalty, deriving a concentration inequality that is optimal in all the parameters is a very challenging problem.
\item Our results are proved under the critical condition that $\Av$ is i.i.d. Gaussian design. In Section \ref{sec:experiments}, numerical results showed that the main conclusions remain valid for dependent and non-Gaussian designs. An important and interesting future research is to derive the precise results for more general designs. 
\item In this paper, our focus is on the impact of noise level. Some other model parameters such as sparsity level play an important role in affecting the performance of SLOPE as well. It is of great interest to understand how SLOPE estimators perform and which one is optimal under different types of scenarios that are described by these parameters. Towards this goal, the general concentration result we have derived in Theorem \ref{master:thm} remains valid and the key is to conduct a different form of sensitivity analysis. A recent work \cite{wang2017bridge} has analyzed the impact of different model parameters (including noise level $\sigma_z$, signal sparsity $\epsilon$, and sampling rate $\delta$) on the variable selection performance of bridge regression via approximate message passing (AMP). The CGMT framework is well tailored for characterizing the mean squared error. To study SLOPE under more complicated error metrics like false discovery rate via CGMT is an interesting and probably challenging future research. 

\end{enumerate}

\section{Proof} \label{sec:proof}
In this section, we present the proofs of Proposition \ref{lem:best_slope_small_noise}, and Theorems \ref{thm:concentration0}, \ref{thm:noiseless-phase-transition}, and \ref{thm:large-noise-ridge-better}. The proof of Theorem \ref{thm:concentration0} is presented in Section \ref{ssec:proof-concen}. Proofs of Proposition \ref{lem:best_slope_small_noise} and Theorems \ref{thm:noiseless-phase-transition} and \ref{thm:large-noise-ridge-better} are then given in Section \ref{sec:proof:thm-large-noise}. Some basic properties of the SLOPE proximal operator $\eta$ that are frequently used in the main proofs are provided in Section \ref{ssec:preliminary}. Lastly, Section \ref{ssec:reference} collects some reference materials used in the proofs.

Before proceeding, we introduce some notations that will be extensively used in the proofs. Recall 
\begin{equation}\label{primal:prox}
    \eta(\uv;\gamma, \lambdav)
    =
    \argmin_{\xv}\frac{1}{2}\|\uv-\xv\|_2^2 + \gamma \|\xv\|_{\lambdav}.
\end{equation}
When the value of $\lambdav$ is clear from the context, we suppress $\lambdav$ and simply use $\eta(\uv; \gamma)$ to denote the proximal operator. We also denote by $\mathcal{D}_\gamma$ the dual SLOPE norm ball with radius $\gamma$:
\begin{align} \label{eq:dual-ball}
    \mathcal{D}_{\gamma}
    :=& \{\vv: \|\vv\|_{\lambdav*} \leq \gamma\} \nonumber \\
    =& \bigg\{\vv: \sum_{i=1}^j|\vv|_{(i)} \leq \gamma \sum_{i=1}^j \lambda_i, ~1\leq j \leq p\bigg\}
\end{align}
with $\|\cdot\|_{\lambdav*}$ being the dual norm of $\|\cdot\|_{\lambdav}$. The characterization of $\mathcal{D}_\gamma$ in \eqref{eq:dual-ball} is proved in Lemma \ref{p1}. Furthermore, in Lemma \ref{property:primal} we will show that the sorted components of $\{|\eta_i|\}_{i=1}^p$ are piecewise constant. Hence, for each $i=1,\ldots, p$, we define 
\begin{equation} \label{eq:tie-set}
    \mathcal{I}_i = \{1\leq j\leq p: |\eta_j(\uv; \gamma, \lambdav)|
    = |\eta_i(\uv; \gamma, \lambdav)|\}.
\end{equation}
This induces a partition $\mathcal{P}$ of $[p]$, defined as
\begin{equation*}
    \mathcal{P} = \{\mathcal{I}_i, 1 \leq i \leq p\}.
\end{equation*}
We note that $\mathcal{P}$ only keeps the unique values of $\{\mathcal{I}_i\}$. Further we define $\mathcal{P}_0$ as a subset of $\mathcal{P}$:
\begin{equation*}
    \mathcal{P}_0 = \{\mathcal{I} \in \mathcal{P}: \eta_i \neq 0 \text{ for } i \in \mathcal{I}\}.
\end{equation*}
It is important to note that $\{\mathcal{I}_i\}_i$, $\mathcal{P}$ and $\mathcal{P}_0$ all depend on $\uv$, $\gamma$ and $\lambdav$. Since this dependency is often clear from the context, we typically suppress this dependency in the notations.

Finally, given a closed set $\mathcal{C} \subset \mathbb{R}^p$ and a point $\uv \in \mathbb{R}^p$, we use $\projv_{\mathcal{C}}(\uv)$ to denote the projection of $\uv$ on $\mathcal{C}$, and $\mathbb{I}_\mathcal{C}(\uv)$ to denote the function with value 0 when $\uv \in \mathcal{C}$ and $\infty$ otherwise. We also reserve the notation $\hv \sim \mathcal{N}(0, \Iv_p)$ and $\gv \sim \mathcal{N}(0, \Iv_n)$.

\subsection{Proof of Theorem \ref{thm:concentration0}} \label{ssec:proof-concen}

 Since the proof is rather involved, we first summarize the main proof ideas in Section \ref{sssec:proof-concen-sketch}. We then expand our arguments in the rest of this section.

\subsubsection{Sketch of the proof} \label{sssec:proof-concen-sketch}
  
Recall $\hat{\xv}=\argmin_{\xv}\frac{1}{2}\|\yv - \Av \xv\|_2^2+\gamma \|\xv\|_{\lambdav}$. Denote $\hat{\wv}=\frac{\hat{\xv}-\xv}{\sqrt{p}}, m_n=\frac{1}{p}\mathbb{E}\|\eta(\xv+\sigma^* \hv;\sigma^* \chi^*)-\xv\|_2^2$, where $(\sigma^*, \chi^*)$ is specified in Theorem \ref{thm:concentration0}. We aim to show $\|\hat{\wv}\|_2$ concentrates around $\sqrt{m_n}$. First, it is straightforward to confirm that
\begin{equation*}
\hat{\wv}=\argmin_{\wv} \frac{1}{2}\|\sqrt{p}\Av \wv-\zv\|_2^2+\gamma \|\sqrt{p}\wv+\xv\|_{\lambdav} := \argmin_{\wv} F_n(\wv).
\end{equation*}

For given $t\geq 0$, define the sets 
\begin{equation*}
    S_w=\{\wv: \|\wv\|_2\leq 2\sqrt{m_n}+t\},
    \qquad
    H_{t} = \{\wv:  |\|\wv\|_2 - \sqrt{m_n}|\geq t\}.
\end{equation*}

If we are able to prove that
\begin{equation} \label{eq:minimize-cond}
    \min_{\wv \in S_w}F_n(\wv)< \min_{\wv \in S_w \cap H_{t}} F_n(\wv),
\end{equation}
then $|\|\bm \hat{\wv}\|_2 - \sqrt{m_n}|\leq t$. To see why this is true, it is clear that \eqref{eq:minimize-cond} implies $\hat{\wv} \in H^c_{t} \cup S_w^c$. Suppose $\hat{\wv}\in S_w^c$, and denote $\wv^*=\argmin_{\wv\in S_w}F_n(\wv)$. Since $\wv^* \in H_{t}^c \subsetneq S_w$ and $\hat{\wv}\in S_w^c$, there exists a constant $\lambda\in (0,1)$ such that $\lambda \wv^*+(1-\lambda)\bm \hat{\wv} \in S_w\cap H_{t}$. By the convexity of $F_n(\wv)$, it holds that
\begin{equation*}
    \min_{\wv\in S_w\cap H_{t}} F_n(\wv)
    \leq
    F_n(\lambda \wv^*+(1-\lambda) \hat{\wv})
    \leq
    \lambda F_n(\wv^*)+(1-\lambda)F_n(\hat{\wv})
    \leq
    \min_{\wv\in S_w}F_n(\wv).
\end{equation*}

This is a contradiction. Hence, $\hat{\wv} \in H_{t}^c$. 

Based on the preceding arguments, it is sufficient to obtain $\min_{\wv \in S_w}F_n(\wv)< \min_{\wv \in S_w \cap H_{t}} F_n(\wv)$ w.h.p. Towards this goal, in Section \ref{sssec:proof-concen-upperlower}, we will associate the primal optimization problem with an auxiliary optimization problem $\min_{\alpha}\max_{\beta, T_h}\hat{\Lambda}(\alpha, \beta, T_h)$ and use it to establish a tight ``upper bound'' for $\min_{\wv \in S_w} F_n(\wv)$ and a ``lower bound'' for $\min_{\wv \in S_w \cap H_{t}} F_n(\wv)$ in the following way:
\begin{align}
    \frac{1}{p}\min_{\wv \in S_w}F_n(\wv) \leq_p& \min_{0\leq \alpha \leq 2\sqrt{m_n}+t} \max_{\beta\geq 0,T_h>0} \hat{\Lambda}(\alpha,\beta,T_h), \label{minimax:first} \\
    \frac{1}{p}\min_{\wv \in S_w \cap H_{t}}F_n(\wv) \geq_p& \min_{\substack{0\leq \alpha \leq 2\sqrt{m_n}+t \\ |\alpha-\sqrt{m_n}|\geq t}} \max_{\beta\geq 0,T_h>0} \hat{\Lambda}(\alpha,\beta,T_h) \label{minimax:two}
\end{align}
The above derivation is based on the convex Gaussian minimax theorem (CGMT) approach (Theorem \ref{thm:cgmt}) which was developed in its full generality in \cite{thrampoulidis2015regularized, thrampoulidis2018precise}. An accurate explanation of $\leq_p$ and $\geq_p$ is presented in Lemma \ref{lemma:cgmt-upper-lower}. For now one may treat them as normal $\leq$ and $\geq$. As a result, as long as we can further compare the upper and lower bounds from \eqref{minimax:first} and \eqref{minimax:two} in the form like
\begin{equation} \label{eq:cgmt-sample-smaller}
 \min_{0\leq \alpha \leq 2\sqrt{m_n}+t} \max_{\beta\geq 0,T_h>0} \hat{\Lambda}(\alpha,\beta,T_h) < \min_{\substack{0\leq \alpha \leq 2\sqrt{m_n}+t \\ |\alpha-\sqrt{m_n}|\geq t}} \max_{\beta\geq 0,T_h>0} \hat{\Lambda}(\alpha,\beta,T_h) , \quad {\rm w.h.p}
\end{equation}
our goal is achieved. To obtain this result, in Sections \ref{sssec:expect-solution-analysis} and \ref{sssec:proof-concen-uniform}, we establish a uniform concentration of $\hat{\Lambda}(\alpha, \beta, T_h)$ around its population version denoted as $\Lambda(\alpha, \beta, T_h)$, and show that $\sqrt{m_n}$ belongs to the saddle point of the minimax problem $\min_{\alpha}\max_{\beta,T_h}\Lambda(\alpha, \beta, T_h)$ so that
\begin{equation*}
     \min_{0\leq \alpha \leq 2\sqrt{m_n}+t} \max_{\beta\geq 0,T_h>0} \Lambda(\alpha,\beta,T_h) < \min_{\substack{0\leq \alpha \leq 2\sqrt{m_n}+t \\ |\alpha-\sqrt{m_n}|\geq t}} \max_{\beta\geq 0,T_h>0} \Lambda(\alpha,\beta,T_h),
\end{equation*}
which leads to \eqref{eq:cgmt-sample-smaller} through the uniform concentration by choosing appropriate values of $t$. We will make this argument formal and precise in Section \ref{master:for:all:case}.

\subsubsection{The upper and lower bounds involving $\hat{\Lambda}$} \label{sssec:proof-concen-upperlower}
Recall the notations $\yv= \Av \xv +\zv$ and $\hv \sim \mathcal{N}(0, \Iv_p), \gv \sim \mathcal{N}(0, \Iv_n)$. Define the function $\hat{\Lambda}$ in the following way: 
\begin{align} \label{eq:Lambda}
    &\hat{\Lambda}(\alpha,\beta,T_h)  \\
    =&
    \begin{cases}
    \frac{\sqrt{n}}{p} \|\zv\|_2\beta-\frac{n\beta^2}{2p}+\frac{\gamma}{p} \|\xv\|_{\lambdav}, & \text{if }\alpha =0,\beta\geq 0, T_h>0 \nonumber \\
    -\frac{n}{2p} \beta^2+\frac{\|\sqrt{p}\alpha \gv - \sqrt{n}\zv\|_2}{p}\beta - \frac{\alpha T_h}{2} + \frac{\hv^\top \xv}{p}\beta +\frac{T_h}{2\alpha p} (\|\xv\|_2^2-\|\eta(\xv + \frac{\alpha \beta}{T_h}\hv; \frac{\alpha \gamma}{T_h})\|_2^2), & \text{if } \alpha>0,\beta\geq 0, T_h>0.  
    \end{cases}
\end{align}
The role of this quantity in our analysis was described in the last section. The following lemma relates $F_n(\wv)$ with $\hat{\Lambda}(\alpha,\beta,T_h)$.
\begin{lemma} \label{lemma:cgmt-upper-lower}
    For any given constant $c\in \mathbb{R}$, the following inequalities hold 
    \begin{align*}
\mathbb{P}\Big( \frac{1}{p}\min_{\wv \in S_w}F_n(\wv)\geq c \Big)\leq & 2\mathbb{P}\Big(\min_{0\leq \alpha \leq 2\sqrt{m_n}+t} \max_{\beta \geq 0, T_h>0} \hat{\Lambda}(\alpha,\beta,T_h)\geq c\Big), \\
\mathbb{P}\Big( \frac{1}{p}\min_{\wv \in S_w \cap H_{t}}F_n(\wv) \leq c\Big)\leq & 2\mathbb{P}\Big(\min_{\substack{0\leq \alpha \leq 2\sqrt{m_n}+t \\ |\alpha-\sqrt{m_n}|\geq t}} \max_{\beta \geq 0, T_h>0} \hat{\Lambda}(\alpha,\beta,T_h)\leq c\Big).
    \end{align*}
\end{lemma}

\begin{proof}
    We prove these two bounds separately. 
    \paragraph{The upper bound:}
    Denote $S_r=\{\uv: \|\uv\|_2\leq r\}$. Using the identity $\frac{1}{2}\|\bv\|_2^2=\max_{\uv} \sqrt{n}\uv^\top \bv -\frac{n}{2}\|\uv\|_2^2$ with $\uv \in \mathbb{R}^n$, we obtain
    \begin{align}
        \min_{\wv \in S_w}F_n(\wv)=&\lim_{r\rightarrow \infty} \min_{\wv \in S_w}\max_{\uv\in S_r} \sqrt{n} \uv^\top(\sqrt{p}\Av \wv-\zv)-\frac{n}{2}\|\uv\|_2^2+\gamma \|\sqrt{p}\wv+\xv\|_{\lambdav}, \nonumber \\%\label{take:limit:end} 
    =&\lim_{r\rightarrow \infty} \min_{\wv \in S_w}\max_{\uv \in S_r} \sqrt{p}\uv^\top \bm \tilde{A} \wv-\sqrt{n}\uv^\top\zv-\frac{n}{2}\|\uv\|_2^2+\gamma \|\sqrt{p}\wv+\xv\|_{\lambdav}, \nonumber \\
    =&\lim_{r\rightarrow \infty} \min_{\wv \in S_w}\max_{\uv \in S_r} \max_{\sv \in \mathcal{D}_{\gamma}}\underbrace{\sqrt{p}\uv^\top \bm \tilde{A} \wv-\sqrt{n}\uv^\top\zv-\frac{n}{2}\|\uv\|_2^2+\sv^\top(\sqrt{p}\wv+\xv)}_{:=f(\wv,\uv, \sv)}, \label{primal:format}
\end{align}
where $\tilde{\Av}=\sqrt{n}\Av$ has independent standard normal entries, and $\mathcal{D}_\gamma$ is defined in \eqref{eq:dual-ball}. Note that the third equality is due to the fact that $\mathcal{D}_\gamma$ is the dual norm (w.r.t. $\|\cdot\|_{\lambdav}$) ball with radius $\gamma$. According to the CGMT (Part (\ref{thm:item:cgmt-lower}) in Theorem \ref{thm:cgmt}), the expression in \eqref{primal:format} is closely related to 
\begin{equation*}
    \max_{\uv \in S_r} \max_{\sv \in \mathcal{D}_{\gamma}}\min_{\wv \in S_w} \underbrace{\sqrt{p}\|\wv\|_2 \gv^\top\uv+ \sqrt{p}\|\uv\|_2 \hv^\top \wv-\sqrt{n} \uv^\top \zv-\frac{n}{2}\|\uv\|^2_2+\sv^\top(\sqrt{p}\wv+\xv)}_{:=\tilde{f}(\wv,\uv,\sv)}.
\end{equation*}
Specifically, 
\begin{align}
\label{use:limit:end}
\mathbb{P}\Big(\min_{\wv \in S_w}\max_{\uv \in S_r} \max_{\sv \in \mathcal{D}_{\gamma}} f(\wv,\uv,\sv)\geq c\Big)\leq 2 \mathbb{P}\Big(\max_{\uv \in S_r} \max_{\sv \in \mathcal{D}_{\gamma}}\min_{\wv \in S_w} \tilde{f}(\wv,\uv,\sv) \geq c\Big)
\end{align}

We now further upper bound $\max_{\uv \in S_r} \max_{\sv \in \mathcal{D}_{\gamma}}\min_{\wv \in S_w} \tilde{f}(\wv,\uv,\sv)$ to obtain simpler expressions.
\begin{align*}
    &\max_{\uv \in S_r} \max_{\sv \in \mathcal{D}_{\gamma}}\min_{\wv \in S_w} \tilde{f}(\wv,\uv,\sv)=\max_{\uv \in S_r} \max_{\sv \in \mathcal{D}_{\gamma}}\min_{0\leq \alpha\leq 2\sqrt{m_n}+t} \min_{\|\wv\|_2=\alpha}\tilde{f}(\wv,\uv,\sv) \\
    =&\max_{\uv \in S_r} \max_{\sv \in \mathcal{D}_{\gamma}}\min_{0\leq \alpha\leq 2\sqrt{m_n}+t} \sqrt{p}\alpha \gv^\top \uv- \sqrt{p} \big\| \|\uv\|_2\hv+\sv \big\|_2 \alpha -\sqrt{n}\uv^\top \zv-\frac{n}{2}\|\uv\|_2^2+\sv^\top \xv \\
    \leq& \max_{\sv \in \mathcal{D}_{\gamma}} \min_{0\leq \alpha \leq 2\sqrt{m_n}+t} \max_{0\leq \beta \leq r} \max_{\|\uv\|_2=\beta} \sqrt{p}\alpha \gv^\top \uv- \sqrt{p} \| \|\uv\|_2\hv+\sv\|_2 \alpha -\sqrt{n}\uv^\top \zv-\frac{n}{2}\|\uv\|_2^2+\sv^\top \xv \\
    =& \max_{\sv \in \mathcal{D}_{\gamma}} \min_{0\leq \alpha \leq 2\sqrt{m_n}+t} \max_{0\leq \beta \leq r} -\sqrt{p}\|\beta \hv+\sv\|_2\alpha -\frac{n}{2} \beta^2+\|\sqrt{p}\alpha \gv-\sqrt{n}\zv\|_2\beta+\sv^\top \xv \\
    \leq& \min_{0\leq \alpha \leq 2\sqrt{m_n}+t} \max_{0\leq \beta \leq r} \max_{\sv \in \mathcal{D}_{\gamma}}  -\sqrt{p}\|\beta \hv+\sv\|_2\alpha -\frac{n}{2} \beta^2+\|\sqrt{p}\alpha \gv-\sqrt{n}\zv\|_2\beta+\sv^\top \xv \\
    =&\min_{0\leq \alpha \leq 2\sqrt{m_n}+t} \max_{0\leq \beta \leq r} \max_{\sv\in \mathcal{D}_{\gamma}} \max_{T_h>0}    -\frac{\alpha \sqrt{p}}{2}\Big(\frac{\|\beta \hv+\sv\|_2^2}{\sqrt{p}T_h}+\sqrt{p}T_h \Big)-\frac{n}{2} \beta^2+\|\sqrt{p}\alpha \gv-\sqrt{n}\zv\|_2\beta+\sv^\top \xv \\
    =&\min_{0\leq \alpha \leq 2\sqrt{m_n}+t} \max_{0\leq \beta \leq r} \max_{T_h>0}  \underbrace{\max_{\sv \in \mathcal{D}_{\gamma}}  -\frac{\alpha \sqrt{p}}{2}\Big(\frac{\|\beta \hv+\sv\|_2^2}{\sqrt{p}T_h}+\sqrt{p}T_h \Big)-\frac{n}{2} \beta^2+\|\sqrt{p}\alpha \gv-\sqrt{n}\zv\|_2\beta+\sv^\top \xv}_{:=\hat{f}(\alpha,\beta, T_h)},
\end{align*}
where the two inequalities above follow from the weak duality. The next step is to simplify $\hat{f}(\alpha,\beta, T_h)$. It is clear that $\hat{f}(0,\beta, T_h)=\sqrt{n}\|\zv\|_2\beta-\frac{n}{2}\beta^2+\gamma \|\xv\|_{\lambdav}$. When $\alpha> 0, \beta\geq 0, T_h>0$, we have
\begin{align*}
\hat{f}(\alpha,\beta, T_h)
=&-\frac{n}{2} \beta^2+\|\sqrt{p}\alpha \gv-\sqrt{n}\zv\|_2\beta-\frac{\alpha pT_h}{2} + \frac{T_h \|\xv\|_2^2-2\hv^\top \xv \alpha \beta }{2\alpha }+
\max_{\sv\in \mathcal{D}_{\gamma}} \frac{- \alpha}{2 T_h}\bigg\|\sv + \beta \hv-\frac{T_h \xv}{\alpha } \bigg\|_2^2 \\
=&-\frac{n}{2} \beta^2+\|\sqrt{p}\alpha \gv-\sqrt{n}\zv\|_2\beta-\frac{\alpha pT_h}{2}+\frac{T_h \|\xv\|_2^2-2\hv^\top \xv \alpha \beta }{2\alpha} -\frac{\alpha }{2T_h} \bigg\|\eta \bigg(\frac{T_h \xv}{\alpha }-\beta \hv; \gamma\bigg)\bigg\|_2^2 \\
=&-\frac{n}{2} \beta^2+\|\sqrt{p}\alpha \gv-\sqrt{n}\zv\|_2\beta-\frac{\alpha pT_h}{2} - \hv^\top \xv\beta + \frac{T_h}{2\alpha }\bigg(\|\xv\|_2^2 - \bigg\|\eta\bigg(\xv-\frac{\alpha \beta}{T_h}\hv;\frac{\alpha \gamma}{T_h} \bigg) \bigg\|_2^2\bigg).
\end{align*}
The last two equalities are due to Lemma \ref{p1} and Lemma \ref{property:primal} (\ref{lemma:item:prox-scalar}), respectively. These results combined with \eqref{use:limit:end} yield that 
\begin{align*}
\mathbb{P}\Big(\frac{1}{p}\min_{\wv \in S_w}\max_{\uv \in S_r} \max_{\sv \in \mathcal{D}_{\gamma}} f(\wv,\uv,\sv)\geq c\Big)\leq 2 \mathbb{P}\Big(\min_{0\leq \alpha \leq 2\sqrt{m_n}+t} \max_{0\leq \beta \leq r}\max_{T_h>0} \hat{\Lambda}(\alpha,\beta,T_h)\geq c\Big), ~~ \forall r>0.
\end{align*} 
According to dominated convergence theorem, letting $r\rightarrow \infty$ on both sides of the above inequality proves the upper bound. 

\paragraph{The lower bound:}
Similar to \eqref{primal:format} we have 
\begin{equation*}
\min_{\wv \in S_w \cap H_t}F_n(\wv)=\lim_{r\rightarrow \infty} \min_{\wv\in S_w \cap H_{t}}\max_{\uv \in S_r}\max_{\sv \in \mathcal{D}_{\gamma}}f(\wv,\uv, \sv)
\end{equation*}
From the CGMT (Part (\ref{thm:item:cgmt-upper}) in Theorem \ref{thm:cgmt}), 
\begin{align*}
\mathbb{P}\Big(\min_{\wv\in S_w \cap H_{t}}\max_{\uv \in S_r}\max_{\sv \in \mathcal{D}_{\gamma}} f(\wv,\uv,\sv)\leq c\Big)\leq 2 \mathbb{P}\Big(\min_{\wv\in S_w \cap H_{t}}\max_{\uv \in S_r}\max_{\sv \in \mathcal{D}_{\gamma}} \tilde{f}(\wv,\uv,\sv) \leq c\Big)
\end{align*}
We would like to find a lower bound:
\begin{align*}
    &\min_{\wv\in S_w \cap H_{t}}\max_{\uv \in S_r}\max_{\sv \in \mathcal{D}_{\gamma}} \tilde{f}(\wv, \uv, \sv) =\min_{\wv\in S_w \cap H_{t}} \max_{\sv \in \mathcal{D}_{\gamma}, 0\leq \beta \leq r} \max_{\|\uv\|_2=\beta} \tilde{f}(\wv, \uv, \sv) \\
=& \min_{\wv\in S_w \cap H_{t}} \max_{\sv \in \mathcal{D}_{\gamma}, 0\leq \beta \leq r} \sqrt{p}\hv^\top \wv\beta-\frac{n}{2}\beta^2+\|\sqrt{p}\|\wv\|_2\gv-\sqrt{n}\zv\|_2\beta+\sv^\top(\sqrt{p}\wv+ \xv) \\
\geq& \min_{\substack{0\leq \alpha \leq 2\sqrt{m_n}+t \\ |\alpha-\sqrt{m_n}|\geq t}}\max_{\sv \in \mathcal{D}_{\gamma}, 0\leq \beta \leq r} \min_{\|\wv\|_2=\alpha}  \sqrt{p}\hv^\top \wv\beta-\frac{n}{2}\beta^2+\|\sqrt{p}\|\wv\|_2\gv-\sqrt{n}\zv\|_2\beta+\sv^\top(\sqrt{p}\wv+ \xv) \\
=& \min_{\substack{0\leq \alpha \leq 2\sqrt{m_n}+t \\ |\alpha-\sqrt{m_n}|\geq t}}\max_{0\leq \beta \leq r}\max_{\sv \in \mathcal{D}_{\gamma}}  -\frac{n}{2}\beta^2+\|\sqrt{p}\alpha \gv-\sqrt{n} \zv\|_2\beta-\sqrt{p} \|\beta \hv+\sv\|_2\alpha+\sv^\top \xv.
\end{align*}
The rest of the proof is the same as the one for the upper bound. 
\end{proof}

\subsubsection{Solution analysis of $\Lambda$} \label{sssec:expect-solution-analysis}
The bounds we obtained in Section \ref{sssec:proof-concen-upperlower} are in the min-max form of the function $\hat{\Lambda}(\alpha,\beta,T_h)$. To simplify the bounds further, we will connect $\hat{\Lambda}$ with its population version $\Lambda$. In this section, we analyze the properties of the saddle point of $\Lambda$. Then in Section \ref{sssec:proof-concen-uniform}, we study the uniform concentration of $\hat{\Lambda}(\alpha, \beta, T_h)$ around $\Lambda(\alpha, \beta, T_h)$. Let $\delta=\frac{n}{p}$ and define 
\begin{align} \label{eq:Gamma}
   & \Lambda(\alpha,\beta,T_h) \\
    =&
    \begin{cases}
          \delta \sigma_z \beta-\frac{\delta}{2} \beta^2+\frac{\gamma \|\xv\|_{\lambdav}}{p}, & \text{if } \alpha=0,\beta\geq 0,T_h>0, \\ 
        \sqrt{\alpha^2\delta +\delta^2\sigma_z^2}\beta-\frac{\alpha T_h}{2} - \frac{\delta}{2}\beta^2+\frac{T_h}{2\alpha}\frac{\|\xv\|^2_2 - \mathbb{E}\|\eta(\xv+\frac{\alpha\beta \hv}{T_h}; \frac{\alpha \gamma}{T_h})\|_2^2}{p} & \text{if } \alpha > 0, \beta\geq 0, T_h>0.
    \end{cases}
    \nonumber 
\end{align}

\begin{lemma}\label{saddle:point:thm} 
Consider the min-max problem,
\begin{equation*}
    \min_{\alpha \geq 0}\max_{\beta \geq 0}\max_{T_h > 0} \Lambda(\alpha,\beta,T_h).
\end{equation*}
For $\sigma_z \geq 0$, $\gamma > 0$, the following results hold:
    \begin{enumerate}[(i)]
    \item \label{lemma:item:lambda-convex-concave} $\Lambda(\alpha, \beta, T_h)$
        is convex in $\alpha$ and jointly concave in $(\beta, T_h)$.
    \item \label{lemma:item:saddle-points} The set of saddle points is non-empty and compact. 
    \item \label{lemma:item:solu-analysis} Let $(\alpha^*, \beta^*, T_h^*)$ be
        a saddle point of the system. Then we have $\alpha^*, \beta^*, T_h^* > 0$.
    \item \label{lemma:item:se} Any saddle point $(\alpha^*,
        \beta^*, T^*_h)$ satisfies the following system of equations:
        \begin{equation}\label{threeeq:mark} 
            \begin{cases}
            (\alpha^*)^2=\frac{1}{p}\mathbb{E}\|\eta(\xv+\frac{\alpha^*\beta^*}{T_h^*}\hv;\frac{\alpha^*\gamma}{T_h^*})-\xv\|_2^2, &\\
            \frac{1}{p}\mathbb{E}\big\langle \hv, \eta\big(\xv+\frac{\alpha^*\beta^*}{T_h^*}\hv;\frac{\alpha^*\gamma}{T_h^*}\big)\big\rangle=\sqrt{(\alpha^*)^2\delta+\delta^2\sigma_z^2}-\delta \beta^*, & \\
            \frac{\alpha^* \beta^*}{T_h^*}=\frac{\sqrt{(\alpha^*)^2\delta+\delta^2\sigma_z^2}}{\delta}. &
            \end{cases}
        \end{equation}
        Moreover, by setting $\alpha^*=\sqrt{\delta((\sigma^*)^2-\sigma_z^2)},\beta^*=\frac{\gamma}{\chi^*}, T_h^*=\frac{\gamma\sqrt{\delta((\sigma^*)^2-\sigma_z^2)}}{\sigma^* \chi^*}$, the above three equations are simplified to
        \begin{equation} \label{twoeq:mark}
            \begin{cases}
                (\sigma^*)^2 = \sigma_z^2 + \frac{1}{\delta p} \mathbb{E}\|\eta(\xv+\sigma^* \hv; \sigma^* \chi^* )-\xv\|_2^2, \\
                \gamma = \sigma^* \chi^* \Big(1 - \frac{1}{\delta \sigma^* p}\mathbb{E}\langle \eta(\xv+\sigma^* \hv;\sigma^* \chi^*), \hv \rangle \Big).
            \end{cases}
        \end{equation}
        
    \end{enumerate}
\end{lemma}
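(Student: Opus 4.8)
The plan is to read off all four claims from one structural fact: that $\Lambda$ is the expectation of a partial maximization, over the dual ball $\mathcal{D}_\gamma$, of a function that is convex in $\alpha$ and \emph{jointly} concave in $(\beta,T_h,\sv)$. Undoing the last two simplifications in the proof of Lemma~\ref{lemma:cgmt-upper-lower}, the prox term of $\Lambda$ originates from $\tfrac1p\,\mathbb{E}\big[\max_{\sv\in\mathcal{D}_\gamma}\phi(\alpha,\beta,T_h,\sv)\big]$, where
\[
    \phi(\alpha,\beta,T_h,\sv) = -\frac{\alpha}{2T_h}\|\sv+\beta\hv\|_2^2 + \sv^\top\xv
\]
(the cross term $\beta\hv^\top\xv$ is mean-zero and drops out of $\Lambda$); I will take this as the working form of term four. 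For part~(\ref{lemma:item:lambda-convex-concave}): for fixed $\sv$, $\phi$ is affine in $\alpha$, and together with the convex-in-$\alpha$ summand $\sqrt{\alpha^2\delta+\delta^2\sigma_z^2}\,\beta$, convexity in $\alpha$ is preserved by the pointwise maximum and by the expectation. For concavity in $(\beta,T_h)$ I will use the perspective trick: $(\sv,\beta,T_h)\mapsto\|\sv+\beta\hv\|_2^2/T_h$ is the perspective of the convex quadratic $(\sv,\beta)\mapsto\|\sv+\beta\hv\|_2^2$, hence jointly convex on $\{T_h>0\}$; multiplied by $-\alpha/2<0$ it is jointly concave in $(\beta,T_h,\sv)$, and adding the affine pieces $\sqrt{\cdots}\,\beta$, $-\tfrac{\alpha T_h}{2}$, $-\tfrac\delta2\beta^2$, $\sv^\top\xv$ keeps joint concavity. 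Partial maximization over the convex set $\mathcal{D}_\gamma$ and then expectation preserve concavity. The crucial point is that concavity survives the $\sv$-maximization precisely because $\phi$ is jointly concave in $(\beta,T_h,\sv)$, not merely concave in $(\beta,T_h)$ for each fixed $\sv$.

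For parts~(\ref{lemma:item:saddle-points}) and (\ref{lemma:item:solu-analysis}), existence reduces to compactness of the effective domain. For fixed $\alpha>0$, the term $-\tfrac\delta2\beta^2$ sends $\Lambda\to-\infty$ as $\beta\to\infty$ and $-\tfrac{\alpha T_h}{2}$ sends $\Lambda\to-\infty$ as $T_h\to\infty$ (term four stays bounded since the prox argument and threshold vanish, so $\eta\to\xv$), so $\max_{\beta,T_h}\Lambda$ is attained on a compact set. The outer map $\phi(\alpha):=\max_{\beta,T_h}\Lambda$ is convex, finite at $\alpha=0$ through the $\alpha=0$ branch, and I will show it is coercive by using the boundedness of $\xv$ and $\lambdav$ (Assumptions~\ref{assum:bounded-signal} and \ref{assum:weight}) to bound the growth of the prox term. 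Sion's minimax theorem then gives a nonempty saddle set, which by convex--concavity is the product of the (convex, compact) argmin and argmax and is therefore compact. Strict positivity is obtained by excluding the boundary branches: at $\beta^*>0$ the value at $T_h=0$ is $-\infty$, forcing $T_h^*>0$; the $\beta$-derivative at $\beta=0$ equals $\sqrt{(\alpha^*)^2\delta+\delta^2\sigma_z^2}>0$ (at $\beta=0$ the prox loses its $\hv$-dependence, so $\mathbb{E}\langle\hv,\eta\rangle=0$), forcing $\beta^*>0$; and $\alpha^*>0$ follows from a one-sided derivative computation showing $\phi$ is strictly decreasing at $\alpha=0^+$. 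I expect these coercivity and boundary-sign estimates to be the main obstacle, as they require quantitative control of the Moreau/prox term near and far from the boundary, relying on the small- and large-argument behavior of $\eta$ established in the preliminary lemmas.

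For part~(\ref{lemma:item:se}), at an interior saddle point all three partials vanish. Rather than differentiate $\eta$ directly, I will apply Danskin's theorem to $R(\hv):=\max_{\sv\in\mathcal{D}_\gamma}\phi$, which is valid because $\mathcal{D}_\gamma$ is compact and $\phi$ is strictly concave in $\sv$, with unique maximizer $\sv^*=\projv_{\mathcal{D}_\gamma}(\tfrac{T_h}{\alpha}\xv-\beta\hv)$ satisfying $\tfrac{T_h}{\alpha}\xv-\beta\hv-\sv^*=\tfrac{T_h}{\alpha}\eta$. Differentiating under the expectation by dominated convergence then gives, writing $\eta$ for $\eta(\xv+\tfrac{\alpha\beta}{T_h}\hv;\tfrac{\alpha\gamma}{T_h})$,
\begin{align*}
    \partial_{T_h}\Lambda &= -\tfrac{\alpha}{2}+\tfrac{1}{2\alpha p}\mathbb{E}\|\eta-\xv\|_2^2, \\
    \partial_{\beta}\Lambda &= \sqrt{\alpha^2\delta+\delta^2\sigma_z^2}-\delta\beta-\tfrac1p\mathbb{E}\langle\hv,\eta\rangle, \\
    \partial_{\alpha}\Lambda &= \tfrac{\alpha\delta\beta}{\sqrt{\alpha^2\delta+\delta^2\sigma_z^2}}-\tfrac{T_h}{2}-\tfrac{T_h}{2\alpha^2 p}\mathbb{E}\|\eta-\xv\|_2^2.
\end{align*}
Setting these to zero produces exactly the system~\eqref{threeeq:mark} (the third equation after inserting the first into $\partial_\alpha\Lambda=0$). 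Finally, the substitution $\alpha^*=\sqrt{\delta(\sigma^2-\sigma_z^2)}$, $\beta^*=\gamma/\chi$, $T_h^*=\gamma\sqrt{\delta(\sigma^2-\sigma_z^2)}/(\sigma\chi)$ collapses both arguments of $\eta$ to $\eta(\xv+\sigma\hv;\sigma\chi)$ and reduces~\eqref{threeeq:mark} to the state-evolution pair~\eqref{twoeq:mark} by routine algebra.
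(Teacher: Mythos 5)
Your route is essentially the paper's: part (\ref{lemma:item:lambda-convex-concave}) from the dual-ball representation of the prox term (your perspective-function argument is the same fact the paper extracts from joint convexity of the Moreau envelope of $\mathbb{I}_{\mathcal{D}_1}$, and the $\max_{\sv}$ form gives convexity in $\alpha$ exactly as you describe), parts (\ref{lemma:item:saddle-points})--(\ref{lemma:item:solu-analysis}) from coercivity plus a saddle-point existence theorem and boundary sign computations, and part (\ref{lemma:item:se}) from the same three partial derivatives, which you state correctly and which reduce to the two systems by the algebra you indicate. Two steps in your sketch, however, have genuine gaps.

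First, your justification that the prox term of $\Lambda$ ``stays bounded'' as $T_h\to\infty$ is insufficient: that term is $\frac{T_h}{2\alpha p}\big(\|\xv\|_2^2-\mathbb{E}\|\eta(\xv+\frac{\alpha\beta}{T_h}\hv;\frac{\alpha\gamma}{T_h})\|_2^2\big)$, and while $\eta\to\xv$ makes the bracket vanish, it is multiplied by $T_h\to\infty$, so you need the bracket to be $O(1/T_h)$, not merely $o(1)$. The paper obtains this from the decomposition of the bracket into $\|\eta-\xv-\frac{\alpha\beta}{T_h}\hv\|_{\mathcal{L}_2}^2-\frac{\alpha^2\beta^2}{T_h^2}+\frac{2\alpha\gamma}{pT_h}\mathbb{E}\|\eta\|_{\lambdav}$; you would also need to treat the joint escape $\beta,T_h\to\infty$, which your two separate one-variable limits do not cover. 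Second, your boundary exclusion in part (\ref{lemma:item:solu-analysis}) misses the corner $(\beta^*,T_h^*)=(0,0)$: the value-$(-\infty)$ argument requires $\beta>0$ and the positive $\beta$-derivative argument requires $T_h>0$ fixed, so neither applies at the origin, where $\Lambda(\alpha^*,0,0)=0$. The paper closes this by exhibiting, for $\gamma>0$ and all sufficiently small $T_h>0$, the point $(\beta,T_h)=(T_h/\sqrt{\delta},T_h)$ at which $\Lambda(\alpha^*,\beta,T_h)>0$; you need an analogous construction. The remaining deviations (Danskin's theorem in place of direct differentiation of $\eta$; Sion in place of Bertsekas's saddle-point theorem) are legitimate cosmetic variants that buy nothing essential either way.
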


\begin{proof}
    \textbf{\emph{Part (\ref{lemma:item:lambda-convex-concave})}}:
    According to Lemma \ref{p1}, we have
    \begin{align*}
    \Big\|\eta\Big(\xv+\frac{\alpha\beta \hv}{T_h}; \frac{\alpha \gamma}{T_h}\Big)\Big\|_2^2=\min_{\sv\in \mathcal{D}_{\alpha \gamma/T_h}}\Big\|\xv+\frac{\alpha\beta \hv}{T_h}-\sv\Big\|_2^2=\frac{\alpha^2\gamma^2}{T_h^2}\min_{\sv \in \mathcal{D}_1}\Big\|\frac{T_h}{\alpha\gamma}\xv+\frac{\beta\hv}{\gamma}-\sv\Big\|_2^2
    \end{align*}
  We then obtain the following form of $\Lambda$ when $\alpha>0,\beta\geq 0, T_h>0$:
    \begin{align}
        \Lambda(\alpha, \beta, T_h) =
        \beta \sqrt{\alpha^2\delta + \sigma_z^2\delta^2} - \frac{\alpha T_h}{2} - \frac{\delta\beta^2}{2} - \frac{\gamma}{p} \mathbb{E}\min_{\sv \in \mathcal{D}_1} \bigg\{ \Big\langle\xv, \frac{\beta}{\gamma}\hv - \sv \Big\rangle + \frac{\alpha\gamma}{2T_h}\Big\|\frac{\beta}{\gamma}\hv - \sv \Big\|_2^2 \bigg\}  \label{eq:Gamma-form3}
    \end{align}
From the form \eqref{eq:Gamma-form3}, it is straightforward to verify that $\Lambda(\alpha, \beta, T_h)$ is convex in $\alpha\in (0,\infty)$ and continuous at $\alpha=0$, thus $\Lambda(\alpha, \beta, T_h)$ is convex in $\alpha$ over $[0,\infty)$. Furthermore, since the perspective operation preserves convexity, it is direct to confirm that $\langle\xv, \frac{\beta}{\gamma}\hv - \sv \rangle + \frac{\alpha\gamma}{2T_h}\|\frac{\beta}{\gamma}\hv - \sv \|_2^2$ is jointly convex in $(\beta, T_h,\sv)$, which further implies the joint concavity of $\Lambda(\alpha, \beta, T_h)$ in $(\beta, T_h)$ if $\alpha>0$. When $\alpha=0$, it is clear from \eqref{eq:Gamma} that $\Lambda(\alpha, \beta, T_h)$ is concave in $(\beta, T_h)$.

    \textbf{\emph{Part (\ref{lemma:item:saddle-points})}}:  We aim to apply the Saddle Point Theorem (Theorem \ref{saddle:thm}). To satisfy the closeness condition, we introduce an extended definition of $\Lambda$ as follows:
\begin{equation*} 
    \Lambda(\alpha,\beta,T_h)
    =
    \begin{cases}
          \delta \sigma_z \beta-\frac{\delta}{2} \beta^2+\frac{\gamma \|\xv\|_{\lambdav}}{p}, & \text{if } \alpha=0,\beta\geq 0, T_h>0\\
             \sqrt{\alpha^2\delta +\delta^2\sigma_z^2}\beta-\frac{\alpha T_h}{2} - \frac{\delta}{2}\beta^2+\frac{T_h}{2\alpha}\frac{\|\xv\|^2_2 - \mathbb{E}\|\eta(\xv+\frac{\alpha\beta \hv}{T_h}; \frac{\alpha \gamma}{T_h})\|_2^2}{p} & \text{if } \alpha > 0, \beta\geq 0, T_h>0 \\
           0 & \text{if } \alpha \geq 0, \beta=0, T_h=0 \\
           -\infty & \text{if }\alpha \geq 0, \beta>0, T_h=0 \\
    \end{cases}
\end{equation*}   
   It is direct to confirm that the saddle points remain unchanged after the extension. Hence in the rest of the proof, we will refer to $\Lambda(\alpha,\beta,T_h)$ as the above extended function. Based on Part (\ref{lemma:item:lambda-convex-concave}), it is straightforward to verify that the convexity and closeness conditions are satisfied by $\Lambda(\alpha,\beta,T_h)$. To invoke the Saddle Point Theorem, we further find $(\bar{\alpha},\bar{\beta},\bar{T}_h)\in \mathbb{R}_+\times \mathbb{R}_+ \times \mathbb{R}_{+}, \bar{c}\in \mathbb{R}$ such that the following two sets are nonempty and compact:
\begin{equation*}
    \mathcal{H}_1=\{\alpha \geq 0: \Lambda(\alpha,\bar{\beta},\bar{T}_h)\leq \bar{c}\},
    \quad
    \mathcal{H}_2=\{\beta\geq 0, T_h \geq 0: \Lambda(\bar{\alpha}, \beta, T_h)\geq \bar{c} \}.
\end{equation*}

Since $\gamma > 0$, we are able to choose $\bar{T}_h>0$ small enough so that $\frac{1}{p}\mathbb{E}\|\eta(\frac{\hv}{\sqrt{\delta}};\frac{\gamma}{\bar{T}_h})\|_2^2 < \frac{1}{3}$ and $\bar{\beta}=\frac{\bar{T}_h}{\sqrt{\delta}}$. Then we have for $\alpha>0$
\begin{equation}\label{construct:one}
    \Lambda(\alpha,\bar{\beta},\bar{T}_h)
    = \alpha \bar{T}_h\bigg(\sqrt{1+\frac{\delta \sigma_z^2}{\alpha^2}}-\frac{1}{2}-\frac{\bar{T}_h}{2\alpha}+\frac{\|\xv\|_2^2}{2p\alpha^2}-\frac{\mathbb{E}\|\eta(\frac{\xv}{\alpha}+\frac{\hv}{\sqrt{\delta}};\frac{\gamma}{\bar{T}_h})\|^2_2}{2p}\bigg).
\end{equation}
Also, by Lemma \ref{slope:prop} Part (\ref{lemma:item:prox-nonexpansive0}), we conclude that
\begin{align}\label{construct:two}
    \bigg\|\eta\bigg(\frac{\xv}{\alpha}+\frac{\hv}{\sqrt{\delta}};\frac{\gamma}{\bar{T}_h} \bigg)\bigg\|_{\mathcal{L}_2}^2
   & \leq 2\bigg\|\eta\bigg(\frac{\xv}{\alpha}+\frac{\hv}{\sqrt{\delta}};\frac{\gamma}{\bar{T}_h}\bigg)-\eta\bigg(\frac{\hv}{\sqrt{\delta}};\frac{\gamma}{\bar{T}_h}\bigg)\bigg\|_{\mathcal{L}_2}^2 + 2\bigg\| \eta \bigg(\frac{\hv} {\sqrt{\delta}}; \frac{\gamma}{\bar{T}_h}\bigg)\bigg\|_{\mathcal{L}_2}^2 \nonumber \\
    &\leq \frac{2\|\xv\|_2^2}{p \alpha^2} + \frac{2}{3}.
\end{align}

Combining \eqref{construct:one} and \eqref{construct:two} we know that $\lim_{\alpha \rightarrow \infty} \Lambda(\alpha,\bar{\beta},\bar{T}_h)=+\infty$. Hence, we can choose $\bar{\alpha}>0$ and $\bar{c}<\infty$ such that 
\begin{equation}\label{construct:three}
    \bar{c}=\Lambda(\bar{\alpha}, \bar{\beta},\bar{T}_h)>1.
\end{equation}

Under our choice of $(\bar{\alpha}, \bar{\beta},\bar{T}_h)$ and $\bar{c}$, clearly $\mathcal{H}_1,\mathcal{H}_2$ are nonempty. To obtain the compactness of $\mathcal{H}_1$, it is sufficient to show $\mathcal{H}_1$ is bounded because $\Lambda(\alpha, \bar{\beta},\bar{T}_h)$ is continuous in $\alpha$ over $\mathbb{R}_+$. The boundedness is further guaranteed by $\lim_{\alpha \rightarrow \infty} \Lambda(\alpha,\bar{\beta},\bar{T}_h)=+\infty$. Regarding $\mathcal{H}_2$, we first show it is bounded. If this is not true, there exists a sequence $\{(\beta_k, T_{h,k})\} \subset \mathcal{H}_2$ and one of the following three cases has to hold: (1) $\beta_k\rightarrow \infty, T_{h,k} \rightarrow c_0 <\infty$; (2) $\beta_k\rightarrow \infty, T_{h,k} \rightarrow \infty$; (3) $\beta_k \rightarrow c_0< \infty, T_{h,k} \rightarrow \infty$. Assuming case (1) holds, then 
\begin{equation*}
    \varlimsup_{k\rightarrow \infty}\Lambda(\bar{\alpha}, \beta_k,T_{h,k})
    \leq \frac{c_0 \|\xv\|_2^2}{2\bar{\alpha}p}+\lim_{k\rightarrow
    \infty}\bigg(\sqrt{\bar{\alpha}^2\delta+\delta^2\sigma_z^2}\beta_k-\frac{\delta
    \beta_k^2}{2}\bigg)=-\infty,
\end{equation*}
    contradicting $\inf_k \Lambda(\bar{\alpha}, \beta_k,T_{h,k}) \geq \bar{c}$. For the other two cases, the same contradiction can be drawn based on the Cauchy-Schwarz inequality $\|\uv\|_{\lambdav}\leq \|\lambdav\|_2\|\uv\|_2$ and the following decomposition:
    \begin{align*}
       & \frac{\|\xv\|^2_2}{p} - \bigg\|\eta\bigg(\xv+\frac{\alpha \beta
        \hv}{T_h};\frac{\alpha \gamma}{T_h}\bigg)\bigg\|_{\mathcal{L}_2}^2 \\
        =&
        \underbrace{\bigg\|\eta\bigg(\xv+\frac{\alpha \beta \hv}{T_h}; \frac{\alpha \gamma}{T_h}\bigg) - \xv-\frac{\alpha \beta \hv} {T_h} \bigg\|_{\mathcal{L}_2}^2}_{O(T_h^{-2})} - \frac{\alpha^2\beta^2}{T_h^2}+\frac{2\alpha \gamma \mathbb{E}\| \eta(\xv+  \frac{\alpha \beta \hv} {T_h}; \frac{\alpha \gamma}{T_h})\|_{\lambdav}} {p T_h},
    \end{align*}
    where we have used Lemma \ref{slope:prop} (\ref{lemma:item:prox-identity1}) and (\ref{lemma:item:prox-lip2}) (setting $\gamma_2 = 0$ therein). Now given that $\mathcal{H}_2$ is bounded and $\Lambda(\bar{\alpha}, \beta, T_h)$ is continuous in $(\beta, T_h)$ over $[0,\infty)\times (0,\infty)$, if $\mathcal{H}_2$ is not compact, there must exist a sequence $\{(\beta_k, T_{h,k})\}\subset \mathcal{H}_2$, such that $T_{h,k} \rightarrow 0$ as $k\rightarrow \infty$. In this case, if $\beta_k \rightarrow c_0>0$ then
    \begin{equation*}
        \varlimsup_{k\rightarrow \infty}\Lambda(\bar{\alpha},\beta_k,T_{h,k})
        \leq c_0\sqrt{\bar{\alpha}^2\delta + \sigma_z^2\delta^2} - \lim_{k\rightarrow 0}\frac{1}{2\bar{\alpha}pT_{h,k}}\mathbb{E}\|\eta(T_{h,k}\xv+\bar{\alpha}\beta_k\hv;\bar{\alpha}\gamma)\|_2^2=-\infty.
    \end{equation*}
    If $\beta_k \rightarrow 0$, then 
    \begin{equation*}
        \varlimsup_{k\rightarrow \infty}\Lambda(\bar{\alpha},\beta_k,T_{h,k})
        \leq
        \lim_{k\rightarrow \infty} \bigg[\sqrt{\bar{\alpha}^2\delta+\delta^2\sigma_z^2} \beta_k-\frac{\delta}{2}\beta_k^2-\frac{\bar{\alpha}T_{h,k}}{2}+\frac{T_{h,k} \|\xv\|_2^2} {2\bar{\alpha}p}\bigg]=0.
    \end{equation*}
    Both contradict with the fact that $\inf_k \Lambda(\bar{\alpha},\beta_k,T_{h,k})\geq \bar{c}>1$. This completes our proof of Part (\ref{lemma:item:saddle-points}).\\

    \textbf{\emph{Part (\ref{lemma:item:solu-analysis})}}:
    We proceed by analyzing the first order conditions of $\Lambda$ w.r.t. $\alpha$, $\beta$ and $T_h$ respectively. Lemma \ref{lemma:prox-square-partial} enables us to obtain the following equations for $\alpha>0,\beta\geq 0,T_h>0$:
    \begin{align}
        \frac{\partial \Lambda}{\partial \alpha}
        =&
        -\frac{T_h}{2}+\frac{\alpha \beta \delta}{\sqrt{\alpha^2 \delta + \delta^2 \sigma_z^2}} - \frac{T_h}{2\alpha^2 p} \mathbb{E} \bigg\|\eta\bigg(\xv + \frac{\alpha \beta} {T_h}\hv;\frac{\alpha \gamma}{T_h}\bigg)-\xv\bigg\|_2^2, \label{eq:partial-lambda-alpha} \\ \frac{\partial \Lambda}{\partial \beta}
        =&
        \sqrt{(\alpha)^2\delta+\delta^2\sigma_z^2}-\delta \beta - \frac{1}{p}\mathbb{E} \bigg \langle \eta\bigg(\xv+\frac{\alpha \beta}{T_h}\hv;\frac{\alpha \gamma} {T_h} \bigg), \hv \bigg\rangle, \label{eq:partial-lambda-beta} \\
        \frac{\partial \Lambda}{\partial T_h}
        =&
        -\frac{\alpha}{2}+\frac{1}{2\alpha p} \mathbb{E} \bigg\|\eta \bigg(\xv+\frac{\alpha \beta}{T_h}\hv; \frac{\alpha \gamma}{T_h}\bigg)-\xv\bigg\|_2^2. \label{eq:partial-lambda-th}
    \end{align}

    We first prove $\alpha^* > 0$ by contradiction. Suppose $\alpha^* = 0$. From \eqref{eq:Gamma}, we know that $\beta^* = \sigma_z$ and $T_h^* > 0$ and $\Lambda(0, \sigma_z, T_h^*) = \frac{\delta\sigma_z^2}{2} + \frac{\gamma\|\xv\|_{\lambdav}}{p}$. However, based on \eqref{eq:partial-lambda-alpha}, we know that $\frac{\partial\Lambda}{\partial\alpha}\big|_{\beta=\beta^*,T_h =T_h^*}<0$ when $\alpha>0$ is sufficiently small. This combined with the fact that $\Lambda(\alpha,\beta^*,T_h^*)$ is continuous at $\alpha=0$ implies that $\Lambda(\bar{\alpha}, \sigma_z, T_h^*) < \Lambda(0, \sigma_z, T_h^*)$ for some small enough $\bar{\alpha} > 0$ which contradicts with the fact that $(0, \sigma_z, T_h^*)$ is a saddle point.

    Now for $\alpha^* > 0$, we want to prove $\beta^*, T_h^* > 0$. Referring to the extended function $\Lambda$ in Part (\ref{lemma:item:saddle-points}), it is obvious that $(\beta^*, T_h^*) \notin (0,\infty) \times \{0\}$ since $\Lambda(\alpha^*, 0, 0) = 0$. Further for any given $\bar{T}_h > 0$, \eqref{eq:partial-lambda-beta} reveals that $\frac{\partial \Lambda}{\partial\beta}\big|_{\alpha=\alpha^*, T_h=\bar{T}_h}  > 0$ when $\beta$ is small enough, which implies $(\beta^*, T_h^*) \notin \{0\} \times (0,\infty)$. Hence if we show $(\beta^*, T_h^*) \neq (0, 0)$, then can claim $\beta^*>0,T_h^*>0$. Towards this goal, since $\alpha^*, \gamma > 0$, we can set $T_h$ small enough such that $\mathbb{E}\big\|\eta(\xv + \frac{\alpha^*} {\sqrt{\delta}} \hv; \frac{\alpha^*\gamma}{T_h})\big\|_2^2 < \frac{\|\xv\|_2^2}{2}$, and $\beta = \frac{T_h}{\sqrt{\delta}}$. We are thus able to obtain for sufficiently small $T_h>0$:
    \begin{equation*}
        \Lambda\bigg(\alpha^*, \frac{T_h}{\sqrt{\delta}}, T_h\bigg)
        > \bigg(\sqrt{(\alpha^*)^2 + \delta\sigma_z^2} - \frac{\alpha^*}{2} + \frac{\|\xv\|_2^2}{4\alpha^* p} \bigg) T_h - \frac{T_h^2}{2}
         > 0=\Lambda(\alpha^*, 0, 0).
    \end{equation*}
    This indicates that $(\alpha^*, 0, 0)$ is not the optima when $\alpha^* > 0$.

\textbf{\emph{Part (\ref{lemma:item:se})}}: For any saddle point $(\alpha^*,\beta^*,T_h^*)$, our results in part (\ref{lemma:item:solu-analysis}) make sure they are interior points of the domain. As a result we have $\frac{\partial \Lambda}{\partial \alpha}(\alpha^*, \beta^*, T_h^*) = 0$, $\frac{\partial \Lambda}{\partial \beta}(\alpha^*, \beta^*, T_h^*) = 0$, $\frac{\partial \Lambda}{\partial T_h}(\alpha^*, \beta^*, T_h^*) = 0$. By further making use of \eqref{eq:partial-lambda-alpha}, \eqref{eq:partial-lambda-beta}, \eqref{eq:partial-lambda-th}, it is straightforward to confirm that these first order condition equations can be simplified to \eqref{threeeq:mark}. The equivalence between the three-equation system \eqref{threeeq:mark} and the two-equation system \eqref{twoeq:mark} can be directly verified.
\end{proof}

\subsubsection{Concentration of $\hat{\Lambda}$ around $\Lambda$} \label{sssec:proof-concen-uniform}

\begin{lemma} \label{lemma:concentration-sub-results}              
    Recall that $\hv \sim \mathcal{N}(0, \Iv_p), \gv \sim \mathcal{N}(0, \Iv_n)$, and $\zv \in \mathbb{R}^n$ is the noise vector in the model satisfying Assumption \ref{assum:noise}. Let $c, C>0$ denote some absolute constants, which may vary from place to place. We have the following concentration results:
    \begin{enumerate}[(i)]
            \item \label{item:concen-cross}
            We have that
            \begin{equation} \label{eq:sup-concen-signal-cross}
                \mathbb{P}\Big(\frac{1}{p}|\langle \hv, \xv \rangle| > t\Big) \leq 2 e^{-\frac{cpt^2}{\|\xv\|_2^2 / p}} \leq 2e^{\frac{-cpt^2}{\kappa_6^2}} \quad \forall t \geq 0.
            \end{equation}
        \item \label{item:concen-norm}
            For any given $U_{\alpha}>0$, it holds that $\forall t\geq 0$
            \begin{equation} \label{eq:sup-concen-gauss-norm}
                \mathbb{P}\Big(\sup_{0\leq \alpha \leq U_\alpha} \Big|\frac{1}{p} \|\sqrt{p}\alpha \gv - \sqrt{n} \zv\|_2 - \sqrt{\delta\alpha^2 + \delta^2\sigma_z^2}\Big| > t \Big) \leq 2e^{\frac{-cpt^2}{(1+\kappa_4)^4(U_{\alpha}^2+\delta \sigma_z^2)}}.
            \end{equation}
        \item \label{item:concen-prox-norm2}
           Denote $g(\hv,\alpha,\beta,T_h)=\frac{T_h}{2\alpha p} (\|\xv\|_2^2-\|\eta(\xv + \frac{\alpha \beta}{T_h}\hv; \frac{\alpha \gamma}{T_h})\|_2^2)$. For any given $L_{\beta},U_{\beta},U>0$, define $\mathit{H}=\{(\alpha,\beta,T_h): L_{\beta}\leq \beta \leq U_{\beta}, 0\leq \alpha \beta/T_h\leq U\}$. It holds that $\forall t\geq 0$,
    \begin{align}
        &\mathbb{P}\bigg(\sup_{(\alpha,\beta,T_h)\in \mathit{H}}|g(\hv,\alpha,\beta,T_h)-\mathbb{E}g(\hv,\alpha,\beta,T_h)|>t\bigg) \nonumber \\ 
        \leq  &2e^{\frac{-cpt^2}{U^2_{\beta}\kappa_6^2}}+2e^{-cp\cdot \min\big(\frac{t^2}{U^2U_{\beta}^2},\frac{t}{UU_{\beta}}\big)}+ 2e^{\frac{-cL_{\beta}^2pt^2}{(L_{\beta}+\gamma)^2U^2\gamma^2\log p}} \label{eq:sup-concen-prox2}
    \end{align}
    \item \label{item:uniform:preconcen}  %\label{lemma:obj-concen-scale}
    For given $U_{\alpha},L_{\beta},U_{\beta},U>0$, define the set $\mathit{K}=\{(\alpha, \beta, T_h): 0\leq \alpha \leq U_{\alpha}, L_{\beta}\leq \beta \leq U_{\beta}, 0\leq \alpha \beta/T_h\leq U\}$. We have that $\forall t\geq 0$,
    \begin{align} \label{eq:obj-concen1}
       & \mathbb{P}\bigg(\sup_{(\alpha, \beta, T_h) \in \mathit{K}} \big|\hat{\Lambda}(\alpha,\beta,T_h) - \Lambda(\alpha,\beta,T_h)\big|> t \bigg)  \\
     \leq &2e^{\frac{-cpt^2}{(1+\kappa_4)^4(U^2_{\alpha}+\delta \sigma_z^2)U_{\beta}^2}}+4e^{\frac{-cpt^2}{U^2_{\beta}\kappa_6^2}}+2e^{-cp\cdot \min\big(\frac{t^2}{U^2U_{\beta}^2},\frac{t}{UU_{\beta}}\big)}+ 2e^{\frac{-cL_{\beta}^2pt^2}{(L_{\beta}+\gamma)^2U^2\gamma^2\log p}}:=\mathit{Q}(t; U_{\alpha},L_{\beta},U_{\beta},U) \nonumber
    \end{align}
    \end{enumerate}
\end{lemma}
\begin{proof}
    \textbf{Proof of (\ref{item:concen-cross}).} We note that $\langle\hv, \xv\rangle$ is Lipschitz in $\hv$. The result then follows by applying the Gaussian concentration result (Theorem \ref{thm:lip-gauss}).

      \textbf{Proof of (\ref{item:concen-norm}).} We aim to apply the matrix deviation inequality (Theorem \ref{thm:matdev}). Denote 
      \begin{align}
      \label{construct}
      \Av=
      \begin{pmatrix}
      g_1 & \sigma_z^{-1}z_1 \\
      \vdots & \vdots \\
      g_n & \sigma_z^{-1}z_n
      \end{pmatrix}
      \in \mathbb{R}^{n\times 2}
     % , \quad \xv=
      %\begin{pmatrix}
      %\frac{\alpha}{\sqrt{p}}\\
      %\frac{-\sqrt{n}\sigma_z}{p}
      %\end{pmatrix}
       ,\quad \mathit{T}=\big\{\xv\in \mathbb{R}^2: 0\leq x_1 \leq U_{\alpha}/\sqrt{p}, x_2=-\sqrt{n}\sigma_z/p\big\}.
      \end{align}
      It is straightforward to confirm that 
      \[
      \sup_{0\leq \alpha \leq U_\alpha} \Big|\frac{1}{p} \|\sqrt{p}\alpha \gv - \sqrt{n} \zv\|_2 - \sqrt{\delta\alpha^2 + \delta^2\sigma_z^2}\Big|=\sup_{\xv \in \mathit{T}}\Big|\|\Av \xv\|_2-\sqrt{n}\|\xv\|_2\Big|.
      \]
      Since $\gv\sim \mathcal{N}(0,\Iv_n)$ is independent from $\zv$ and $z_i$'s are sub-Gaussian under Assumption \ref{assum:noise}, the rows $\Av_i$ of the constructed $\Av$ in \eqref{construct} are independent, isotropic and sub-Gaussian with $\max_i\|\Av_i\|_{\psi_2}\leq C(1+\kappa_4)$. Hence according to Theorem \ref{thm:matdev}, $\forall u\geq 0$, with probability at least $1-2e^{-u^2}$ it holds that
      \begin{align}
      \label{eqqe:one}
      \sup_{0\leq \alpha \leq U_\alpha} \Big|\frac{1}{p} \|\sqrt{p}\alpha \gv - \sqrt{n} \zv\|_2 - \sqrt{\delta\alpha^2 + \delta^2\sigma_z^2}\Big|\leq C(1+\kappa_4)^2(w(\mathit{T})+u\cdot {\rm rad}(\mathit{T})).
      \end{align}
      Moreover, for the $\mathit{T}$ in \eqref{construct}, it is direct to bound $w(\mathit{T}), {\rm rad}(\mathit{T})$ as follows:
      \begin{align}
      \label{eqqe:two}
      w(\mathit{T})\leq \frac{U_{\alpha}}{\sqrt{p}}\mathbb{E}|g_1|\leq \frac{U_{\alpha}}{\sqrt{p}}, ~~{\rm rad}(\mathit{T})\leq \sqrt{\frac{pU^2_{\alpha}+n\sigma_z^2}{p^2}}
      \end{align}
      Putting together \eqref{eqqe:one} and \eqref{eqqe:two} proves the result in \eqref{eq:sup-concen-gauss-norm} with $c=\frac{\log 2}{4C^2}$. 
      
    \textbf{Proof of (\ref{item:concen-prox-norm2}).} In the proof of Lemma \ref{saddle:point:thm} Part (\ref{lemma:item:lambda-convex-concave}), we have obtained
    \begin{align*}
    g(\hv,\alpha,\beta,T_h)=-\frac{\beta}{p}\langle \xv, \hv\rangle-\frac{\alpha \beta^2}{2T_h p}\|\hv\|_2^2-\underbrace{\frac{\gamma}{p}\min_{\sv \in \mathcal{D}_1}\Big\{-\langle \xv, \sv \rangle +\frac{\alpha \gamma}{2T_h}\|\sv\|_2^2-\frac{\alpha \beta}{T_h}\langle \hv, \sv \rangle \Big\}}_{:=\tilde{g}(\hv,\alpha,\beta,T_h)}.
    \end{align*}
    Therefore, we have
    \begin{align}
   & \sup_{(\alpha,\beta,T_h)\in \mathit{H}}|g(\hv,\alpha,\beta,T_h)-\mathbb{E}g(\hv,\alpha,\beta,T_h)|\leq \frac{U_{\beta}}{p}|\langle \xv, \hv \rangle|+ \frac{UU_{\beta}}{2p}\big|\|\hv\|_2^2-p\big|+ \nonumber \\
    &\hspace{3cm}\underbrace{\sup_{(\alpha,\beta,T_h)\in \mathit{H}}|\tilde{g}(\hv,\alpha,\beta,T_h)-\mathbb{E}\tilde{g}(\hv,\alpha,\beta,T_h)|}_{:=\bar{g}(\hv)}. \label{piece:1}
    \end{align}
    The concentration of the first term in the above bound has been derived in Part (\ref{item:concen-cross}). Regarding the second term, we apply Bernstein's inequality (Theorem \ref{thm:concen-bernstein}) to derive
    \begin{align*}
 %   \label{piece:2}
    \mathbb{P}\Big(\frac{1}{p}\big|\|\hv\|_2^2-p\big|> t\Big)\leq 2\exp\big(-Cp\min(t^2,t)\big),~~~\forall t\geq 0.
    \end{align*}
    We now focus on bounding the third term. For any $\hv, \tilde{\hv}\in \mathbb{R}^p$, it is direct to verify that 
    \begin{align*}
    |\bar{g}(\hv)-\bar{g}(\tilde{\hv})|\leq \sup_{(\alpha,\beta,T_h)\in \mathit{H}}|\tilde{g}(\hv,\alpha,\beta,T_h)-\tilde{g}(\tilde{\hv},\alpha,\beta,T_h)|\leq \frac{\gamma U \|\lambdav\|_2}{p}\|\hv-\tilde{\hv}\|_2,
    \end{align*}
    thus $\bar{g}(\cdot)$ is a Lipschitz function with Lipschitz constant $\|\bar{g}\|_{\mathrm{Lip}}\leq \frac{\gamma U }{\sqrt{p}}$ by Assumption \ref{assum:weight}. We can then use the Gaussian concentration result (Theorem \ref{thm:lip-gauss}) to obtain
    \begin{align}
    \label{supeq:one}
    \mathbb{P}(|\bar{g}(\hv)-\mathbb{E}\bar{g}(\hv)|> t)\leq 2\exp\big(-C\gamma^{-2}U^{-2}pt^2\big), ~~~\forall t\geq 0.
   \end{align}
   Next we bound $\mathbb{E}\bar{g}(\hv)$. Since $\mathit{H}\subseteq \{(\alpha,\beta,T_h): 0\leq \alpha/T_h\leq U/L_{\beta}, 0\leq \alpha \beta/T_h\leq U\}$, it is clear that there exists a $\epsilon$-net $\mathit{H}_{\epsilon}\subseteq \mathit{H}$ such that $|\mathit{H}_{\epsilon}|\leq p$, and $\forall (\alpha,\beta, T_h)\in \mathit{H}, \exists (\alpha',\beta', T'_h)\in \mathit{H}_{\epsilon}, {\rm s.t.} |\alpha/T_h-\alpha'/T_h'|\leq UL^{-1}_{\beta}p^{-1/2}, |\alpha\beta/T_h-\alpha'\beta'/T_h'|\leq Up^{-1/2}$. Hence, 
   \begin{align*}
&   |\tilde{g}(\hv,\alpha,\beta,T_h)-\tilde{g}(\hv,\alpha',\beta',T'_h)|\leq \frac{\gamma}{p}\Big(\frac{\gamma U \|\lambdav\|_2^2}{2L_{\beta}\sqrt{p}}+\frac{U\|\lambdav\|_2\|\hv\|_2}{\sqrt{p}}\Big)\leq \frac{\gamma^2U}{2L_{\beta}\sqrt{p}}+\frac{\gamma U \|\hv\|_2}{p}, \\
&  |\mathbb{E}\tilde{g}(\hv,\alpha,\beta,T_h)-\mathbb{E}\tilde{g}(\hv,\alpha',\beta',T'_h)|\leq \frac{\gamma^2U}{2L_{\beta}\sqrt{p}}+\frac{\gamma U }{\sqrt{p}}.
   \end{align*}
   The above results further imply that
   \begin{align}
\mathbb{E}\bar{g}(\hv)&\leq \frac{\gamma^2 U}{L_{\beta}\sqrt{p}}+\frac{2\gamma U}{\sqrt{p}}+\mathbb{E}\sup_{(\alpha,\beta,T_h)\in \mathit{H}_{\epsilon}}|\tilde{g}(\hv,\alpha,\beta,T_h)-\mathbb{E}\tilde{g}(\hv,\alpha,\beta,T_h)| \nonumber \\
&\leq \frac{\gamma^2 U}{L_{\beta}\sqrt{p}}+\frac{2\gamma U}{\sqrt{p}}+ \frac{C\sqrt{\log p}\gamma U}{\sqrt{p}}, \label{supeq:two}
   \end{align}
   where in the last inequality we have used the fact that $\tilde{g}(\cdot,\alpha,\beta,T_h)$ is Lipschitz with constant $\frac{\gamma U}{\sqrt{p}}$ so that $\|\tilde{g}(\hv,\alpha,\beta,T_h)-\mathbb{E}\tilde{g}(\hv,\alpha,\beta,T_h)\|_{\psi_2}\leq \frac{C\gamma U}{\sqrt{p}} $. Combining \eqref{supeq:one} and \eqref{supeq:two} with some straightforward calculations yields the following tail bound for $\bar{g}(\hv)$,
   \begin{align*}
   %\label{piece:3}
   \mathbb{P}(\bar{g}(\hv)>t)\leq 2\exp\bigg(\frac{-CL_{\beta}^2pt^2}{(L_{\beta}+\gamma)^2U^2\gamma^2\log p}\bigg), ~~~\forall t\geq 0.
   \end{align*}
   Finally, putting together the concentration results we have derived for the three terms in \eqref{piece:1} completes the proof. 
   
   \textbf{Proof of (\ref{item:uniform:preconcen}).} Adopt the notation from Part (\ref{item:concen-prox-norm2}). We first have that
    \begin{align*}
      \hat{\Lambda}(\alpha,\beta, T_h) - \Lambda(\alpha,\beta, T_h)=&\underbrace{ \frac{\hv^\top \xv}{p}}_{:=\mathcal{J}_1}\beta+
        \Big(\underbrace{\frac{\|\sqrt{p}\alpha \gv - \sqrt{n} \zv\|_2}{p} - \sqrt{\alpha^2\delta+\delta^2\sigma_z^2}}_{:=\mathcal{J}_2}\Big) \beta \\
        &+ \underbrace{g(\hv,\alpha,\beta, T_h)-\mathbb{E}g(\hv,\alpha,\beta, T_h)}_{:=\mathcal{J}_3}.
    \end{align*}
    This leads to the union bound,
    \begin{align*}
     &  \mathbb{P}\bigg(\sup_{(\alpha, \beta, T_h) \in \mathit{K}} \big|\hat{\Lambda}(\alpha,\beta,T_h) - \Lambda(\alpha,\beta,T_h)\big|> t \bigg) \\
        \leq &\mathbb{P}\bigg( |\mathcal{J}_1| > \frac{ t}{3U_{\beta}}\bigg) + \mathbb{P}\bigg(\sup_{0 < \alpha \leq U_\alpha} |\mathcal{J}_2| > \frac{t}{3U_{\beta}}\bigg) + \mathbb{P}\bigg( \sup_{(\alpha, \beta, T_h) \in \mathit{H}}|\mathcal{J}_3| > \frac{t}{3}\bigg).
    \end{align*}

    The result then follows from Parts (\ref{item:concen-cross})-(\ref{item:concen-prox-norm2}).
\end{proof}

%\label{lemma:sup-prox-square-concentration1}

\subsubsection{A master theorem} \label{master:for:all:case}

We prove a master theorem in this section and then use it to derive the results of Theorem \ref{thm:concentration0} in the next section. Recall several notations: $(\sigma^*,\chi^*)$ is the solution satisfying \eqref{eq:state-evolution} and \eqref{eq:calibration}; $(\alpha^*,\beta^*,T_h^*)$ is the saddle point of $\min_{\alpha \geq 0}\max_{\beta \geq 0,T_h > 0} \Lambda(\alpha,\beta,T_h)$; from Lemma \ref{saddle:point:thm} Part \eqref{lemma:item:se} we know that $\sigma^*=\frac{\alpha^*\beta^*}{T_h^*}, \chi^*=\frac{\gamma}{\beta^*}, \sqrt{m_n}=\alpha^*$ where $m_n$ is introduced in Section \ref{sssec:proof-concen-sketch}.

\begin{theorem}\label{master:thm}
There exist positive constants $\{C_i\}_{i=1}^4$ only possibly depending on the $\kappa_i$'s in Assumptions \ref{assum:noise}-\ref{assum:bounded-signal} such that the following holds
%\begin{align*}
%  \mathbb{P}\Bigg(\Big| \frac{1}{\sqrt{p}}\|\hat{\xv}(\gamma) - \xv \|_2
 %       - \alpha^* \Big| > t \Bigg)\leq 4Q\bigg(\frac{t^2\gamma \sigma_z^2}{\chi^*(\sigma^*)^3}; 3\alpha^*,\gamma/(2\chi^*),2\gamma/\chi^*,3\sigma^*\bigg),
%\end{align*}
%where the function $Q$ is defined in Lemma \ref{lemma:concentration-sub-results} Part \eqref{item:uniform:preconcen}, and 
\begin{align*}
 & \mathbb{P}\Bigg(\Big| \frac{1}{\sqrt{p}}\|\hat{\xv}(\gamma) - \xv \|_2
        - \alpha^* \Big| > t \Bigg) \\
        \leq &C_1\exp\Big(-C_2p\min\Big\{\frac{t^4\sigma_z^4}{(\sigma^*)^6+(1+\delta)(\sigma^*)^8+(\sigma^*)^8(\chi^*)^2(1+\chi^*)^2\log p},\frac{t^2\sigma_z^2}{(\sigma^*)^4}\Big\}\Big),
\end{align*}
where $t$ can be any non-negative constant that satisfies
%\begin{align*}
%0\leq t^2 \leq (\alpha^*)^2 \wedge \frac{C_1(\gamma+\gamma(\alpha^*)^2(\sigma^*)^{-2})(\delta(\alpha^*)^2+\delta^2\sigma_z^2)^{3/2}}{\sigma_z^2\delta^2(\chi^*+C_2(\alpha^*)^2(\delta \gamma+\sigma^*\chi^*)(\sigma^*)^{-3}\|\eta(\hv; 2\chi^*)\|_{\mathcal{L}_2}^{-2})}.
%\end{align*} 
\begin{align*}
0\leq t^2 \leq (\alpha^*)^2 \wedge\Bigg(\frac{C_3(\alpha^*)^2\delta \gamma \sigma^*\sigma_z^{-2}}{\chi^*(1+C_4\delta(1+\delta)\|\eta(\hv; 2\chi^*)\|_{\mathcal{L}_2}^{-2})}\Bigg).
\end{align*}
\end{theorem}

\begin{proof} 
As described in Section \ref{sssec:proof-concen-sketch}, we aim to show $\frac{1}{p}\min_{\wv \in S_w}F_n(\wv)< \frac{1}{p} \min_{\wv \in S_w \cap H_{t}} F_n(\wv)$ w.h.p. We first derive an upper bound for $\frac{1}{p}\min_{\wv \in S_w}F_n(\wv)$, and then a lower bound for $\frac{1}{p} \min_{\wv \in S_w \cap H_{t}} F_n(\wv)$.   
\paragraph{The upper bound:}
According to Lemma \ref{lemma:cgmt-upper-lower}, it is sufficient to upper bound 
\begin{align*}
%\label{upper:start:1}
\min_{0\leq \alpha \leq 2\sqrt{m_n}+t} \max_{\beta \geq 0, T_h>0} \hat{\Lambda}(\alpha,\beta,T_h)=\min_{0\leq \alpha \leq 2\alpha^*+t} \max_{\beta \geq 0, T_h>0} \hat{\Lambda}(\alpha,\beta,T_h)\leq  \max_{\beta \geq 0, T_h>0} \hat{\Lambda}(\alpha^*,\beta,T_h).
\end{align*}
Define $\mathit{J}:=\{(\beta,T_h): \frac{1}{2}\beta^*\leq \beta \leq \frac{3}{2}\beta^*, \frac{1}{2}T_h^* \leq T_h \leq \frac{3}{2}T_h^*\}$, and $(\hat{\beta},\hat{T}_h)=\argmax_{(\beta,T_h)\in \mathit{J}}\hat{\Lambda}(\alpha^*,\beta,T_h)$. Note that since $(\alpha^*,\beta^*,T_h^*)$ is the saddle point, we know $(\beta^*,T^*_h)=\argmax_{(\beta,T_h)\in \mathit{J}}\Lambda(\alpha^*,\beta,T_h)$. Hence,
\begin{align*}
&\frac{\min_{(\beta,T_h)\in \mathit{J}}\lambda_{\min}(\beta,T_h)}{2}((\hat{\beta}-\beta^*)^2+(\hat{T}_h-T_h^*)^2)\leq \Lambda(\alpha^*,\beta^*,T^*_h)-\Lambda(\alpha^*,\hat{\beta},\hat{T_h})\leq \\
& \Lambda(\alpha^*,\beta^*,T^*_h) -\hat{\Lambda}(\alpha^*,\beta^*,T^*_h)+\hat{\Lambda}(\alpha^*,\hat{\beta},\hat{T}_h)-\Lambda(\alpha^*,\hat{\beta},\hat{T}_h) \leq 2\sup_{(\beta,T_h)\in \mathit{J}}|\hat{\Lambda}(\alpha^*,\beta,T_h)-\Lambda(\alpha^*,\beta,T_h)|,
\end{align*}
where $\bar{\mu}:=\min_{(\beta,T_h)\in \mathit{J}}\lambda_{\min}(\beta,T_h)$ denotes the minimum smallest eigenvalue of the negative Hessian matrix of $\Lambda(\alpha^*,\beta,T_h)$ w.r.t. $(\beta,T_h)$ over $\mathit{J}$. This result combined with Lemma \ref{lemma:concentration-sub-results} Part \eqref{item:uniform:preconcen} shows that for $0\leq \Delta\leq C\bar{\mu}((\beta^*)^2+(T_h^*)^2)$ where $C>0$ is a small absolute constant, the following hold with probability at least $1-\mathit{Q}(\Delta; \alpha^*,\beta^*/2,3\beta^*/2, 3\alpha^*\beta^*/T_h^*)$:
\begin{itemize}
\item[(a)] $\sup_{(\beta,T_h)\in \mathit{J}}|\hat{\Lambda}(\alpha^*,\beta,T_h)-\Lambda(\alpha^*,\beta,T_h)|\leq \Delta$.
\item[(b)] $(\hat{\beta},\hat{T}_h)$ is an interior point of $\mathit{J}$ hence it is a local maximizer of $\hat{\Lambda}(\alpha^*,\beta,T_h)$ over $[0,\infty)\times (0,\infty)$.
\end{itemize}
Given that $\hat{\Lambda}(\alpha^*,\beta,T_h)$ is concave in $(\beta, T_h)$ as can be verified using the same argument in the proof of Lemma \ref{saddle:point:thm} Part \eqref{lemma:item:lambda-convex-concave}, (b) implies $(\hat{\beta},\hat{T}_h)$ is in fact a global maximizer so that $\max_{\beta \geq 0, T_h>0} \hat{\Lambda}(\alpha^*,\beta,T_h)=\max_{(\beta, T_h)\in \mathit{J}} \hat{\Lambda}(\alpha^*,\beta,T_h)$. This further enables us to obtain the upper bound 
\begin{align*}
\max_{\beta \geq 0, T_h>0} \hat{\Lambda}(\alpha^*,\beta,T_h)=& \hat{\Lambda}(\alpha^*,\hat{\beta},\hat{T}_h)=(\hat{\Lambda}(\alpha^*,\hat{\beta},\hat{T}_h)-\Lambda(\alpha^*,\hat{\beta},\hat{T}_h))+\\
&(\Lambda(\alpha^*,\hat{\beta},\hat{T}_h)-\Lambda(\alpha^*,\beta^*,T^*_h))+ \Lambda(\alpha^*,\beta^*,T^*_h)\\
\leq &\sup_{(\beta,T_h)\in \mathit{J}}|\hat{\Lambda}(\alpha^*,\beta,T_h)-\Lambda(\alpha^*,\beta,T_h)|+\Lambda(\alpha^*,\beta^*,T^*_h)\leq \Delta+ \Lambda(\alpha^*,\beta^*,T^*_h)
\end{align*}
Above all, we have proved that for $0\leq \Delta \leq C\bar{\mu}((\beta^*)^2+(T_h^*)^2)$, 
\begin{align*}
%\label{collect:upper}
\mathbb{P}\Big(\frac{1}{p}\min_{\wv \in S_w}F_n(\wv)\leq \Delta+ \Lambda(\alpha^*,\beta^*,T^*_h) \Big)\geq 1-2\mathit{Q}(\Delta; \alpha^*,\beta^*/2,3\beta^*/2, 3\alpha^*\beta^*/T_h^*).
\end{align*}
\paragraph{The lower bound:}
Again by Lemma \ref{lemma:cgmt-upper-lower}, we aim to lower bound 
\begin{align*}
\min_{\substack{0\leq \alpha \leq 2\sqrt{m_n}+t \\ |\alpha-\sqrt{m_n}|\geq t}} \max_{\beta \geq 0, T_h>0} \hat{\Lambda}(\alpha,\beta,T_h)\geq  \min_{\substack{0\leq \alpha \leq 2\sqrt{m_n}+t \\ |\alpha-\sqrt{m_n}|\geq t}} \hat{\Lambda}(\alpha,\beta^*,T^*_h).
\end{align*}
Denote $\hat{\alpha}=\argmin_{\substack{0\leq \alpha \leq 2\sqrt{m_n}+t \\ |\alpha-\sqrt{m_n}|\geq t}}\hat{\Lambda}(\alpha,\beta^*,T^*_h), \bar{\alpha}=\argmin_{\substack{0\leq \alpha \leq 2\sqrt{m_n}+t \\ |\alpha-\sqrt{m_n}|\geq t}}\Lambda(\alpha,\beta^*,T^*_h)$. Using Lemma \ref{lemma:concentration-sub-results} Part \eqref{item:uniform:preconcen} we can have 
\begin{align*}
\hat{\Lambda}(\hat{\alpha},\beta^*,T^*_h)&=(\hat{\Lambda}(\hat{\alpha},\beta^*,T^*_h)-\Lambda(\hat{\alpha},\beta^*,T^*_h))+(\Lambda(\hat{\alpha},\beta^*,T^*_h)-\Lambda(\bar{\alpha},\beta^*,T^*_h))+\Lambda(\bar{\alpha},\beta^*,T^*_h) \\
&\geq -\sup_{0\leq \alpha \leq 2\alpha^*+t}|\hat{\Lambda}(\alpha,\beta^*,T^*_h)-\Lambda(\alpha,\beta^*,T^*_h)|+ \Lambda(\bar{\alpha},\beta^*,T^*_h)\geq -\tilde{\Delta}+\Lambda(\bar{\alpha},\beta^*,T^*_h)
\end{align*}
hold with probability at least $1-\mathit{Q}(\tilde{\Delta};2\alpha^*+t,\beta^*,\beta^*, (2\alpha^*+t)\beta^*/T_h^*)$ for any $\tilde{\Delta}\geq 0$. This implies 
\begin{align*}
%\label{collect:lower}
&\mathbb{P}\Big(\frac{1}{p} \min_{\wv \in S_w \cap H_{t}} F_n(\wv)\geq -\tilde{\Delta}+\Lambda(\bar{\alpha},\beta^*,T^*_h) \Big)  \\
\geq &1- 2\mathit{Q}(\tilde{\Delta};2\alpha^*+t,\beta^*,\beta^*, (2\alpha^*+t)\beta^*/T_h^*), ~~ ~~\forall \tilde{Q}\geq 0.
\end{align*}
Moreover, since $\alpha^*$ is the global minimizer of $\Lambda(\alpha,\beta^*,T_h^*)$, it is clear that 
\begin{align*}
\Lambda(\bar{\alpha},\beta^*,T_h^*)-\Lambda(\alpha,\beta^*,T_h^*)\geq \frac{1}{2}(t\wedge \alpha^*)^2\cdot \min_{|\alpha-\alpha^*|\leq t}\frac{d^2 (\Lambda(\alpha,\beta^*,T_h^*))}{d\alpha^2}\geq \frac{(t\wedge \alpha^*)^2\gamma \sigma_z^2\delta^3}{ 2\chi^*(\delta(\alpha^*+t)^2 + \delta^2 \sigma_z^2)^{\frac{3}{2}}}:=\mathit{P}_t
\end{align*}
where the last inequality is due to Lemma \ref{lemma:bounds-alpha-second}. Therefore, setting $\Delta=\tilde{\Delta}=\frac{\mathit{P}_t}{3}$ gives us
\begin{align}
&\mathbb{P}\Big(\min_{\wv \in S_w}F_n(\wv)<\min_{\wv \in S_w \cap H_{t}} F_n(\wv)\Big)\nonumber \\
\geq &\mathbb{P}\Big(\frac{1}{p}\min_{\wv \in S_w}F_n(\wv)\leq \Delta+ \Lambda(\alpha^*,\beta^*,T^*_h)\Big)-\mathbb{P}\Big(\frac{1}{p} \min_{\wv \in S_w \cap H_{t}} F_n(\wv)\leq  -\tilde{\Delta}+\Lambda(\bar{\alpha},\beta^*,T^*_h) \Big) \nonumber \\
\geq & 1-2\mathit{Q}(\mathit{P}_t/3; \alpha^*,\beta^*/2,3\beta^*/2, 3\alpha^*\beta^*/T_h^*)-2\mathit{Q}(\mathit{P}_t/3;2\alpha^*+t,\beta^*,\beta^*, (2\alpha^*+t)\beta^*/T_h^*),\label{final:bound:p}
\end{align}
as long as $\frac{\mathit{P}_t}{3} \leq C\bar{\mu}((\beta^*)^2+(T_h^*)^2)$ which will be satisfied by plugging in the lower bound for $\bar{\mu}$ derived in Lemmas \ref{min:eigenH:1} and $\gamma\leq \sigma^*\chi^*$ from Lemma \ref{lemma:solution-analysis}. Finally, the identity $(\alpha^*)^2=\delta((\sigma^*)^2-\sigma_z^2)$ and the monotonic dependency of the function $Q$ on its arguments enable the simplification of the lower bound in \eqref{final:bound:p}.
\end{proof}

\begin{lemma} \label{lemma:bounds-alpha-second}
    We have the following bound:
    \begin{equation*}
       \frac{d^2 (\Lambda(\alpha,\beta^*,T_h^*))}{d\alpha^2}
        \geq
        \frac{\gamma \sigma_z^2\delta^3}{ \chi^*(\delta\alpha^2 + \delta^2 \sigma_z^2)^{\frac{3}{2}}}.
    \end{equation*}
\end{lemma}
\begin{proof}
  Recall the notation $\mathcal{P}_0$ defined after \eqref{eq:tie-set}. We use it here to denote the partitions with respect to $\eta(\xv+\frac{\alpha\beta^*}{T^*_h};\frac{\alpha\gamma}{T^*_h})$. Using \eqref{eq:partial-lambda-alpha} and Lemma \ref{lemma:prox-mse-partial}, it is not hard to verify that
    \begin{align*}
       \frac{d^2 (\Lambda(\alpha,\beta^*,T_h^*))}{d\alpha^2}
        =& \frac{\beta^*\sigma_z^2\delta^3}{ (\delta\alpha^2 + \delta^2 \sigma_z^2)^{\frac{3}{2}}} + \frac{T_h^*}{\alpha^3 p} \bigg(\|\xv\|_2^2 - \mathbb{E} \sum_{\mathcal{I} \in \mathcal{P}_0} \frac{(\sum_{k\in\mathcal{I}} x_k \cdot \sign(x_k + h_k\alpha\beta^*/T_h^* ))^2} {|\mathcal{I}|} \bigg) \\
        \geq & \frac{\beta^*\sigma_z^2\delta^3}{ (\delta\alpha^2 + \delta^2 \sigma_z^2)^{\frac{3}{2}}} + \frac{T_h^*}{\alpha^3 p}\Big(\|\xv\|_2^2-\sum_{\mathcal{I} \in \mathcal{P}_0}\sum_{k\in\mathcal{I}}x_k^2 \Big) \geq  \frac{\beta^*\sigma_z^2\delta^3}{ (\delta\alpha^2 + \delta^2 \sigma_z^2)^{\frac{3}{2}}}.
    \end{align*}
\end{proof}

\begin{lemma}\label{min:eigenH:1}
Consider the function $\Lambda(\alpha^*,\beta,T_h)$ on the region $\mathit{J}:=\{(\beta,T_h): \frac{1}{2}\beta^*\leq \beta \leq \frac{3}{2}\beta^*, \frac{1}{2}T_h^* \leq T_h \leq \frac{3}{2}T_h^*\}$. Let $\lambda_{\min}(\beta,T_h)$ denote the smallest eigenvalue of the negative Hessian matrix of $\Lambda(\alpha^*,\beta,T_h)$ w.r.t. $(\beta,T_h)$. It holds that
\begin{align*}
\max_{(\beta,T_h)\in \mathit{J}}\frac{1}{\lambda_{\min}(\beta, T_h)}\leq \frac{1}{\delta}+\frac{C(\alpha^*)^2(\delta \gamma+\sigma^*\chi^*)}{\delta\chi^*(\sigma^*)^3\|\eta(\hv; 2\chi^*)\|_{\mathcal{L}_2}^2},
\end{align*}
where $C>0$ is some absolute constant. 
\end{lemma}

\begin{proof}
Recall the notation $\mathcal{P}_0$ defined after \eqref{eq:tie-set}. We use it here to refer to the partitions with respect to $\eta(\xv+\frac{\alpha^*\beta}{T_h};\frac{\alpha^*\gamma}{T_h})$. Define $M_{hh} = \frac{1}{p} \mathbb{E} \sum_{\mathcal{I} \in \mathcal{P}_0} \frac{( \sum_{j \in \mathcal{I}}h_js_j)^2}{|\mathcal{I}|}$, $M_{h\lambda} = \frac{1}{p} \mathbb{E} \sum_{\mathcal{I} \in \mathcal{P}_0} \frac{( \sum_{j \in \mathcal{I}}h_js_j)( \sum_{j \in \mathcal{I}}\lambda_{r_j})} {|\mathcal{I}|}$, $M_{\lambda\lambda} = \frac{1}{p} \mathbb{E} \sum_{\mathcal{I} \in \mathcal{P}_0} \frac{( \sum_{j \in \mathcal{I}} \lambda_{r_j})^2} {|\mathcal{I}|}$, where $s_j = \sign(x_j + \frac{\alpha^*\beta}{T_h}h_j)$ and $r_j$ is the rank of $|x_j + \frac{\alpha^*\beta}{T_h}h_j|$ in the sequence $\{|x_j + \frac{\alpha^*\beta}{T_h}h_j|\}_{j=1}^p$. Based on \eqref{eq:partial-lambda-beta}, \eqref{eq:partial-lambda-th} and Lemma \ref{differential:prop} Part \eqref{lemma:item:prox-magic}, with some calculations we can represent the second order derivatives of $\Lambda(\alpha^*,\beta, T_h)$ w.r.t. $(\beta, T_h)$ as 
    \begin{align*}
      &  \frac{\partial^2\Lambda}{\partial\beta^2} = -\delta - \frac{\alpha^*}{T_h} M_{hh},
        \quad
        \frac{\partial^2\Lambda}{\partial\beta\partial T_h} = \frac{\alpha^*\beta}{T_h^2}M_{hh} - \frac{\alpha^*\gamma}{T_h^2} M_{h\lambda}, \\
      &  \frac{\partial^2\Lambda}{\partial T_h^2} = - \frac{\alpha^*\beta^2}{T^3_h} (M_{hh} - 2\gamma \beta^{-1}M_{h\lambda} + \gamma^2\beta^{-2} M_{\lambda\lambda}).
    \end{align*}
    Therefore the determinant and the trace of the negative Hessian take the following forms:
    \begin{align*}
        \det =& \frac{\delta\alpha^* \beta^2}{T^3_h}(M_{hh} - 2\gamma \beta^{-1}M_{h\lambda} + \gamma^2\beta^{-2} M_{\lambda\lambda}) + \frac{(\alpha^*)^2\gamma^2} {T_h^4} (M_{hh} M_{\lambda\lambda} - M_{h\lambda}^2), \nonumber \\
        \mathrm{trace} =& \delta + \frac{\alpha^*}{T_h}M_{hh} + \frac{\alpha^*\beta^2}{T^3_h}(M_{hh} - 2\gamma \beta^{-1}M_{h\lambda} + \gamma^2\beta^{-2} M_{\lambda\lambda}).
    \end{align*}
    Furthermore, referring to Lemma \ref{pp1}, we note that $M_{hh} - 2\gamma \beta^{-1}M_{h\lambda} + \gamma^2\beta^{-2} M_{\lambda\lambda}$ is the derivative of $\|\eta(\xv + \sigma\hv; \sigma\gamma \beta^{-1}) - \xv\|_{\mathcal{L}_2}^2$ w.r.t. $\sigma^2$ evaluated at $\sigma=\frac{\alpha^*\beta}{T_h}$ and hence the following relation holds:
    \begin{align*}
        M_{hh} - 2\gamma \beta^{-1}M_{h\lambda} + \gamma^2\beta^{-2} M_{\lambda\lambda}
        \geq \lim_{\sigma \rightarrow \infty} \frac{\partial \|\eta(\xv + \sigma\hv; \sigma\gamma \beta^{-1}) - \xv\|_{\mathcal{L}_2}^2}{\partial(\sigma^2)}= \|\eta(\hv; \gamma \beta^{-1})\|_{\mathcal{L}_2}^2.
    \end{align*}
    An upper bound for $\frac{1}{\lambda_{\min}(\beta,T_h)}$ can be obtained by 
    \begin{equation*}
        \frac{1}{\lambda_{\min}(\beta,T_h)} \leq \frac{\mathrm{trace}}{\det}
        \leq \frac{1}{\delta} + \frac{\delta T_h^3+ \alpha^*T_h^2} {\delta \alpha^*\beta^2 \|\eta(\hv; \gamma \beta^{-1})\|_{\mathcal{L}_2}^2}
    \end{equation*}
    Finally, given that $(\beta, T_h)\in \mathit{J}$ and the identities $\sigma^*=\frac{\alpha^*\beta^*}{T_h^*}, \chi^*=\frac{\gamma}{\beta^*}$, the claimed result can be obtained from the above bound. 

\end{proof}

\subsubsection{Proof of different scenarios in Theorem \ref{thm:concentration0}} \label{sssec:proof-mse-concen1}

We are in the position to prove the three scenarios in Theorem \ref{thm:concentration0}. The idea is to first derive bounds for $\sigma^*, \chi^*$ under different scenarios, and then apply the master theorem (Theorem \ref{master:thm}) with these bounds to obtain more specific concentration result for each case. Recall the following key quantity in Theorem \ref{thm:concentration0}:
\[
 M_{\lambdav}(\chi^*)= \lim_{\sigma \rightarrow 0} \frac{1}{p} \mathbb{E}\|\eta(\xv/\sigma + \hv; \chi^*) -\xv/\sigma\|_2^2,~~{\rm where}~\hv\sim \mathcal{N}(0,\Iv_p).
\]

\vspace{0.2cm}

\begin{lemma} \label{lemma:solution-analysis}
    Below we summarize the bounds in different cases:
    \begin{enumerate} [(i)]
         \item \label{common:upper:bound}
        Two useful common bounds: (1) $\gamma \leq \sigma^*\chi^*;$ (2) There exists an absolute constant $c>0$ such that $\frac{1}{p}\mathbb{E}\|\eta(\hv;t)\|_2^2\geq ce^{-t^2},~\forall t\geq 0$.
        \item \label{item:delta-leq1-above-pt}
             If $M_{\lambdav}(\chi^*) < \delta$, then we have
            \begin{align*}
             (\sigma^*)^2 \leq \frac{\delta\sigma_z^2}{\delta - M_{\lambdav}(\chi^*)}, \quad \chi^* \leq \sqrt{\frac{\delta - \epsilon}{\epsilon}}\frac{\sqrt{p}}{\|\lambdav\|_2}.
        \end{align*}
      %  Finally, in this case we must have $\gamma \leq \sqrt{\frac{\delta - \epsilon}{\epsilon}} \sqrt{\frac{\delta}{\delta - M_{\lambdav}(\chi^*)}} \frac{\sqrt{p}}{\|\lambdav\|_2} \sigma_z$.
        
        \item \label{item:delta-leq1-below-pt}
         If $M_{\lambdav}(\chi^*) > \delta$, let $\sigma_0$ be the value that satisfies $\delta \sigma_0^2 = \frac{1}{p} \mathbb{E}\big\|\eta\big(\xv + \sigma_0\hv; \sigma_0 \chi^*\big) - \xv \big\|_2^2$, 
         and $b_0 = \frac{\partial}{\partial\sigma^2}\frac{1}{p} \mathbb{E}\big\|\eta\big(\xv + \sigma\hv; \sigma\chi^*\big) - \xv \big\|_2^2 \big|_{\sigma = \sigma_0} $. Then it holds that
            \begin{align*}
             (\sigma^*)^2 \leq \sigma_0^2 + \frac{\delta \sigma_z^2}{\delta - b_0}, \quad \chi^* \leq \sqrt{\frac{M_{\lambdav}(\chi^*) - \epsilon}{\epsilon}}\frac{\sqrt{p}}{\|\lambdav\|_2}.
        \end{align*}
        \item \label{item:delta-geq1-large-noise}
        If $\sigma_z > \frac{\sqrt{2(\delta + 1)} \|\xv\|_2}{\delta \sqrt{p}}$, and $\frac{\gamma}{\sigma_z} > \frac{3}{\|\lambdav\|_2^2 / p} \sqrt{0 \vee \log\frac{16\delta + 8}{\delta^2}}$, then we have 
        \begin{align*} %\label{eq:alpha-large-noise-lower}
           (\sigma^*)^2
        \leq 2\sigma_z^2, \quad   \chi^* \leq  \frac{(2\delta +2)\gamma}{\delta \sigma_z}.
        \end{align*}
           \end{enumerate}
\end{lemma}
\begin{proof}
    From Lemma \ref{saddle:point:thm} Part \eqref{lemma:item:se}, we first restate the equations that $\alpha^*, \sigma^*, \chi^*$ should satisfy:
    \begin{align}
    (\alpha^*)^2&=\delta((\sigma^*)^2-\sigma_z^2) \label{eq:eqsys-alpha-opt}\\
        \delta - \frac{\delta\sigma_z^2}{(\sigma^*)^2} &= \frac{1}{p} \mathbb{E}\big\|\eta\big(\frac{\xv}{\sigma^*} + \hv; \chi^*\big) -  \frac{\xv}{\sigma^*} \big\|_2^2, \label{eq:eqsys-mse-opt} \\
        \frac{\delta\gamma}{\sigma^*\chi^*} &= \delta - \frac{1}{p} \mathbb{E}\big\langle \eta\big(\frac{\xv}{\sigma^*} + \hv; \chi^*\big), \hv \big\rangle, \label{eq:eqsys-cali-opt}
    \end{align}
    These equations will be repeatedly used in the proof of this lemma. 
    
     \textbf{Proof of (\ref{common:upper:bound}).}
    Lemma \ref{slope:prop} Part \eqref{lemma:item:prox-nonexpansive0} implies that $\mathbb{E}\langle \eta(\frac{\xv}{\sigma^*} + \hv; \chi^*), \hv \rangle = \mathbb{E}\langle \eta(\frac{\xv}{\sigma^*} + \hv; \chi^*) - \eta(\frac{\xv}{\sigma^*}; \chi^*), \hv \rangle \geq \mathbb{E}\| \eta(\frac{\xv}{\sigma^*} + \hv; \chi^*) - \eta(\frac{\xv}{\sigma^*}; \chi^*) \|_2^2 \geq 0$, which together with \eqref{eq:eqsys-cali-opt} proves the first bound. Regarding the second one, from Lemma \ref{slope:prop} Part \eqref{item:basic-prox-bound} we have $\frac{1}{p}\mathbb{E}\|\eta(\hv;t)\|_2^2\geq \mathbb{E}(|z|-\lambda_1t)^2_+\geq \mathbb{E}(|z|-t)^2_+$ where $z\sim \mathcal{N}(0,1)$. Then applying Lemma \ref{lemma:numeric_bound} Part \eqref{item:gaussian-ineq1} completes the proof. 
    
    \textbf{Proof of (\ref{item:delta-leq1-above-pt}).}
  According to Lemma \ref{pp1} Part \eqref{item:ff-decrease},  $\mathbb{E}\big\|\eta\big(\frac{\xv}{\sqrt{v}} + \hv; \chi^*\big) -  \frac{\xv}{\sqrt{v}} \big\|_2^2$ is a decreasing function of $v$ over $(0,\infty)$. Hence we can use \eqref{eq:eqsys-mse-opt} to obtain
  \begin{align*}
 \frac{1}{p} \mathbb{E}\|\eta(\hv; \chi^*)\|_2^2 &=\lim_{v\rightarrow \infty} \frac{1}{p} \mathbb{E}\big\|\eta\big(\frac{\xv}{\sqrt{v}} + \hv; \chi^*\big) -  \frac{\xv}{\sqrt{v}} \big\|_2^2\\
  &\leq  \delta - \frac{\delta\sigma_z^2}{(\sigma^*)^2} \leq  \lim_{v\rightarrow 0}\frac{1}{p} \mathbb{E}\big\|\eta\big(\frac{\xv}{\sqrt{v}} + \hv; \chi^*\big) -  \frac{\xv}{\sqrt{v}} \big\|_2^2=M_{\lambdav}(\chi^*),
  \end{align*}
  which yields the bounds for $\sigma^*$. Moreover, by Lemma \ref{lemma:bounding-mse-limit}, we have the following upper bound on $\chi^*$:
    \begin{align}
    \label{same:work}
        \delta \geq M_{\lambdav}(\chi^*)
        \geq \epsilon + \frac{(\chi^*)^2}{p}\sum_{i=1}^k \lambda_i^2,
        \quad \Rightarrow \quad
        \chi^* \leq \sqrt{\frac{\delta - \epsilon}{\frac{1}{p}\|\lambdav_{[1:k]}\|_2^2}}
        \leq \sqrt{\frac{\delta - \epsilon}{\epsilon}}\frac{\sqrt{p}}{\|\lambdav\|_2}.
    \end{align}

    \textbf{Proof of (\ref{item:delta-leq1-below-pt}).}
  Denote $f(v)=\frac{1}{p} \mathbb{E}\big\|\eta\big(\xv + \sqrt{v}\hv; \sqrt{v} \chi^*\big) - \xv \big\|_2^2$, and $g(v)=f(v)+\delta \sigma_z^2$. The tangent line of $f(v)$ at $v=\sigma_0^2$ is $b_0(v-\sigma_0^2)+\delta \sigma_0^2$, hence the tangent line of $g(v)$ at $v=\sigma_0^2$ is $h(v):=b_0(v-\sigma_0^2)+\delta \sigma_0^2+\delta \sigma_z^2$. Since $h(v)$ is above $g(v)$ we have that 
  \begin{align*}
  \delta (\sigma^*)^2 \leq h((\sigma^*)^2)=b_0((\sigma^*)^2-\sigma_0^2)+\delta \sigma_0^2+\delta \sigma_z^2,
  \end{align*}
  which leads to $(\sigma^*)^2 \leq \sigma_0^2 + \frac{\delta \sigma_z^2}{\delta - b_0}$. In regards to the upper bound of $\chi^*$, it follows from \eqref{same:work}.   

    \textbf{Proof of (\ref{item:delta-geq1-large-noise}).} 
    We first prove the bound for $\chi^*$. From Lemma \ref{pp1} Part \eqref{item:concave-mse} and \eqref{eq:eqsys-cali-opt}, we have
    \begin{align*}
    (\sigma^*)^2=\sigma_z^2+\frac{1}{\delta p}\mathbb{E}\big\|\eta\big(\xv + \sigma^* \hv; \sigma^* \chi^*\big) -\xv \big\|_2^2 \leq \sigma_z^2 + \frac{\|\xv\|_2^2}{\delta p} + \frac{ (\sigma^*)^2}{\delta p} \mathbb{E}\|\eta(\hv; \chi^*)\|_2^2,
    \end{align*}
     which by the identity $(\alpha^*)^2=\delta((\sigma^*)^2-\sigma_z^2)$ leads to the following results:
    \begin{align}
      \label{alpha:related:bounds}
       (\sigma^*)^2
        \leq \frac{\delta\sigma_z^2 + \frac{1}{p}\|\xv\|_2^2}{\delta - \frac{1}{p} \mathbb{E} \|\eta(\hv; \chi^*)\|_2^2}, \quad (\alpha^*)^2
        \leq \frac{\frac{\delta}{p} \mathbb{E} \|\eta(\hv; \chi^*)\|_2^2\sigma_z^2 + \frac{\delta}{p}\|\xv\|_2^2}{\delta - \frac{1}{p} \mathbb{E} \|\eta(\hv; \chi^*)\|_2^2}.
    \end{align}
    For any constant $c > 1 \vee \sqrt{\delta}$, suppose if $\chi^* \geq \frac{c\gamma}{(c-1)\sigma^*}$, then using \eqref{eq:eqsys-alpha-opt}-\eqref{eq:eqsys-cali-opt} we have that
    \begin{align*}
        &\frac{\delta \sigma^*}{c \alpha^*} \Big\|\eta\Big(\frac{\xv}{\sigma^*} + \hv; \chi^*\Big) - \frac{\xv}{\sigma^*}\Big\|_{\mathcal{L}_2}
        = \frac{\delta}{c} \\
        \leq & \frac{1}{p}\mathbb{E}\Big \langle \eta\Big(\frac{\xv}{\sigma^*} + \hv; \chi^*\Big), \hv \Big \rangle =\frac{1}{p}\mathbb{E}\Big \langle \eta\Big(\frac{\xv}{\sigma^*} + \hv; \chi^*\Big)-\frac{\xv}{\sigma^*}, \hv \Big \rangle \\
        \leq & \Big\|\eta\Big(\frac{\xv}{\sigma^*} + \hv; \chi^*\Big) - \frac{\xv}{\sigma^*}\Big\|_{\mathcal{L}_2},
    \end{align*}
    where the last inequality is due to Cauchy–Schwarz inequality. The above result leads to
    \begin{equation}
    \label{interm:alpha}
     \frac{\delta \sigma^*}{c \alpha^*} \leq 1  \quad \Rightarrow \quad   (\alpha^*)^2 \geq \frac{\delta^2\sigma_z^2}{c^2 - \delta}.
    \end{equation}
    Moreover, it is straightforward to confirm when $\sigma_z^2 > \frac{(c^2 - \delta) \frac{\|\xv\|_2^2}{p}}{\delta^2 - \frac{c^2}{p}\mathbb{E}\|\eta(\hv; \chi^*)\|_2^2}$ and $\delta^2 > \frac{c^2}{p}\mathbb{E}\|\eta(\hv; \chi^*)\|_2^2$, it holds that $\frac{\delta^2\sigma_z^2}{c^2 - \delta}  > \frac{\frac{\delta}{p}\mathbb{E}\|\eta(\hv; \chi^*)\|_2^2 \sigma_z^2 + \frac{\delta}{p}\|\xv\|_2^2}{\delta - \frac{1}{p}\mathbb{E}\|\eta(\hv; \chi^*)\|_2^2}$. However, this result together with \eqref{interm:alpha} contradicts with the upper bound for $\alpha^*$ derived in \eqref{alpha:related:bounds}. Therefore, we can conclude the following bound $\chi^* \leq  \frac{c\gamma}{(c-1)\sigma^*}$, as long as $\sigma_z^2 > \frac{(c^2 - \delta) \frac{\|\xv\|_2^2}{p}}{\delta^2 - \frac{c^2}{p}\mathbb{E}\|\eta(\hv; \chi^*)\|_2^2}$ and $\delta^2 > \frac{c^2}{p}\mathbb{E}\|\eta(\hv; \chi^*)\|_2^2$. Based on the condition $\sigma_z > \frac{\sqrt{2(\delta + 1)}\|\xv\|_2}{\delta \sqrt{p}}$ in (\ref{item:delta-geq1-large-noise}), these two requirements will be satisfied by setting $c = \sqrt{2\delta + 1}$ if $\|\eta(\hv; \chi^*)\|_{\mathcal{L}_2}^2 < \frac{\delta^2}{4 \delta + 2}$. According to Lemma \ref{slope:prop} (\ref{item:basic-prox-bound}) and Lemma \ref{lemma:numeric_bound}, it is direct to check that $\|\eta(\hv; \chi^*)\|_{\mathcal{L}_2}^2 < \frac{\delta^2}{4 \delta + 2}$ is implied by the condition $\frac{\gamma}{\sigma_z} \geq \frac{3}{\|\lambdav\|_2^2 / p}\sqrt{0 \vee \log\frac{16\delta + 8}{\delta^2}}$ in (\ref{item:delta-geq1-large-noise}), whenever $\chi^* \geq \frac{\gamma}{2\sigma_z}$. 
    
It remains to prove $\chi^* \geq \frac{\gamma}{2\sigma_z}$. In particular, we will show that it holds whenever $\sigma_z \geq \frac{\sqrt{\delta + 1} \|\xv\|_2}{\delta \sqrt{p}}$. Towards this end, suppose $\sigma_z \geq \frac{\|\xv\|_2}{C_1\sqrt{p}}$ for now. Using \eqref{eq:eqsys-mse-opt} we obtain
    \begin{align*}
        \frac{\delta\sigma_z}{\sigma^*}
        \geq \frac{\delta\sigma_z^2}{(\sigma^*)^2}
        =& \delta - \|\hv - \projv_{\mathcal{D}_{\chi^*}}(\xv / \sigma^* + \hv)\|_{\mathcal{L}_2}^2 \nonumber \\
        =& \delta - 1 + \frac{1}{p}\mathbb{E}\langle \hv, \projv_{\mathcal{D}_{\chi^*}}(\xv / \sigma^* + \hv) \rangle + \frac{1}{p}\mathbb{E}\langle \xv / \sigma^* + \hv, \projv_{\mathcal{D}_{\chi^*}}(\xv / \sigma^* + \hv) \rangle \nonumber \\
        &- \|\projv_{\mathcal{D}_{\chi^*}}(\xv / \sigma^* + \hv)\|_{\mathcal{L}_2}^2 - \frac{1}{p} \mathbb{E} \langle \xv / \sigma^*, \projv_{\mathcal{D}_{\chi^*}}(\xv / \sigma^* + \hv) \rangle \nonumber \\
        \overset{(a)}{\geq}& \delta - 1 + \frac{1}{p}\mathbb{E}\langle \hv, \projv_{\mathcal{D}_{\chi^*}}(\xv / \sigma^* + \hv) \rangle - \frac{1}{p}\mathbb{E}\langle \xv / \sigma^*, \projv_{\mathcal{D}_{\chi^*}}(\xv / \sigma^* + \hv) \rangle \nonumber \\
        \overset{(b)}{\geq}& \delta - 1 + \frac{1}{p}\mathbb{E}\langle \hv, \projv_{\mathcal{D}_{\chi^*}}(\xv / \sigma^* + \hv) \rangle - \frac{\|\xv\|_2}{\sigma^*\sqrt{p}}\sqrt{1 + \frac{\|\xv\|_2^2}{(\sigma^*)^2 p}} \nonumber \\
        \overset{(c)}{\geq}& \delta - 1 + \frac{1}{p}\mathbb{E}\langle \hv, \projv_{\mathcal{D}_{\chi^*}}(\xv / \sigma^* + \hv) \rangle - \frac{\|\xv\|_2}{\sigma^*\sqrt{p}}\sqrt{1 + C_1^2},
    \end{align*}
    where step (a) holds since the projection set $\mathcal{D}_{\chi^*}$ is a closed convex set containing the origin; (b) is by the Cauchy-Schwarz inequality; and (c) is due to the fact $\sigma^*\geq \sigma_z$. This implies that
    \begin{equation*}
        \frac{\delta \sigma_z + \sqrt{1 + C_1^2}\|\xv\|_2 / \sqrt{p}}{\sigma^*}
        \geq \delta - 1 + \frac{1}{p}\mathbb{E}\langle \hv, \projv_{\mathcal{D}_{\chi^*}}(\xv / \sigma^* + \hv) \rangle.
    \end{equation*}
    The result combined with \eqref{eq:eqsys-cali-opt} yields
    \begin{equation*}
        \delta\gamma
        = \sigma^*\chi^*\big(\delta - 1 + \frac{1}{p}\mathbb{E}\langle \hv, \projv_{\mathcal{D}_{\chi^*}}(\xv / \sigma^* + \hv) \rangle\big)
        \leq (\delta \sigma_z + \sqrt{1 + C_1^2} \|\xv\|_2 / \sqrt{p}) \chi^*,
    \end{equation*}
    which shows that
    \begin{equation*}
        \chi^*
        \geq \frac{\delta \gamma}{\delta \sigma_z + \sqrt{1 + C_1^2}\|\xv\|_2 / \sqrt{p}}
        \geq \frac{\delta}{\delta + C_1\sqrt{1 + C_1^2}} \frac{\gamma}{\sigma_z}.
    \end{equation*}
    By setting $C_1^2 \leq \frac{1}{2}(\sqrt{4\delta^2 + 1} - 1)$, we have $C_1\sqrt{1 + C_1^2} \leq \delta$, and hence $\chi^* \geq \frac{\gamma}{2\sigma_z}$. One feasible choice is $C_1 = \frac{\delta}{\sqrt{\delta + 1}}$. In summary, so far we have proved that $\chi^*\leq \frac{\sqrt{2\delta +1}\gamma}{(\sqrt{2\delta+1}-1)\sigma^*}$. Moreover, it can be easily verified that $\frac{\sqrt{2\delta +1}}{(\sqrt{2\delta+1}-1)}\leq \frac{2\delta +2}{\delta}$ and hence $\chi^*\leq \frac{(2\delta+2)\gamma}{\delta \sigma^*}\leq \frac{(2\delta+2)\gamma}{\delta \sigma_z}$. Regarding the upper bound for $\sigma^*$, we have showed in the preceding arguments that
    \[
    (\sigma^*)^2
        \leq \frac{\delta\sigma_z^2 + \frac{1}{p}\|\xv\|_2^2}{\delta - \frac{1}{p} \mathbb{E} \|\eta(\hv; \chi^*)\|_2^2},~~ ~~\frac{1}{p}\|\eta(\hv; \chi^*)\|_2^2 \leq \frac{\delta^2}{4 \delta + 2}
    \]
    These results combined with the condition $\sigma_z > \frac{\sqrt{2(\delta + 1)} \|\xv\|_2}{\delta \sqrt{p}}$ yield
    \begin{align*}
    (\sigma^*)^2\leq \frac{\delta \sigma_z^2 +\sigma_z^2\frac{\delta^2}{2\delta+2}}{\delta-\frac{\delta^2}{4\delta+2}}=\frac{4\delta +2}{2\delta+2}\sigma_z^2\leq 2\sigma_z^2.
    \end{align*}

\end{proof}

\vspace{0.5cm}

Finally, with some straightforward calculations, combining Theorem \ref{master:thm} with the bounds in Lemma \ref{lemma:solution-analysis} Parts \eqref{common:upper:bound}\eqref{item:delta-leq1-above-pt} completes the proof of Theorem \ref{thm:concentration0} (\ref{item:main-low-noise-abovePT0}); combining Theorem \ref{master:thm} with the bounds in Lemma \ref{lemma:solution-analysis} Parts \eqref{common:upper:bound}\eqref{item:delta-leq1-below-pt} proves Theorem \ref{thm:concentration0} (\ref{item:main-low-noise-belowPT}); combining Theorem \ref{master:thm} with the bounds in Lemma \ref{lemma:solution-analysis} Parts \eqref{common:upper:bound}\eqref{item:delta-geq1-large-noise} finishes the proof of Theorem \ref{thm:concentration0} (\ref{item:main-large-noise-deltaB1}).

\subsection{Proofs of Proposition \ref{lem:best_slope_small_noise} and Theorems \ref{thm:noiseless-phase-transition} and \ref{thm:large-noise-ridge-better}}
\label{sec:proof:thm-large-noise}

Recall that $e_{\lambdav}(\gamma^*_{\lambdav},\sigma_z)=\frac{1}{p}\mathbb{E}\|\eta(\xv + \sigma^* \hv;\sigma^* \chi^*) - \xv\|_2^2$, where $\gamma_{\lambdav}^* = \argmin_{\gamma > 0} e_{\lambdav}(\gamma, \sigma_z)$ and the pair $(\sigma^*, \chi^*)$ is obtained from the equations
\begin{align}
    (\sigma^*)^2 =& \sigma_z^2 + \frac{1}{\delta p}\mathbb{E}\|\eta(\xv + \sigma^*\hv; \sigma^*\chi^*) - \xv\|^2, \label{eq:state-evolution:again1} \\
    \gamma^*_{\lambdav} =& \sigma^*\chi^* \Big(1 - \frac{1}{\delta \sigma^* p}\mathbb{E} \langle \eta(\xv + \sigma^*\hv; \sigma^*\chi^*), \hv \rangle \Big). \label{eq:state-evolution:again2}
\end{align}

The main proof for Theorems \ref{thm:noiseless-phase-transition} and \ref{thm:large-noise-ridge-better} is to analyze the above state evolution equations as $\sigma_z\rightarrow 0$ or $\sigma_z\rightarrow \infty$. The quantity $\mathbb{E}\|\eta(\xv + \sigma^*\hv; \sigma^*\chi^*) - \xv\|^2$ plays a critical role in the analysis. Lemma \ref{pp1} below characterizes several important properties of this quantity that will be useful in the proof. 

\begin{lemma} \label{pp1}
    For any fixed $\chi > 0$, define the function $f:\mathbb{R}_+ \rightarrow \mathbb{R}_+$,
    \begin{align*} \label{important:risk:fun}
        f(v)=\mathbb{E}\|\eta(\xv+\sqrt{v}\hv; \sqrt{v}\chi)-\xv\|_2^2,
    \end{align*}
    where $\hv \sim \mathcal{N}(\bm{0}, \Iv_{p})$. Then $f(v)$ has the following properties:
    \begin{enumerate}[(i)]
        \item \label{item:ff-continuity} $f(v)$ is continuous at $v=0$ and has derivatives of all orders on $(0, +\infty)$.
        \item \label{item:ff-increase} $f(v)$ is strictly increasing over $[0, +\infty)$
        \item \label{item:ff-decrease} $\frac{f(v)}{v}$ is decreasing over $(0, +\infty)$, and strictly decreasing if $\xv \neq \bm{0}$.
        \item \label{item:concave-mse} $f(v) \leq v\mathbb{E}\|\eta(\hv; \chi)\|_2^2  + \|\xv\|_2^2$.
    \end{enumerate}
\end{lemma}

%\begin{lemma} \label{lemma:concave-mse}
 %   Let $f(v) = \big\|\eta(\xv + \sqrt{v}\hv; \sqrt{v}\chi) - \xv\big\|_2^2$. Then, we have the following upper bound for $f(v)$:
 %   \begin{equation*} 
  %      f(v) \leq \|\eta(\hv; \chi)\|_2^2 v + \|\xv\|_2^2 
  %  \end{equation*}
%\end{lemma}

\begin{proof}
\emph{\textbf{Part (\ref{item:ff-continuity}):}}  Observe that 
\[
f(v)=v\mathbb{E}\|\eta (\xv/\sqrt{v} + \hv;\chi ) -\xv/\sqrt{v}\|_2^2, \quad \mbox{~for~} v>0.
\]
To show $f(v)$ is smooth over $(0,\infty)$, it is sufficient to show  for each $1 \leq i \leq p$, $\mathbb{E} \eta^2_i(\xv/\sqrt{v}+\hv;\chi )$ and $\mathbb{E} x_i \eta_i(\xv/\sqrt{v} + \hv; \chi)$ are both smooth for $v \in (0, +\infty)$. We have
\begin{align*}
\mathbb{E} \eta^2_i(\xv/\sqrt{v}+\hv;\chi )=(2\pi)^{-p/2}\int \eta_i^2 (\hv;\chi) e^{-\frac{\|\hv-\xv/\sqrt{v}\|_2^2}{2}}d\hv.
\end{align*}
Given that $\eta^2_i(\xv/\sqrt{v}+\hv;\chi )\leq 2\|\xv\|_2^2/v + 2\|\hv\|_2^2$, we can apply the mean value theorem and the Dominated Convergence Theorem (DCT) to conclude the existence of derivatives of all orders for $\mathbb{E} \eta^2_i(\xv/\sqrt{v}+\hv;\chi )$. Similar arguments work for $\mathbb{E}x_i \eta_i(\xv/\sqrt{v}+\hv; \chi)$. We next show the continuity of $f(v)$ at $v=0$. From Lemma \ref{lemma:dual-ball-diam} we have
\begin{equation*}
    \sup_{v>0} \mathbb{E}\|\eta (\xv/\sqrt{v} + \hv; \chi ) - \xv/\sqrt{v} \|_2^2
    \leq p + \chi^2\|\lambdav\|_2^2.
\end{equation*}
Hence $|f(v)| \leq (p + \chi^2\|\lambdav\|_2^2)\cdot |v|$, yielding that $\lim_{v\rightarrow 0} f(v) = 0$. \\

\noindent \emph{\textbf{Part (\ref{item:ff-increase}):}}
Recall the notation $\mathcal{I}$, $\mathcal{P}$ and $\mathcal{P}_0$ defined in and after \eqref{eq:tie-set}. Let $r_j$ be the rank of $|x_j + \sqrt{v}h_j|$ in the sequence $\{|x_i + \sqrt{v}h_i|\}_{i=1}^p$. Using the form of $\eta_i$ presented in Lemma \ref{property:primal} Part (\ref{lemma:item:prox-form}), combined with DCT and Lemma \ref{lemma:prox-mse-partial} we can compute the derivative $f'(v)$, 
\begin{align*}
    f'(v)
    =&
    \frac{1}{v}\mathbb{E}\Bigg( \|\eta(\xv + \sqrt{v}\hv; \sqrt{v}\chi)\|_2^2
    - 2\langle \xv, \eta(\xv + \sqrt{v}\hv; \sqrt{v}\chi) \rangle + \sum_{\mathcal{I} \in \mathcal{P}_0} \frac{1}{|\mathcal{I}|}\Big(\sum_{j \in \mathcal{I}}
    x_j\cdot \mbox{sign}(x_j+\sqrt{v}h_j)\Big)^2\Bigg) \nonumber \\
    =&
    \mathbb{E} \sum_{\mathcal{I} \in \mathcal{P}_0} \frac{1}{|\mathcal{I}|}
    \bigg( \sum_{j \in \mathcal{I}} (h_j\cdot \mbox{sign}(x_j+\sqrt{v}h_j)- \chi\lambda_{r_j}) \bigg)^2 > 0.
\end{align*}
Therefore, $f'(v)>0$ for $v\in (0,+\infty)$. Also $f(v)$ is continuous at $v=0$
from Part (\ref{item:ff-continuity}). Thus $f(v)$ is strictly increasing over $[0,+\infty)$. \\

\noindent \emph{\textbf{Part (\ref{item:ff-decrease}):}} Utilizing the result from Part (\ref{item:ff-increase}), we compute the derivative when $v>0$,
\begin{align} \label{negative:one}
    \Big(\frac{f(v)}{v}\Big)'
    =& \frac{f'(v)v-f(v)}{v^2}
    = -\frac{1}{v^2}\mathbb{E} \Bigg[ \|\xv\|_2^2 - \sum_{\mathcal{I} \in \mathcal{P}_0} \frac{1}{|\mathcal{I}|}\Big(\sum_{j \in \mathcal{I}} x_j \cdot \sign(x_j + \sqrt{v} h_j)\Big)^2 \Bigg] \nonumber \\
    =& -\frac{1}{v^2} \mathbb{E}\Bigg[\sum_{\mathcal{I} \in \mathcal{P} \backslash \mathcal{P}_0} \sum_{j \in \mathcal{I}} x_j^2 + \sum_{\mathcal{I} \in \mathcal{P}_0} \Big(\sum_{j \in \mathcal{I}} x_j^2\Big) - \frac{1}{|\mathcal{I}|}\Big(\sum_{j \in \mathcal{I}} x_j \cdot \sign(x_j + \sqrt{v} h_j)\Big)^2 \Bigg]\leq 0,
\end{align}
where the last inequality is due the arithmetic-mean square-mean inequality. We can further argue that the strict inequality holds in \eqref{negative:one} when $\xv\neq \bm{0}$. This is because Lemma \ref{p1} implies that
$\eta(\xv+\sqrt{v}\hv; \sqrt{v}\chi )=0$ if and only if
\begin{equation*}
    \hv \in O_{\xv}\triangleq \Big\{\hv\in\mathbb{R}^p: \sum_{i=1}^j |\xv/\sqrt{v}+\hv|_{(i)}\leq \chi\sum_{i=1}^j \lambda_i, 1\leq j \leq p \Big\}. 
\end{equation*}
The set $O_{\xv}$ is convex and has positive Lebesgue measure. We can then continue from \eqref{negative:one} to obtain
\[
 \Big(\frac{f(v)}{v}\Big)'=\frac{f'(v)v-f(v)}{v^2} \leq -\frac{\|\xv\|_2^2}{v^2}\cdot \mathbb{P}(\hv \in O_{\xv})<0.
\]

\noindent  \emph{\textbf{Part (\ref{item:concave-mse}):}} Let $g(a) =\mathbb{E}\| \eta(a\xv + \hv; \chi)\|_2^2$. We can have
    \begin{align*}
        f(v) - v \mathbb{E}\|\eta(\hv; \chi)\|_2^2 
        =& \frac{\mathbb{E}\|\eta(\frac{\xv}{\sqrt{v}} + \hv; \chi)\|_2^2 - \frac{2}{\sqrt{v}} \mathbb{E}\langle \eta(\frac{\xv}{\sqrt{v}} + \hv; \chi), \xv \rangle - \mathbb{E}\|\eta(\hv; \chi)\|_2^2} {\frac{1}{v}} + \|\xv\|_2^2 \nonumber \\
        =& - \frac{g(0) - g\Big(\frac{1}{\sqrt{v}}\Big) + \frac{1}{\sqrt{v}} g'\Big(\frac{1}{\sqrt{v}}\Big)} {\frac{1}{v}} + \|\xv\|_2^2
        \leq  \|\xv\|^2,
    \end{align*}
    where the second equality is from Lemma \ref{differential:prop} Part \eqref{lemma:item:prox-magic}, and the last inequality is due to the convexity of $g(a)$ from Lemma \ref{slope:prop} Part \eqref{lemma:item:prox-nonincreasing}.
    
\end{proof}

The equations \eqref{eq:state-evolution:again1} and \eqref{eq:state-evolution:again2} that we aim to analyze seem rather complicated, because the regularization parameter $\gamma^*_{\lambdav}$ is chosen to be the optimal one instead of an arbitrarily given value. Lemma \ref{optimal:fixed:point:eq} shows us  that the choice of the optimal tuning simplifies the equations to some extent, and sets the stage for the noise sensitivity analysis.

\begin{lemma} \label{optimal:fixed:point:eq}
If $\sigma^*$ is the unique solution to the equation
\begin{equation} \label{fixed:point:eq:optimal}
    \sigma^2 = \sigma_z^2+\frac{1}{\delta p} \inf_{\chi>0}\mathbb{E}\|\eta(\xv+\sigma h; \sigma \chi)-\xv\|_2^2,
\end{equation}
then we have
\begin{equation} \label{optimal:amse:formula}
    e_{\lambdav}(\gamma^*_{\lambdav},\sigma_z)=\delta((\sigma^*)^2-\sigma_z^2). 
\end{equation}
\end{lemma}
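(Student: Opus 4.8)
The plan is to recast the minimization over the tuning parameter $\gamma$ as a minimization over the effective threshold $\chi$, and then to interchange that minimization with the fixed point equation that defines $\sigma^*$. For a fixed $\chi>0$ write $f_\chi(v)=\mathbb{E}\|\eta(\xv+\sqrt{v}\,\hv;\sqrt{v}\,\chi)-\xv\|_2^2$, the function studied in Lemma \ref{pp1}, and set $\bar f(v)=\inf_{\chi>0}f_\chi(v)$, so that \eqref{fixed:point:eq:optimal} reads $v=\sigma_z^2+\frac{1}{\delta p}\bar f(v)$ with unique solution $v_*=(\sigma^*)^2$. The first state evolution equation \eqref{eq:state-evolution:again1} already shows that for any $\gamma>0$ with solution $(\sigma,\chi)$ one has $e_{\lambdav}(\gamma,\sigma_z)=\delta(\sigma^2-\sigma_z^2)$; hence proving \eqref{optimal:amse:formula} amounts to showing $\min_{\gamma>0}\sigma^2=v_*$, which I will do by matching lower and upper bounds.

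For the lower bound, fix any $\gamma>0$ and let $(\sigma,\chi)$ solve \eqref{eq:state-evolution:again1}--\eqref{eq:state-evolution:again2}. Then $\sigma^2=\sigma_z^2+\frac{1}{\delta p}f_\chi(\sigma^2)\ge \sigma_z^2+\frac{1}{\delta p}\bar f(\sigma^2)$, i.e. $\bar G(\sigma^2)\le 0$ for $\bar G(v):=\sigma_z^2+\frac{1}{\delta p}\bar f(v)-v$. Lemma \ref{pp1} (\ref{item:ff-decrease}) says $f_\chi(v)/v$ is decreasing, and the pointwise infimum of a family of decreasing functions is decreasing, so $\bar f(v)/v$ is decreasing; consequently $\bar G(v)=v\big(\sigma_z^2/v+\frac{1}{\delta p}\bar f(v)/v-1\big)$ is strictly positive for $v<v_*$ and strictly negative for $v>v_*$ (using $\sigma_z>0$ for strict monotonicity of $\sigma_z^2/v$). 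Therefore $\bar G(\sigma^2)\le 0$ forces $\sigma^2\ge v_*$, and taking the infimum over $\gamma$ gives $\min_{\gamma>0}\sigma^2\ge v_*$. This direction needs neither the calibration equation nor any feasibility check.

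For the upper bound I must exhibit a tuning that attains $v_*$. Since $v_*=\sigma_z^2+\frac{1}{\delta p}\inf_{\chi>0}f_{\chi}(v_*)$, I first argue the infimum is attained: $\chi\mapsto f_\chi(v_*)$ is continuous on $(0,\infty)$, with $f_\chi(v_*)\to v_* p$ as $\chi\to 0^+$ (no shrinkage, using $\eta(\uv;0)=\uv$) and $f_\chi(v_*)\to\|\xv\|_2^2$ as $\chi\to\infty$ (full shrinkage to $\bm{0}$), so in the non-degenerate above-phase-transition regime of interest the infimum is achieved at some finite $\chi_*>0$. Then $v_*=\sigma_z^2+\frac{1}{\delta p}f_{\chi_*}(v_*)$ means $v_*$ is the fixed point of the single-threshold equation at $\chi_*$, so setting $\sigma=\sqrt{v_*}$ and defining $\gamma_*$ through the calibration \eqref{eq:state-evolution:again2} yields a pair $(\sigma,\chi_*)$ solving the full system, whence $e_{\lambdav}(\gamma_*,\sigma_z)=\delta(v_*-\sigma_z^2)$ and $\min_{\gamma>0}\sigma^2\le v_*$. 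It remains to check $\gamma_*>0$, i.e. $\frac{1}{\delta\sigma p}\mathbb{E}\langle\eta(\xv+\sigma\hv;\sigma\chi_*),\hv\rangle<1$; by Stein's identity this inner product equals $\sigma$ times the expected divergence of the proximal map (also the quantity appearing throughout Lemma \ref{lemma:solution-analysis}), which above phase transition stays strictly below $\delta$.

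Combining the two bounds gives $e_{\lambdav}(\gamma^*_{\lambdav},\sigma_z)=\delta(v_*-\sigma_z^2)=\delta((\sigma^*)^2-\sigma_z^2)$, which is \eqref{optimal:amse:formula}. The routine inputs are the monotonicity facts of Lemma \ref{pp1} (\ref{item:ff-increase})--(\ref{item:ff-decrease}) and the homogeneity $\eta(c\uv;c\gamma)=c\,\eta(\uv;\gamma)$ used for the $v\to 0,\infty$ and $\chi\to 0,\infty$ limits. I expect the genuine obstacle to be the feasibility step in the upper bound: one must guarantee that the $\chi_*$ minimizing the right-hand side of \eqref{fixed:point:eq:optimal} is realized by a strictly positive calibrated $\gamma_*$ (so that $\gamma^*_{\lambdav}$ actually exists in $(0,\infty)$ rather than degenerating to the zero estimator $\chi\to\infty$), which is exactly where the hypothesis $\delta>M_{\lambdav}$ enters and where the divergence/non-expansiveness estimate must be invoked carefully.
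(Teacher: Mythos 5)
Your proof follows essentially the same route as the paper's: the lower bound comes from the strict monotonicity of $v\mapsto \sigma_z^2/v+\frac{1}{\delta p}\bar f(v)/v$ (the paper's function $G(\sigma)$, obtained from Lemma \ref{pp1} part (\ref{item:ff-decrease}) and the fact that an infimum of decreasing functions is decreasing), and the upper bound is attained by taking a minimizing $\chi_*$ at $\sigma=\sigma^*$ and calibrating $\gamma_*$ through \eqref{eq:state-evolution:again2}, exactly as in \eqref{first:pick}--\eqref{second:pick}. The only difference is that you explicitly flag the attainment of the infimum over $\chi$ and the positivity of the calibrated $\gamma_*$, points the paper's proof passes over silently; otherwise the two arguments coincide.
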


\begin{proof}
We first prove \eqref{fixed:point:eq:optimal} has a unique solution. Denote
\[
G(\sigma)=\frac{\sigma_z^2}{\sigma^2}+\frac{1}{\delta p} \inf_{\chi>0}\mathbb{E}\|\eta(\xv/\sigma+ h;  \chi)-\xv/\sigma\|_2^2.
\]
Then \eqref{fixed:point:eq:optimal} is equivalent to $G(\sigma)=1$. Lemma \ref{pp1} Part (\ref{item:ff-decrease}) shows that $\mathbb{E}\|\eta(\xv/\sigma+ h;  \chi)-\xv/\sigma\|_2^2$ is a decreasing function of $\sigma$ over $(0,\infty)$. As a result, so is $\inf_{\chi>0}\mathbb{E}\|\eta(\xv/\sigma+ h;  \chi)-\xv/\sigma\|_2^2$. Hence $G(\sigma)$ is a continuous and strictly decreasing function for $\sigma \in (0,\infty)$. Moreover,
\begin{align*}
0\leq G(\sigma)\leq \frac{\sigma_z^2}{\sigma^2}+\frac{1}{\delta p}\lim_{\chi\rightarrow \infty}\mathbb{E}\|\eta(\xv/\sigma+ h;  \chi)-\xv/\sigma\|_2^2=\frac{\sigma_z^2}{\sigma^2}+\frac{\|\xv\|_2^2}{\delta p \sigma^2},
\end{align*}
yielding that $\lim_{\sigma\rightarrow \infty}G(\sigma)=0$. It is also clear that $\lim_{\sigma \rightarrow 0}G(\sigma)=+\infty$. Thus, $G(\sigma)=1$ has a unique solution $\sigma=\sigma^*$. It remains to prove \eqref{optimal:amse:formula}. Consider any given $\gamma>0$. We have 
\begin{align*}
    e_{\lambdav}(\gamma,\sigma_z)=\frac{1}{p}\mathbb{E}\|\eta(\xv+\bar{\sigma} \hv; \bar{\sigma} \bar{\chi})-\xv\|_2^2=\delta(\bar{\sigma}^2-\sigma_z^2),
\end{align*}
with $(\bar{\sigma},\bar{\chi})$ being the solution to \eqref{eq:state-evolution} and \eqref{eq:calibration}. Equation \eqref{eq:state-evolution} can be rewritten as 
\[
1= \frac{\sigma_z^2}{\bar{\sigma}^2}+\frac{1}{\delta p}\mathbb{E}\|\eta(\xv/\bar{\sigma}+ h;  \bar{\chi})-\xv/\bar{\sigma}\|_2^2,
\]
with which we obtain
\[
G(\sigma^*)=1\geq  \frac{\sigma_z^2}{\bar{\sigma}^2}+\frac{1}{\delta p} \inf_{\chi >0}\mathbb{E}\|\eta(\xv/\bar{\sigma}+ h; \chi)-\xv/\bar{\sigma}\|_2^2=G(\bar{\sigma}),
\]
which implies that $\sigma^*\leq \bar{\sigma}$ due to the monotonicity of $G(\sigma)$. Hence, 
\[
\delta((\sigma^*)^2-\sigma_z^2) \leq \delta(\bar{\sigma}^2-\sigma_z^2) =e_{\lambdav}(\gamma,\sigma_z), \quad \forall \gamma >0.
\]
Finally, we need show the above lower bound is attained by $e_{\lambdav}(\gamma^*,\sigma_z)$ for some value $\gamma^*$. Define 
\begin{align}
\label{first:pick}
\chi^*=\argmin_{\chi>0} \mathbb{E}\|\eta(\xv/\sigma^*+ h;  \chi)-\xv/\sigma^*\|_2^2.
\end{align}
Note that $\chi^*$ might not be unique and it can be any minimizer. We then pick the following tuning:
\begin{align}
\label{second:pick}
\gamma^*=\chi^*\sigma^*  \big(1 - \frac{1}{\delta p}\mathbb{E}[
    \nabla \cdot \eta(\xv + \sigma^*\hv; \sigma^*\chi^*)]\big).
\end{align}
Based on \eqref{first:pick} and \eqref{second:pick} together with the result $G(\sigma^*)=1$, it is straightforward to verify that 
\[
e_{\lambdav}(\gamma^*,\sigma_z)=\delta((\sigma^*)^2-\sigma_z^2).
\]
\end{proof}

Next we prove Theorem \ref{thm:noiseless-phase-transition} and Proposition \ref{lem:best_slope_small_noise}. Therein we need to first characterize the connection between $M_{\lambdav}$ and \eqref{eq:state-evolution:again1}, of which the proof is delayed to Lemma \ref{pp4} after we finish the main proof.

\subsubsection{Proof of Theorem \ref{thm:noiseless-phase-transition} and Proposition \ref{lem:best_slope_small_noise}} \label{sssec:proof-low-noise}
In this section we prove the results in the low noise scenario.

\begin{proof}[Proof of Theorem \ref{thm:noiseless-phase-transition}]
    Lemma \ref{optimal:fixed:point:eq} proves that $e_{\lambdav}(\gamma^*_{\lambdav},\sigma_z)=\delta((\sigma^*)^2-\sigma_z^2)$ with $\sigma=\sigma^*$ being the solution to the equation
    \begin{align}
    \label{fixed:point:eq:optimal:before}
    \sigma^2=\sigma_z^2+\frac{1}{\delta p} \inf_{\chi>0}\mathbb{E}\|\eta(\xv+\sigma h; \sigma \chi)-\xv\|_2^2.
    \end{align}
    The first part of the proof is to analyze $\sigma^*$ when $\sigma_z\rightarrow 0$.
    
    \begin{enumerate} [(i)]
    \item \emph{The case $\delta<M_{\lambdav}$.}
    We prove that in this case $\lim_{\sigma_z\rightarrow 0} e_{\lambdav} (\gamma^*_{\lambdav}, \sigma_z)>0$. It is equivalent to show $\lim_{\sigma_z\rightarrow 0}\sigma^*>0$. Suppose this is not true. Then from \eqref{fixed:point:eq:optimal:before} we obtain 
    \[
    \frac{1}{p}\inf_{\chi>0} \mathbb{E}\|\xv/\sigma^*+h;\chi)-\xv/\sigma^*\|_2^2<\delta.
    \]
    According to lemma \ref{pp4}, letting $\sigma_z\rightarrow 0$ on both sides of the above inequality yields that $M_{\lambdav}\leq \delta$. This is a contradiction. 
    \item \emph{The case $\delta>M_{\lambdav}$.} Lemma \ref{pp1} Part (\ref{item:ff-decrease}) together with \eqref{fixed:point:eq:optimal:before} gives us that
    \begin{align}
    \frac{(\sigma^*)^2-\sigma_z^2}{(\sigma^*)^2}&=\frac{1}{\delta p}\inf_{\chi>0} \mathbb{E}\|\xv/\sigma^*+\hv;\chi)-\xv/\sigma^*\|_2^2 \label{reformulate:se} \\
    &\leq \frac{1}{\delta p} \lim_{\sigma\rightarrow 0}\inf_{\chi>0} \mathbb{E}\|\xv/\sigma+\hv;\chi)-\xv/\sigma\|_2^2=\frac{M_{\lambdav}}{\delta}, \nonumber
    \end{align}
    where the last equality is due to Lemma \ref{pp4}. Hence,
    \[
    0\leq (\sigma^*)^2\leq \frac{\sigma_z^2}{1-M_{\lambdav} / \delta}\rightarrow 0, \quad \text{as } \sigma_z\rightarrow 0.
    \]
    Now given that $\lim_{\sigma_z\rightarrow 0}\sigma^*=0$, letting $\sigma_z\rightarrow 0$ on both sides of \eqref{reformulate:se} delivers 
    \[
    \lim_{\sigma_z\rightarrow 0}\frac{(\sigma^*)^2}{\sigma_z^2}=\frac{\delta}{\delta - M_{\lambdav}},
    \]
    leading to $\lim_{\sigma_z\rightarrow 0} \frac{e_{\lambdav} (\gamma^*_{\lambdav},\sigma_z)} {\sigma^2_z}
    = \frac{\delta M_{\lambdav}}{\delta -M_{\lambdav}}$.
    \end{enumerate}
\end{proof}

\begin{proof}[Proof of Proposition \ref{lem:best_slope_small_noise}]
    For this part of the proof, we show that the quantity 
    \[
        M_{\lambdav}
        = \inf_{\alpha>0}\bigg\{\underbrace{k + \alpha^2 \sum_{i=1}^k \lambda_i^2+\mathbb{E}\|\eta(\hv_{[k+1:p]}; \alpha, \lambdav_{[k+1:p]})\|_2^2}_{:=h(\lambdav,\alpha)}\bigg\}
    \]
    is minimized when $\lambda_1=\cdots=\lambda_p$. Define the set
    \[
    \mathcal{W}_{\bar{\lambda}}=\{\lambdav\in \mathbb{R}^p: \lambda_1\geq \lambda_2 \geq \cdots \geq \lambda_k\geq \bar{\lambda}\geq \lambda _{k+1}\geq \cdots \geq \lambda_p \geq 0\}.
    \]
    For any $\lambdav\in \mathcal{W}_{\bar{\lambda}}$, it is clear that $\sum_{i=1}^k\lambda_i^2\geq \bar{\lambda}^2$. Moreover, according to Lemma \ref{p1}, 
    \[
    \eta(\hv_{[k+1:p]}; \alpha, \lambdav_{[k+1:p]}) = \hv_{[k+1:p]} - \projv_{\tilde{\mathcal{D}}_\alpha}(\hv_{[k+1:p]}), 
    \]
    where $\tilde{\mathcal{D}}_\alpha \subset \mathbb{R}^{p-k}$ is the dual SLOPE norm ball of radius $\alpha$ with the weight sequence $\lambdav_{[k+1:p]}$. Clearly, among the choices of $\lambdav \in \mathcal{W}_{\bar{\lambda}}$, $\tilde{\mathcal{D}}_{\alpha}$ becomes the largest convex set $\tilde{\mathcal{D}}_\alpha$ when $\lambda_i = \bar{\lambda}$, $i=k+1,\ldots, p$, which in turn implies that the residual norm $\|\eta(\hv_{[k+1:p]}; \alpha; \lambdav_{[k+1:p]})\|_2$ is minimized with the same selection. We therefore have shown that
    \begin{equation*}
        \min_{ \lambdav \in \mathcal{W}_{\bar{\lambda}}}h(\lambdav,\alpha)
        = k + k\alpha^2 \bar{\lambda}^2  + (p-k) \mathbb{E}\eta_{\ell_1}^2(z;\alpha \bar{\lambda}),
    \end{equation*}
    where $\eta_{\ell_1}(z;\alpha\bar{\lambda})=\mbox{sign}(z)(|z|-\alpha \bar{\lambda})_+$ is
    the soft thresholding operator and $z\sim \mathcal{N}(0,1)$. The equation above holds for any $\bar{\lambda} \geq 0$, we thus can conclude that
    \begin{align*}
        \inf_{\lambdav:\lambda_1\geq \cdots \geq \lambda_p \geq 0}M_{\lambdav} &=\inf_{\alpha>0, \bar{\lambda}\geq 0}\inf_{\lambdav \in \mathcal{W}_{\bar{\lambda}}} h(\lambdav,\alpha)
        =\inf_{\alpha>0, \bar{\lambda}\geq 0} \bigg\{k + k\alpha^2 \bar{\lambda}^2  + (p-k) \mathbb{E}\eta_{\ell_1}^2(z;\alpha \bar{\lambda}) \bigg\} \\
       &= \inf_{\alpha>0} \bigg\{k + k\alpha^2  + (p-k) \mathbb{E}\eta_{\ell_1}^2(z;\alpha) \bigg\},
    \end{align*}
    which is precisely the $M_{\lambdav}$ when all the elements of $\lambdav$ are equal. 
\end{proof}

\begin{lemma} \label{pp4}
Suppose $\xv \in \mathbb{R}^p$ does not have non-zero tied components with $\|\xv\|_0 = k$. Then, it holds that
\begin{align*}
  \lim_{v \rightarrow 0} \inf_{\chi >0}\mathbb{E}\|\eta(\xv/\sqrt{v}+\hv;\chi)-\xv/\sqrt{v}\|_2^2=M_{\lambdav}.
\end{align*}
\end{lemma}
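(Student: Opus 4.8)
The plan is to show that the \emph{optimally-tuned} risk converges to the explicit quantity $M_{\lambdav}$ by first identifying the pointwise (in $\chi$) limit of the integrand and then upgrading this to a statement about the infimum over $\chi$. Write $F_v(\chi):=\mathbb{E}\|\eta(\xv/\sqrt{v}+\hv;\chi)-\xv/\sqrt{v}\|_2^2$ and let $h(\lambdav,\chi):=k+\chi^2\sum_{i=1}^k\lambda_i^2+\mathbb{E}\|\eta(\tilde{\hv};\chi,\lambdav_{[k+1:p]})\|_2^2$ be the function from \eqref{eq:M-lambda} and the proof of Proposition \ref{lem:best_slope_small_noise}, so that $\inf_{\chi>0}h(\lambdav,\chi)=M_{\lambdav}$. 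The goal is to prove $\lim_{v\to 0}\inf_{\chi>0}F_v(\chi)=\inf_{\chi>0}h(\lambdav,\chi)$, which I would split into a trivial upper bound on the $\limsup$ and a genuine lower bound on the $\liminf$.

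First I would establish that $F_v(\chi)\to h(\lambdav,\chi)$ for each fixed $\chi>0$. Let $S$ denote the support of $\xv$, with $|S|=k$, and set $\uv=\xv/\sqrt{v}+\hv$. As $v\to 0$ the support coordinates satisfy $|u_i|=|x_i|/\sqrt{v}\,(1+o(1))\to\infty$ while off-support coordinates remain $u_i=h_i=O(1)$, so for a.e. $\hv$ and all small $v$ the $k$ largest entries of $|\uv|$ are exactly those in $S$, ordered by $|x_i|$; here the hypothesis of \emph{no tied non-zero components} is essential, as it makes the induced ranks $r_i\in\{1,\dots,k\}$ strict and unique. Invoking the sort--threshold--pool form of $\eta$ (Lemma \ref{property:primal}), no pooling crosses the $\Theta(1/\sqrt{v})$ gap between ranks $k$ and $k+1$, and none occurs among the well-separated top $k$; the operator therefore decouples into $\eta_i=\sign(u_i)(|u_i|-\chi\lambda_{r_i})$ on $S$ and $\eta_{S^c}=\eta(\hv_{S^c};\chi,\lambdav_{[k+1:p]})$. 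Hence $\eta_i-x_i/\sqrt{v}=h_i-\sign(x_i)\chi\lambda_{r_i}$ on $S$, and
\[
\|\eta(\uv;\chi)-\xv/\sqrt{v}\|_2^2\to\sum_{i\in S}\bigl(h_i-\sign(x_i)\chi\lambda_{r_i}\bigr)^2+\|\eta(\hv_{S^c};\chi,\lambdav_{[k+1:p]})\|_2^2\quad\text{a.s.}
\]
Since $\|\eta(\uv;\chi)-\uv\|_2=\|\projv_{\mathcal{D}_\chi}(\uv)\|_2\le\chi\|\lambdav\|_2$ (Lemmas \ref{p1} and \ref{lemma:dual-ball-diam}), the integrand is dominated by $2\|\hv\|_2^2+2\chi^2\|\lambdav\|_2^2$, so DCT and $\mathbb{E}(h_i-\sign(x_i)\chi\lambda_{r_i})^2=1+\chi^2\lambda_{r_i}^2$ give $F_v(\chi)\to h(\lambdav,\chi)$. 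This immediately yields $\limsup_{v\to0}\inf_\chi F_v(\chi)\le\inf_{\chi_0}F_v(\chi_0)\to\inf_{\chi_0}h(\lambdav,\chi_0)=M_{\lambdav}$.

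For the matching lower bound I would use Fatou's lemma along a subsequence. Choose near-minimizers $\chi_v$ with $F_v(\chi_v)\le\inf_\chi F_v(\chi)+v$, pass to a subsequence realizing $\liminf_v\inf_\chi F_v$, and extract $\chi_v\to\chi_\infty\in[0,\infty]$. The case $\chi_\infty=\infty$ is ruled out because $F_v(\chi_v)$ is bounded (by $h(\lambdav,\chi_0)+o(1)$), whereas the top-support residual $(h_{i_1}-\sign(x_{i_1})\chi_v\lambda_1)^2\to\infty$ a.s. (as $\lambda_1>0$ by Assumption \ref{assum:weight}) would force $F_v(\chi_v)\to\infty$ via Fatou. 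For $\chi_\infty\in(0,\infty)$, the same decoupling plus continuity of $\eta$ in $(\uv,\chi)$ gives a.s. convergence of the integrand to the expression above evaluated at $\chi_\infty$, so Fatou yields $\liminf_v F_v(\chi_v)\ge h(\lambdav,\chi_\infty)\ge M_{\lambdav}$. For $\chi_\infty=0$, $\|\eta(\uv;\chi_v)-\xv/\sqrt{v}\|_2^2=\|\hv-\projv_{\mathcal{D}_{\chi_v}}(\uv)\|_2^2\to\|\hv\|_2^2$ a.s. (since $\|\projv_{\mathcal{D}_{\chi_v}}(\uv)\|_2\le\chi_v\|\lambdav\|_2\to0$), giving $\liminf\ge p\ge M_{\lambdav}$ because $M_{\lambdav}\le h(\lambdav,0)=p$. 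In every case $\liminf_v\inf_\chi F_v(\chi)\ge M_{\lambdav}$, which combined with the $\limsup$ bound closes the argument.

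The hard part will be the decoupling step: the SLOPE prox is globally coupled through its prefix-sum (dual-ball) constraints, and I must argue rigorously that in the large-signal limit it separates into a soft-threshold operator carrying weights $\lambda_{1:k}$ on the support and an independent sub-SLOPE with weights $\lambda_{k+1:p}$ off-support, with each support coordinate receiving the weight dictated by the $|x_i|$-ordering. This is precisely where the no-ties assumption is consumed, since ties would make the limiting rank assignment---and hence the term $\chi^2\sum_{i=1}^k\lambda_i^2$---ambiguous. The interchange of limit and infimum is a secondary obstacle, cleanly dispatched by the Fatou/subsequence scheme above; its only delicate points are excluding $\chi_\infty=\infty$ and verifying the joint a.s. convergence of the integrand when $v\to0$ and $\chi_v\to\chi_\infty$ vary simultaneously.
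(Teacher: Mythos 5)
Your proposal is correct, and while it shares the central ingredient with the paper --- the decoupling of the proximal operator in the large-signal limit into componentwise soft-thresholding with weights $\lambdav_{[1:k]}$ on the support and an independent sub-SLOPE with weights $\lambdav_{[k+1:p]}$ off the support (this is exactly the paper's Lemma \ref{lemma:bounding-mse-limit}, proved the same way, and it is also where the paper consumes the no-ties hypothesis) --- you handle the interchange of limit and infimum by a genuinely different route. The paper works with the exact minimizer $\chi(v)$, shows every subsequential limit must minimize $g(\alpha)=k+\alpha^2\|\lambdav_{[1:k]}\|_2^2+\mathbb{E}\|\eta(\hv_{[k+1:p]};\alpha,\lambdav_{[k+1:p]})\|_2^2$, and then computes $g'$ and proves strict convexity to get a \emph{unique} minimizer, which both validates the existence of $\lim_{v\to 0}\chi(v)$ and identifies the limit of the risk. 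Your $\limsup/\liminf$ sandwich sidesteps the convexity computation entirely: the upper bound is free from pointwise convergence at any fixed $\chi_0$, and the Fatou lower bound along subsequences of near-minimizers only needs $h(\lambdav,\chi_\infty)\geq\inf_\chi h(\lambdav,\chi)$, which is trivially true for \emph{any} subsequential limit. This is more economical for the lemma as stated; what the paper's argument buys in exchange is the stronger conclusion that the optimally-tuned $\chi(v)$ itself converges to the unique minimizer of $g$, a fact it uses elsewhere to identify $M_{\lambdav}$ with $M_{\lambdav}(\chi^*)$ under optimal tuning. One small point to tighten: in excluding $\chi_\infty=\infty$ you invoke the support residual $(h_{i_1}-\sign(x_{i_1})\chi_v\lambda_1)^2$, but that formula presumes the decoupling regime $\chi_v=o(1/\sqrt{v})$; if $\chi_v$ grows at least as fast as $1/\sqrt{v}$ the support coordinate may be annihilated and the residual is instead $x_{i_1}^2/v$. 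Both diverge, so the conclusion stands, but the cleaner uniform argument (the one the paper uses) is via the projection: either $\uv\in\mathcal{D}_{\chi_v}$ and $\|\projv_{\mathcal{D}_{\chi_v}}(\uv)\|_2=\|\uv\|_2\to\infty$, or a dual-ball constraint is tight and $\|\projv_{\mathcal{D}_{\chi_v}}(\uv)\|_2\geq\chi_v\lambda_1/\sqrt{p}\to\infty$, so in either case $\|\projv_{\mathcal{D}_{\chi_v}}(\uv)-\hv\|_2\to\infty$ and Fatou applies.
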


\begin{proof}
For any given $v>0$, define the optimal value for $\chi$ as
\begin{equation}\label{op:tune}
\chi(v) = \argmin_{\chi > 0} \mathbb{E}\|\eta(\xv/\sqrt{v}+\hv;\chi)-\xv/\sqrt{v}\|_2^2.
\end{equation}  
    When there are multiple solutions, we define $\chi(v)$ as the one with the smallest value. We first assume the limit $\lim_{v\rightarrow 0}\chi(v)=\alpha^*\in [0,\infty]$ exists, but will validate this assumption later. Recall the definition of the dual-norm ball $\mathcal{D}_\gamma$ of SLOPE norm in \eqref{eq:dual-ball}. Here, we consider the projection of $\xv/\sqrt{v} + \hv$ on $\mathcal{D}_{\chi(v)}$. Suppose $\alpha^* = \infty$. Since $\|\xv/\sqrt{v}+\hv\|_2\rightarrow \infty$ as $v \rightarrow 0$, we obtain
    \begin{equation*}
        \|\projv_{\mathcal{D}_{\chi(v)}}(\xv/\sqrt{v} + \hv)\|_2  \rightarrow \infty,
        \quad \text{as } v \rightarrow 0.
    \end{equation*}
    Hence, from Lemma \ref{p1} we conclude that as $v\rightarrow 0$, we have
\begin{equation*}
    \big \|\eta (\xv/\sqrt{v}+\hv;\chi(v))-\xv/\sqrt{v} \big \|_2
    =
   \big\|\projv_{\mathcal{D}_{\chi(v)}}  (\xv/\sqrt{v}+\hv ) -\hv \big\|_2 \rightarrow +\infty,
\end{equation*}
with which Fatou's lemma yields that
\begin{equation*}
    \varliminf_{v \rightarrow 0} \mathbb{E} \big\|\eta(\xv/\sqrt{v}+\hv;\chi(v))-\xv/\sqrt{v}\big \|_2^2
    \geq
    \mathbb{E} \varliminf_{v \rightarrow 0} \big\|\eta(\xv/\sqrt{v}+\hv;\chi(v))-\xv/\sqrt{v}\big \|_2^2 = +\infty.
\end{equation*}
This contradicts with the boundedness due to the definition of $\chi(v)$:
\begin{equation*}
\mathbb{E}\big \|\eta (\xv/\sqrt{v}+\hv;\chi(v))-\xv/\sqrt{v}\big \|_2^2 \leq \mathbb{E}\big \|\eta (\xv/\sqrt{v}+\hv ; 0)-\xv/\sqrt{v}\big \|_2^2=p.
\end{equation*}
Hence $\alpha^* \in [0,\infty)$ and $\chi(v)$ is bounded. Lemma \ref{lemma:dual-ball-diam} gives us that
\[
    \big \|\eta (\xv/\sqrt{v}+\hv;\chi(v))-\xv/\sqrt{v} \big \|^2_2
    \leq 2\chi^2(v) \|\lambdav\|^2_2 + 2\|\hv\|_2^2.
\]
Thus DCT enables us to obtain
\begin{align}
\lim_{v\rightarrow 0}\mathbb{E}\big \|\eta (\xv/\sqrt{v}+\hv;\chi(v))-\xv/\sqrt{v}\big \|_2^2=\mathbb{E}\lim_{v\rightarrow 0}\big \|\eta (\xv/\sqrt{v}+\hv;\chi(v))-\xv/\sqrt{v}\big \|_2^2.
\label{dct:limit}
\end{align}
To compute the limit on the right-hand side of the above equation, we apply Lemma \ref{lemma:bounding-mse-limit} and obtain that
\begin{equation} \label{optimal:amse:limit}
    \lim_{v\rightarrow 0}\mathbb{E}\big \|\eta (\xv/\sqrt{v}+\hv;\chi(v))-\xv/\sqrt{v}\big \|_2^2
    = k + (\alpha^*)^2\|\lambdav_{[1:k]}\|_2^2 + \mathbb{E}\|\eta(\hv_{[k+1:p]}; \alpha^*, \lambdav_{[k+1:p]})\|_2^2.
\end{equation}
Define $g(\alpha) := k + \alpha^2\|\lambdav_{[1:k]}\|_2^2 + \mathbb{E}\|\eta(\hv_{[k+1:p]}; \alpha, \lambdav_{[k+1:p]})\|_2^2$. Since $\chi(v)$ is defined as the optimal tuning, it has to hold that $\alpha=\alpha^*$ minimizes $g(\alpha)$. Finally, we need to prove the existence of $\lim_{v \rightarrow 0}\chi(v)$ that we assumed at the beginning of the proof. We take an arbitrarily convergent sequence $\{\chi(v_n)\}_{n=1}^{\infty}$ with $v_n \rightarrow 0$, as $n\rightarrow \infty$. Denote $\lim_{n\rightarrow \infty}\chi(v_n)=\tilde{\alpha}$. Note that the preceding arguments hold for any such sequence as well. Thus $\alpha=\tilde{\alpha}$ minimizes $g(\alpha)$ over $(0,\infty)$. The proof will be completed if we can show $g(\alpha)$ has a unique minimizer. According to Lemma \ref{lemma:prox-square-partial}, it is direct to compute 
\begin{align*}
    g'(\alpha)
    =& 2\alpha \sum_{i=1}^k\lambda_i^2-\frac{2}{\alpha}\mathbb{E}\big[\langle \eta(\hv_{[k+1:p]}; \alpha, \lambdav_{[k+1:p]}), \hv\rangle -\|\eta(\hv_{[k+1:p]}; \alpha, \lambdav_{[k+1:p]})\|_2^2 \big] \nonumber \\
    =& 2\alpha \sum_{i=1}^k\lambda_i^2 - 2\mathbb{E} \|\eta(\hv_{[k+1:p]}; \alpha, \lambdav_{[k+1:p]})\|_{\lambdav_{[k+1:p]}},
\end{align*}
where in the last equality we applied Lemma \ref{slope:prop} (\ref{lemma:item:prox-identity1}). It is not hard to see that $g'(\alpha)$ is increasing with $g'(0) = -2\mathbb{E}\|\eta(\hv_{[k+1:p]}; 0, \lambdav_{[k+1:p]})\|_{\lambdav_{[k+1:p]}}$ and $g'(\infty) = \infty$. Thus $g(\alpha)$ is strictly convex and has a unique minimizer.
\end{proof}

\subsubsection{Proof of Theorem \ref{thm:large-noise-ridge-better}}
\label{proof:of:theorem:large:noise}

According to Lemma \ref{optimal:fixed:point:eq}, the key step is to analyze the equation 
\begin{align}
\label{optimal:fixed:one:more}
\sigma^2=\sigma_z^2+\frac{1}{\delta p} \inf_{\chi>0}\mathbb{E}\|\eta(\xv+\sigma h; \sigma \chi)-\xv\|_2^2,
\end{align}
when $\sigma_z\rightarrow \infty$. Let $\sigma=\sigma^*$ be the solution to the above equation. First observe that $\forall \sigma>0$,
\begin{align}
\label{optimal:tuning:consequence}
\inf_{\chi>0}\mathbb{E}\|\eta(\xv+\sigma h; \sigma \chi)-\xv\|_2^2\leq \lim_{\chi \rightarrow \infty} \mathbb{E}\|\eta(\xv+\sigma h; \sigma \chi)-\xv\|_2^2=\|\xv\|_2^2.
\end{align}
This result combined with \eqref{optimal:fixed:one:more} yields
\begin{align*}
1\leq \frac{(\sigma^*)^2}{\sigma_z^2}\leq 1 +\frac{\|\xv\|_2^2}{\delta p \sigma_z^2},
\end{align*}
from which letting $\sigma_z\rightarrow \infty$ we obtain
\begin{align}
\label{limit:equal:one}
\lim_{\sigma_z\rightarrow \infty}\frac{(\sigma^*)^2}{\sigma_z^2}=1.
\end{align}
Moreover, adopting the notation from Lemma \ref{large:noise:optimal:tuning:value} we know
\begin{align*}
    e_{\lambdav}(\gamma^*_{\lambdav},\sigma_z)-\frac{\|\xv\|_2^2}{p}=\frac{1}{p}\Big[\mathbb{E}\|\eta(\xv+\sigma^*\hv;\sigma^*\chi(\sigma^*))\|_2^2-2\mathbb{E}\langle\eta(\xv+\sigma^*\hv;\sigma^*\chi(\sigma^*)),\xv \rangle\Big]
    :=\Delta(\sigma^*).
\end{align*}
Since $\Delta(\sigma^*) \leq 0$ implied by \eqref{optimal:tuning:consequence}, it holds that
\begin{align} \label{risk:upper:bound:form}
    |\Delta(\sigma^*)|
    \leq& \frac{2}{p}\mathbb{E}\langle\eta(\xv+\sigma^*\hv;\sigma^*\chi(\sigma^*)),\xv \rangle
    \leq \frac{2\sigma^*\|\xv\|_2}{\sqrt{p}} \|\eta(\xv/\sigma^*+\hv;\chi(\sigma^*))\|_{\mathcal{L}_2} \nonumber \\
    \leq& \frac{2\sigma^*\|\xv\|_2}{\sqrt{p}} \Big[\frac{1}{p} \sum_{i=1}^p \mathbb{E} (|x_i/\sigma^*+h_i| - \chi(\sigma^*) \|\lambdav\|_2^2 / p)_+^2 \Big]^{1/2},
\end{align}
where the third inequality is due to Lemma \ref{slope:prop} (\ref{item:basic-prox-bound}). As we will show in Lemma \ref{large:noise:optimal:tuning:value}, $\chi(\sigma^*) = \Omega(\sigma^*)$. This guarantees that as $\sigma^* \rightarrow \infty$, we will have $\frac{\|\xv\|_\infty}{\sigma^*} \leq \frac{\chi(\sigma^*) \|\lambdav\|_2^2}{2p}$. Using Gaussian tail inequality in Lemma \ref{lemma:numeric_bound}, it is hence straightforward to calculate that for each $i=1,\ldots, p$, as $\sigma^*\rightarrow \infty$,
\begin{align*}
    \mathbb{E}(|x_i/\sigma^*+h_i|-\chi(\sigma^*)\|\lambdav\|_2^2 / p)_+^2
    \leq \mathbb{E}[|h_i| - (\chi(\sigma^*)\|\lambdav\|_2^2 / p - |x_i|/\sigma^*)]_+^2
    \leq O(e^{-\frac{1}{4}\chi^2(\sigma^*)\|\lambdav\|_2^4 / p^2}).
\end{align*}
Based on Lemma \ref{large:noise:optimal:tuning:value}, the above result together with \eqref{limit:equal:one} and \eqref{risk:upper:bound:form} completes the proof.

\begin{lemma} \label{large:noise:optimal:tuning:value}
Suppose $\xv \neq \bm{0}$. Define 
\[
\chi(\sigma) = \argmin_{\chi > 0} \mathbb{E}\|\eta(\xv/\sigma+\hv;\chi)-\xv/\sigma\|_2^2.
\]
It holds that 
\[
\chi(\sigma)=\Omega(\sigma), \quad \mbox{~as~}\sigma \rightarrow \infty.
\]
\end{lemma}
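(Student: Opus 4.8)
The plan is to study the stationarity of the objective $F_\sigma(\chi):=\mathbb{E}\|\eta(\xv/\sigma+\hv;\chi)-\xv/\sigma\|_2^2$ in the tuning $\chi$ and to show that $F_\sigma'(\chi)$ stays strictly negative for all $\chi\le c\sigma$, which forces the minimizer $\chi(\sigma)$ to exceed $c\sigma$. Writing $\uv=\xv/\sigma+\hv$ and $\eta=\eta(\uv;\chi)=\uv-\projv_{\mathcal{D}_\chi}(\uv)$, I would first record the smoothness of $F_\sigma$ in $\chi$ (as in Lemma \ref{pp1}, via dominated convergence) and then compute $F_\sigma'(\chi)$. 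Splitting $\|\eta-\xv/\sigma\|_2^2=\|\eta\|_2^2-2\langle\eta,\xv/\sigma\rangle+\|\xv\|_2^2/\sigma^2$ and using the identity $\langle\eta,\uv-\eta\rangle=\chi\|\eta\|_{\lambdav}$ of Lemma \ref{slope:prop} (\ref{lemma:item:prox-identity1}) together with Lemma \ref{lemma:prox-square-partial}, one gets $\frac{d}{d\chi}\mathbb{E}\|\eta\|_2^2=-2\mathbb{E}\|\eta\|_{\lambdav}$, so that
\[ F_\sigma'(\chi)=-2\mathbb{E}\|\eta\|_{\lambdav}-2\mathbb{E}\langle\partial_\chi\eta,\xv/\sigma\rangle. \]
Since the projection onto the polytope $\mathcal{D}_\chi$ is piecewise linear, $\partial_\chi\eta=-\tfrac1\chi(\Iv-\Jv)\projv_{\mathcal{D}_\chi}(\uv)$ almost everywhere, with $\Jv$ the (orthogonal) Jacobian of $\projv_{\mathcal{D}_1}$; in particular $\partial_\chi\eta$ vanishes on $\{\eta=\bm{0}\}$, and using $\|\projv_{\mathcal{D}_\chi}(\uv)\|_2\le\chi\|\lambdav\|_2$ from Lemma \ref{lemma:dual-ball-diam} it satisfies $\|\partial_\chi\eta\|_2\le\|\lambdav\|_2\,\mathbb{1}\{\eta\neq\bm{0}\}$.

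This gives, by Cauchy--Schwarz, the bias bound $|\mathbb{E}\langle\partial_\chi\eta,\xv/\sigma\rangle|\le\frac{\|\lambdav\|_2\|\xv\|_2}{\sigma}\mathbb{P}(\eta\neq\bm{0})$, whence
\[ F_\sigma'(\chi)\le 2\mathbb{P}(\eta\neq\bm{0})\Big(\frac{\|\lambdav\|_2\|\xv\|_2}{\sigma}-\frac{\mathbb{E}\|\eta\|_{\lambdav}}{\mathbb{P}(\eta\neq\bm{0})}\Big). \]
Everything then reduces to the matching lower bound
\[ \mathbb{E}\|\eta\|_{\lambdav}\ \ge\ \frac{c}{\chi}\,\mathbb{P}(\eta\neq\bm{0}),\qquad\text{equivalently}\qquad \mathbb{E}\big[\|\eta\|_{\lambdav}\,\big|\,\eta\neq\bm{0}\big]\ \ge\ \frac{c}{\chi}, \]
for a constant $c=c(\lambdav,p)>0$ valid once $\chi$ exceeds a fixed threshold $\chi_0$. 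Granting this, for $\chi\in[\chi_0,c'\sigma]$ the bracket above is negative as soon as $c'\le c/(\|\lambdav\|_2\|\xv\|_2)$, while for $\chi\in(0,\chi_0]$ one has $\mathbb{E}\|\eta\|_{\lambdav}\ge\mathbb{E}\|\eta(\xv/\sigma+\hv;\chi_0)\|_{\lambdav}\to\mathbb{E}\|\eta(\hv;\chi_0)\|_{\lambdav}>0$ as $\sigma\to\infty$ (monotonicity in $\chi$ plus dominated convergence), whereas the bias term is $O(1/\sigma)$, so $F_\sigma'<0$ there too for large $\sigma$. Because $\mathbb{P}(\eta\neq\bm{0})>0$ always (the Gaussian $\hv$ charges the complement of the bounded set $\mathcal{D}_\chi$), $F_\sigma$ is strictly decreasing on $(0,c'\sigma]$; hence its minimizer satisfies $\chi(\sigma)\ge c'\sigma$, i.e. $\chi(\sigma)=\Omega(\sigma)$.

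The hard part is the lower bound $\mathbb{E}[\|\eta\|_{\lambdav}\mid\eta\neq\bm{0}]\ge c/\chi$, and its rate of $1/\chi$ is exactly what produces \emph{linear} (rather than merely $\sqrt{\log\sigma}$-type) growth. Its scale is governed by the Gaussian overshoot: conditional on $\uv\notin\mathcal{D}_\chi$, the amount by which $\uv$ exceeds the boundary, whose relevant threshold is of order $\chi\|\lambdav\|_2^2/p\ge\kappa_7\chi$ by Assumption \ref{assum:weight} and the coordinatewise bound $|\eta_i|\le(|u_i|-\chi\|\lambdav\|_2^2/p)_+$ of Lemma \ref{slope:prop} (\ref{item:basic-prox-bound}), behaves like an exponential of rate $\Theta(\chi)$, so its conditional mean is $\Theta(1/\chi)$; the Mills-ratio estimates in Lemma \ref{lemma:numeric_bound} make this quantitative. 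The delicate issue, absent in the separable LASSO case, is to translate this overshoot into a lower bound on the non-separable quantity $\|\eta\|_{\lambdav}=\sum_i\lambda_i|\eta|_{(i)}$ uniformly over which partial-sum constraint of $\mathcal{D}_\chi$ is binding, so that the \emph{same} event $\{\eta\neq\bm{0}\}$ controls both sides of the comparison and no mismatch between the threshold scales $\lambda_1$ and $\|\lambdav\|_2^2/p$ arises. Once this conditional overshoot estimate is secured, the remainder is routine bookkeeping with the Gaussian tail bounds, after which the conclusion follows as above and feeds directly into the proof of Theorem \ref{thm:large-noise-ridge-better}.
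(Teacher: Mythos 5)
Your strategy—showing $F_\sigma'(\chi)<0$ for all $\chi\le c'\sigma$ via the identity $\frac{d}{d\chi}\mathbb{E}\|\eta\|_2^2=-2\mathbb{E}\|\eta\|_{\lambdav}$ and the bound $\|\partial_\chi\eta\|_2\le\|\lambdav\|_2\mathbb{1}\{\eta\neq\bm 0\}$—is a genuinely different route from the paper's. The paper does not touch the first-order condition in $\chi$ at all: it extracts the necessary condition $\mathbb{E}\|\eta(\xv/\sigma+\hv;\chi(\sigma))\|_2^2\le\frac{2}{\sigma}\mathbb{E}\langle\eta(\xv/\sigma+\hv;\chi(\sigma)),\xv\rangle$ by comparing the optimal value with the $\chi\to\infty$ limit $\|\xv\|_2^2/\sigma^2$, rewrites both sides as Gaussian integrals over a rescaled variable via $\eta(t\uv;t\gamma)=t\eta(\uv;\gamma)$, and invokes Laplace's approximation to show the ratio of the two sides scales as $\sigma/\chi(\sigma)$, which contradicts $\chi(\sigma)=o(\sigma)$. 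Your reductions up to the displayed inequality for $F_\sigma'(\chi)$ are correct and cleanly executed.

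However, as written the proposal is not a proof: the entire difficulty has been relocated into the unproven claim $\mathbb{E}\big[\|\eta\|_{\lambdav}\,\big|\,\eta\neq\bm 0\big]\ge c/\chi$ uniformly over $\chi\in[\chi_0,c'\sigma]$, and you say so yourself. This is not routine bookkeeping. The event $\{\eta\neq\bm 0\}=\{\uv\notin\mathcal{D}_\chi\}$ is an exit event from a polytope with exponentially many faces whose supporting hyperplanes sit at distances ranging from $\Theta(\chi\kappa_7)$ (by Lemma \ref{lemma:large-weights-proportional}) up to $\Theta(\chi\sqrt{p})$ (the $j=p$ constraint), and the conditional overshoot must be lower-bounded by $c/\chi$ no matter which partial-sum constraint is binding at the exit point; moreover the constant must be uniform in $\sigma$ because the center $\xv/\sigma$ of the Gaussian moves with $\sigma$. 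This conditional-overshoot estimate for the non-separable SLOPE ball is of essentially the same depth as the Laplace-approximation step the paper uses (arguably harder, since you need a lower bound on a conditional expectation rather than the leading order of a ratio of integrals), so the proposal cannot be accepted until it is supplied. A secondary, fixable point: the inequality $F_\sigma'(\chi)\le 2\mathbb{P}(\eta\neq\bm 0)\big(\frac{\|\lambdav\|_2\|\xv\|_2}{\sigma}-\frac{\mathbb{E}\|\eta\|_{\lambdav}}{\mathbb{P}(\eta\neq\bm 0)}\big)$ only yields strict negativity if you take $c'$ strictly below $c/(\|\lambdav\|_2\|\xv\|_2)$ and note $\mathbb{P}(\eta\neq\bm 0)>0$; you do address this, and the treatment of the regime $\chi\le\chi_0$ via monotonicity of $\|\eta\|_{\lambdav}$ in $\chi$ (which follows from Lemma \ref{differential:prop2}) is sound.
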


\begin{proof}
We first claim that $\chi(\sigma)\rightarrow \infty$, as $\sigma \rightarrow \infty$. Otherwise, consider a sequence $\sigma_n\rightarrow \infty$ such that $\chi(\sigma_n)\rightarrow \chi^*\in [0,\infty)$, as $n\rightarrow \infty$. Then Dominated Convergence Theorem enables us to compute
\[
\lim_{n\rightarrow\infty}\mathbb{E}\|\eta(\xv/\sigma_n+\hv;\chi(\sigma_n))-\xv/\sigma_n\|_2^2=\mathbb{E}\|\eta(h;\chi^*)\|_2^2>0.
\]
On the other hand, by the definition of $\chi(\sigma_n)$, we obtain
\begin{align*}
\lim_{n\rightarrow\infty}\mathbb{E}\|\eta(\xv/\sigma_n+\hv;\chi(\sigma_n))-\xv/\sigma_n\|_2^2 \leq \lim_{n\rightarrow \infty}\frac{\|\xv\|_2^2}{\sigma_n^2}=0.
\end{align*}
This is a contradiction. We next analyze the rate of $\chi(\sigma)$. As we have shown in \eqref{optimal:tuning:consequence}, $\mathbb{E}\|\eta(\xv/\sigma+\hv;\chi(\sigma))-\xv/\sigma\|_2^2 \leq \frac{1}{\sigma^2}\|\xv\|_2^2$, it holds that $\forall \sigma >0$,
\begin{align} \label{the:basic:fact}
    \mathbb{E}\|\eta(\xv/\sigma+\hv;\chi(\sigma))\|_2^2\leq \frac{2}{\sigma}\mathbb{E}\langle\eta(\xv/\sigma+\hv;\chi(\sigma)),\xv \rangle.
\end{align}
With a change of variables, we can rewrite the terms as
\begin{align*}
    \mathbb{E}\|\eta(\xv/\sigma+\hv;\chi(\sigma))\|_2^2
    =& \frac{\chi^{p+2}(\sigma)}{(2\pi)^{p / 2}}\int \|\eta(\hv;1)\|_2^2\cdot \exp\Big(-\frac{\chi^2(\sigma)}{2}\|\hv-\frac{\xv}{\sigma \chi(\sigma)}\|_2^2\Big)d\hv.\\
    \mathbb{E}\langle\eta(\xv/\sigma+\hv;\chi(\sigma)),\xv \rangle
    =& \frac{\chi^{p+1}(\sigma)}{(2\pi)^{p / 2}} \int \langle \eta(\hv;1), \xv\rangle \cdot \exp\Big(-\frac{\chi^2(\sigma)}{2}\|\hv-\frac{\xv}{\sigma \chi(\sigma)}\|_2^2\Big)d\hv.
\end{align*}
By Laplace's approximation of multi-dimensional integrals \citep{wong2001asymptotic}, we can conclude that
\[
    \frac{\mathbb{E}\|\eta(\xv/\sigma+\hv;\chi(\sigma))\|_2^2}{\mathbb{E}\langle\eta(\xv/\sigma+\hv;\chi(\sigma)),\xv \rangle} \propto \frac{\sigma}{\chi(\sigma)}, \quad \text{as }\sigma \rightarrow \infty.
\]
Therefore, if $\chi(\sigma)=o(\sigma)$, the above result will contradict with \eqref{the:basic:fact}.
\end{proof}

\subsection{Basic properties of the proximal operator of SLOPE norm} \label{ssec:preliminary}
In this section, we prove various useful properties related to the proximal operator $\eta$ which is defined in \eqref{primal:prox}. The first property is a dual characterization of the primal definition of $\eta$.

\begin{lemma} \label{p1}
    The primal convex problem \eqref{primal:prox} has the dual form
    \begin{equation} \label{intro2}
        \vv^* \in \argmin_{\vv \in \mathcal{D}_{\gamma}} \|\uv- \vv\|_2^2,  
    \end{equation}
    where $\mathcal{D}_\gamma$ is defined in \eqref{eq:dual-ball}. Furthermore, strong duality holds, and the primal and dual solution pair $(\eta(\uv;\gamma), \vv^*)$ is unique and satisfies
    \begin{equation*}
        \vv^*=\uv-\eta(\uv;\gamma).
    \end{equation*}
\end{lemma}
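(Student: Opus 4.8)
The plan is to recognize the primal problem \eqref{primal:prox} as computing the proximal operator of the scaled norm $f(\xv) := \gamma\|\xv\|_{\lambdav}$, and to obtain its dual form through the Moreau decomposition. Since $\|\cdot\|_{\lambdav}$ is a genuine norm \citep{bogdan2015slope}, $f$ is a closed proper convex function, and the primal objective $\frac{1}{2}\|\uv-\xv\|_2^2 + f(\xv)$ is strongly convex in $\xv$, so $\eta(\uv;\gamma)$ is well defined and unique. The first step is to compute the convex conjugate of $f$. For any norm $\|\cdot\|$ with dual norm $\|\cdot\|_*$, a standard calculation gives $(\gamma\|\cdot\|)^*(\vv) = \sup_{\xv}\{\langle\vv,\xv\rangle - \gamma\|\xv\|\} = \mathbb{I}_{\{\|\vv\|_*\le\gamma\}}(\vv)$; applied here this yields $f^*(\vv) = \mathbb{I}_{\mathcal{D}_\gamma}(\vv)$ with $\mathcal{D}_\gamma = \{\vv : \|\vv\|_{\lambdav*}\le\gamma\}$.

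Next I would establish the explicit description of $\mathcal{D}_\gamma$ claimed in \eqref{eq:dual-ball}, namely the equivalence of $\|\vv\|_{\lambdav*}\le\gamma$ with the partial-sum inequalities $\sum_{i=1}^j|\vv|_{(i)}\le\gamma\sum_{i=1}^j\lambda_i$ for all $1\le j\le p$. By definition $\|\vv\|_{\lambdav*} = \sup_{\|\xv\|_{\lambdav}\le1}\langle\vv,\xv\rangle$; a rearrangement argument (align the signs of $\xv$ with those of $\vv$ and sort both coordinatewise by magnitude) reduces this to a linear program in the sorted magnitudes. Solving it gives $\|\vv\|_{\lambdav*} = \max_{1\le j\le p}\frac{\sum_{i=1}^j|\vv|_{(i)}}{\sum_{i=1}^j\lambda_i}$: the ``$\le$'' direction follows from Abel summation applied to the two sorted sequences, while the ``$\ge$'' direction follows by exhibiting the maximizing $\xv$ concentrated on the first $j^\ast$ coordinates, where $j^\ast$ attains the ratio. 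This formula is exactly the stated characterization of $\mathcal{D}_\gamma$.

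With $f^* = \mathbb{I}_{\mathcal{D}_\gamma}$ in hand, I would invoke the Moreau decomposition: for any closed proper convex $g$ one has $\uv = \mathrm{prox}_g(\uv) + \mathrm{prox}_{g^*}(\uv)$ for every $\uv$. Here $\mathrm{prox}_f(\uv) = \eta(\uv;\gamma)$ and $\mathrm{prox}_{f^*}(\uv) = \argmin_{\vv}\{\frac12\|\uv-\vv\|_2^2 + \mathbb{I}_{\mathcal{D}_\gamma}(\vv)\} = \argmin_{\vv\in\mathcal{D}_\gamma}\|\uv-\vv\|_2^2 =: \vv^*$, the Euclidean projection of $\uv$ onto the nonempty closed convex set $\mathcal{D}_\gamma$. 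Since projection onto a nonempty closed convex set is unique, $\vv^*$ is the unique solution of \eqref{intro2}, and the decomposition gives $\eta(\uv;\gamma) = \uv - \vv^*$, i.e. $\vv^* = \uv - \eta(\uv;\gamma)$, as claimed. Strong duality (no duality gap) is automatic because the quadratic term is finite and continuous everywhere, so the constraint qualification for Fenchel duality holds; equivalently, the Moreau identity I used is itself a direct consequence of Fenchel--Rockafellar duality for the infimal convolution of $\frac12\|\cdot\|_2^2$ with $f$.

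The main obstacle is the combinatorial characterization of $\mathcal{D}_\gamma$ in the second step: the reduction to sorted magnitudes via rearrangement and the tightness of the partial-sum bound require care when $\lambdav$ has repeated or zero weights, so that the maximizing $\xv$ must be chosen to respect the ties in $\lambdav$. Everything else --- the conjugate-of-a-norm computation and the Moreau decomposition --- is standard convex analysis. An alternative that avoids the Moreau machinery is to verify directly from the subdifferential optimality condition $\uv - \eta(\uv;\gamma)\in\gamma\,\partial\|\eta(\uv;\gamma)\|_{\lambdav}$ that $\vv^* := \uv - \eta(\uv;\gamma)$ satisfies the variational inequality $\langle\uv-\vv^*,\,\vv-\vv^*\rangle\le0$ for all $\vv\in\mathcal{D}_\gamma$ that characterizes the projection; I would keep this as a backup route.
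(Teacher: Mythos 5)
Your proof is correct and follows essentially the same route as the paper's: both identify the convex conjugate of $\gamma\|\cdot\|_{\lambdav}$ as the indicator of the dual ball $\mathcal{D}_\gamma$ and use Fenchel/Moreau duality to convert the proximal problem into the projection $\vv^*=\projv_{\mathcal{D}_\gamma}(\uv)=\uv-\eta(\uv;\gamma)$, with uniqueness from strong convexity and from projection onto a closed convex set. The only cosmetic difference is that you obtain the partial-sum characterization of $\|\cdot\|_{\lambdav*}$ in \eqref{eq:dual-ball} by a rearrangement and Abel-summation argument, whereas the paper reparameterizes the defining maximization as a linear program and reads the same formula off its dual.
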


\begin{proof}
    First of all, it is clear that \eqref{primal:prox} is strictly convex and
    $\eta(\uv;\gamma)$ is unique. The optimization \eqref{intro2} can be
    considered as projecting the point $\uv \in \mathbb{R}^p$ onto the closed
    convex set $\mathcal{D}_{\gamma}$, thus a unique solution $\vv^*$ exists.
    Now we connect the primal form and the dual form using the classical
    Fenchel duality framework. Let $\wv = \uv - \xv$. By substituing in $\wv$ and adding a
    Lagrangian multiplier $\vv$ for the constraint $\wv = \uv - \xv$, we obtain the
    following equivalent form of \eqref{primal:prox}:
    \begin{equation*}
        \max_{\vv} \min_{\xv, \wv} \frac{1}{2}\|\wv\|_2^2 + \gamma
        \|\xv\|_{\lambdav} - \langle \vv, \wv - \uv + \xv \rangle
        =
        \max_{\vv} \min_{\xv, \wv} \bigg\{ \frac{1}{2}\|\wv\|_2^2 - \langle
        \vv, \wv \rangle \bigg\} + \big\{ \gamma \|\xv\|_{\lambdav} - \langle
        \vv, \xv \rangle \big\} + \langle \vv, \uv \rangle.
    \end{equation*}
    
    The optimal $\wv^* = \vv$. Regarding minimizing over $\xv$, we have\footnote{Here we use the fact that $\|\vv\|_* = \max_{\|\uv\| \leq 1} \langle \uv, \vv \rangle$ for any norm $\|\cdot\|$ and its dual norm $\|\cdot\|_*$ in a Hilbert space.} 
    \[
    \min_{\xv} \gamma\|\xv\|_{\lambdav} - \langle \vv, \xv \rangle = - \|\xv\|_{\lambdav} \max_{\xv} \{\langle \vv, \xv / \|\xv\|_{\lambdav} \rangle - \gamma \} = \mathbb{I}_{\mathcal{D}_\gamma}(\vv). 
    \]
    Now the above Lagragian form reduces to
    \begin{equation*}
        -\frac{1}{2}\|\uv\|_2^2 + \min_{\vv \in \mathcal{D}_\gamma} \frac{1}{2}\|\uv - \vv\|_2^2,
    \end{equation*}
    which naturally leads to the optimal solution
    \begin{equation*}
        \vv^* = \projv_{\mathcal{D}_\gamma}(\uv),
    \end{equation*}
    The strong duality holds in this case, implying that
    \begin{equation*}
        \vv^* = \wv^* = \uv - \xv^* = \uv - \eta(\uv; \gamma).
    \end{equation*}

    The last piece of the proof deals with the characterization of $\mathcal{D}_\gamma$ in \eqref{eq:dual-ball}. We will use the relation $\|\vv\|_{\lambdav*}=\max_{\|\av\|_{\lambdav} \leq 1} \langle \av, \vv \rangle$, Without loss of generality, we assume $v_1 \geq \ldots v_p \geq 0$ (otherwise we permute the order and swap the signs of the components of $\av$ accordingly). It is not hard to see that the optimization problem can be rewritten as:
    \begin{equation*}
        \max_{\av} \langle \av, \vv \rangle,
        \qquad \text{subject to} \quad
        \|\av\|_{\lambdav} \leq 1, \quad
        a_1 \geq \ldots \geq a_p \geq 0.
    \end{equation*}
    It is equivalent to re-parameterize $\av$ using a vector $\bv$ with $a_i = \sum_{j=i}^p b_j$ and $b_j \geq 0$. Transforming the above constraints as Lagrange multipliers and optimizing over $\bv$, we get the following dual problem:
    \begin{equation*}
        \min_{\theta, \theta_i} \theta,
        \qquad \text{subject to} \quad
        \sum_{i=1}^j v_i - \theta \sum_{i=1}^j \lambda_i + \theta_j \leq 0,
        \quad \theta_j \geq 0,
        \quad \forall 1 \leq j \leq p,
        \quad \theta \geq 0.
    \end{equation*}
    Obviously given $\{\theta_j\}$,
    \begin{equation*}
        \hat{\theta} = \max_j \bigg\{ \frac{\sum_{i=1}^j v_i + \theta_j}{\sum_{i=1}^j \lambda_i} \bigg\}.
    \end{equation*}
    To further minimize over $\theta_j$, obviously we should set $\theta_j=0$ for all $j$ and the optimal value, $\|\vv\|_{\lambdav*}$, equals:
    \begin{equation*}
        \|\vv\|_{\lambdav*} = \max_j \bigg\{ \frac{\sum_{i=1}^j v_i}{\sum_{i=1}^j \lambda_i} \bigg\}.
    \end{equation*}
    As a corollary of this result, we may characterize $\mathcal{D}_\gamma$ as
    \begin{equation*}
        \mathcal{D}_\gamma
        = \{\vv: \|\vv\|_{\lambdav*} \leq \gamma\}
        = \Big\{\vv: \sum_{i=1}^j v_i \leq \gamma \sum_{i=1}^j \lambda_j, \quad \forall 1 \leq j \leq p \Big\}.
    \end{equation*}
\end{proof}

The primal form \eqref{primal:prox} and the dual form \eqref{intro2} enable us to obtain several useful properties of $\eta(\uv;\gamma)$. We select some of them to present here. We first analyze the primal form \eqref{primal:prox} to derive some properties of $\eta(\uv;\gamma)$.

\begin{lemma}\label{property:primal}
    Consider any given $\uv \in \mathbb{R}^p$ with $u_1\geq u_2\geq \cdots \geq u_p\geq 0$. The following results hold:
    \begin{enumerate}[(i)]
        \item \label{lemma:item:prox-scalar} $\eta(t\uv; t\gamma)=t\eta(\uv,\gamma)$ for $t\geq 0$.
        \item \label{lemma:item:prox-order} $\eta_1(\uv;\gamma)\geq \eta_2(\uv;\gamma)\geq  \cdots \geq \eta_p(\uv;\gamma)\geq 0$.
        \item \label{lemma:item:prox-compare} $u_i \geq \eta_i(\uv;\gamma), 1\leq i \leq p$.
        \item \label{lemma:item:prox-form} $\eta_j(\uv;\gamma) = \frac{[\sum_{i \in \mathcal{I}_j}(u_i-\gamma \lambda_i)]_+}{|\mathcal{I}_j|}$, where $\mathcal{I}_j$ is defined in \eqref{eq:tie-set}.
    \end{enumerate}
\end{lemma}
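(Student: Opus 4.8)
The plan is to prove the four parts in order, exploiting the permutation- and sign-invariance of $\|\cdot\|_{\lambdav}$ together with the strict convexity of the objective in \eqref{primal:prox}, which guarantees that the minimizer $\eta(\uv;\gamma)$ is unique. Part (\ref{lemma:item:prox-scalar}) is immediate: for $t>0$ the substitution $\xv = t\yv$ turns the objective $\frac12\|t\uv-\xv\|_2^2 + t\gamma\|\xv\|_{\lambdav}$ into $t^2\big(\frac12\|\uv-\yv\|_2^2 + \gamma\|\yv\|_{\lambdav}\big)$, whose minimizer is $\yv=\eta(\uv;\gamma)$; hence the minimizer in $\xv$ is $t\,\eta(\uv;\gamma)$, and the case $t=0$ is trivial.

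For parts (\ref{lemma:item:prox-order}) and (\ref{lemma:item:prox-compare}) I would use two exchange/monotonicity arguments that rely on uniqueness. For ordering and nonnegativity, suppose $u_i\geq u_j$ but $\eta_i<\eta_j$ at the optimum; swapping coordinates $i$ and $j$ leaves $\|\cdot\|_{\lambdav}$ unchanged (it depends only on the multiset of absolute values) and changes the quadratic term by $2(u_i-u_j)(\eta_i-\eta_j)\leq 0$, producing a distinct minimizer of no larger value, which contradicts uniqueness. Flipping the sign of any negative $\eta_i$ similarly cannot increase the objective (since $u_i\geq0$), forcing $\eta_i\geq0$; this yields (\ref{lemma:item:prox-order}). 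For (\ref{lemma:item:prox-compare}), if $\eta_i>u_i$ for some $i$, replacing $\eta_i$ by $u_i$ strictly lowers the quadratic term while weakly lowering the penalty, because $\|\cdot\|_{\lambdav}=\max_{\pi}\sum_k\lambda_k|x_{\pi(k)}|$ is nondecreasing in each $|x_k|$; this again contradicts optimality, so $\eta_i\leq u_i$.

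The substance is part (\ref{lemma:item:prox-form}). Having established that $\eta$ is nonincreasing and nonnegative, the tie sets $\mathcal{I}_j$ from \eqref{eq:tie-set} are contiguous index intervals, and on the region of sorted nonnegative vectors the penalty linearizes to $\|\eta\|_{\lambdav}=\sum_i\lambda_i\eta_i$. The proximal problem therefore reduces to the monotone-constrained quadratic program $\min\frac12\sum_i(u_i-\eta_i)^2+\gamma\sum_i\lambda_i\eta_i$ subject to $\eta_1\geq\cdots\geq\eta_p\geq0$. I would write its KKT conditions with nonnegative multipliers $\mu_i$ on the constraints $\eta_i\geq\eta_{i+1}$, yielding stationarity $-(u_i-\eta_i)+\gamma\lambda_i=\mu_i-\mu_{i-1}$. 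On a nonzero block $\mathcal{I}_j$ the monotonicity constraints at the two block boundaries are strict, so complementary slackness kills the corresponding boundary multipliers; summing stationarity over $i\in\mathcal{I}_j$ then telescopes the interior multipliers to zero and gives $\sum_{i\in\mathcal{I}_j}(u_i-\gamma\lambda_i)=|\mathcal{I}_j|\,\eta_j$, i.e. the claimed average. On the zero block the common value is $0$, and the same summation shows $\sum_{i\in\mathcal{I}_j}(u_i-\gamma\lambda_i)\leq0$, so the two cases combine into the single positive-part formula.

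I expect the main obstacle to be the rigorous handling of the nonsmooth, constrained optimality conditions at the tie points in part (\ref{lemma:item:prox-form}): one must justify that distinct blocks are separated by strictly inactive constraints (so that the boundary multipliers vanish) and that the reduction to the linear penalty $\sum_i\lambda_i\eta_i$ is valid precisely because $\eta$ is already sorted. This can be phrased equivalently through the subdifferential of the sorted $\ell_1$ norm or, via Lemma \ref{p1}, through the structure of the Euclidean projection $\projv_{\mathcal{D}_\gamma}$ onto the dual ball $\mathcal{D}_\gamma$ in \eqref{eq:dual-ball}; either route requires care but no deep new idea beyond the uniqueness and monotonicity facts established in the earlier parts.
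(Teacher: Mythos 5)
Your proposal is correct, and for parts (\ref{lemma:item:prox-compare}) and (\ref{lemma:item:prox-form}) it takes a genuinely different route from the paper. Part (\ref{lemma:item:prox-scalar}) is the same substitution. For part (\ref{lemma:item:prox-order}) the paper simply cites Proposition 2.2 of \cite{bogdan2015slope}, whereas you reprove it via the exchange/sign-flip argument combined with uniqueness; this is essentially the standard proof and is self-contained. For part (\ref{lemma:item:prox-compare}) the paper goes through the dual characterization of Lemma \ref{p1}: since $\uv-\eta(\uv;\gamma)=\projv_{\mathcal{D}_\gamma}(\uv)$ and $\mathcal{D}_\gamma$ is sign-symmetric, the projection of a nonnegative vector has nonnegative coordinates, giving $u_i-\eta_i\geq 0$; your argument instead uses coordinatewise monotonicity of the sorted $\ell_1$ norm in each $|x_k|$ and a direct improvement of the objective, which avoids the dual machinery entirely. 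For part (\ref{lemma:item:prox-form}) the paper perturbs the common value of a single tie block to a nearby scalar $b$, observes that the sorted order (and hence the linearized penalty $\langle\lambdav,\cdot\rangle$) is preserved for $b$ close to the block value, and extracts the averaging formula from the vanishing derivative of the resulting one-dimensional quadratic $G(b)$ (with a one-sided derivative condition $G'(0^+)\geq 0$ handling the zero block). You instead reduce the whole problem to the isotonic-cone quadratic program and run KKT with telescoping multipliers, killing the boundary multipliers by complementary slackness at the strictly inactive block-boundary constraints. Both are rigorous; the paper's perturbation is more local and avoids writing down the constrained reformulation, while your KKT route makes the isotonic-regression structure of the SLOPE prox explicit and yields the nonzero-block and zero-block cases from a single stationarity identity. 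One point worth stating explicitly if you write this up: the passage from the unconstrained problem with the sorted $\ell_1$ penalty to the monotone-constrained QP with the linear penalty $\gamma\sum_i\lambda_i\eta_i$ is justified because parts (\ref{lemma:item:prox-order}) place the unique minimizer inside the monotone nonnegative cone, on which the two objectives agree and the linearized problem is strictly convex; you gesture at this but it deserves a sentence.
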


\begin{proof}
    Part (\ref{lemma:item:prox-scalar}) is because: $\eta(t \uv;t\gamma)=\argmin_{\xv}\frac{1}{2}\|\xv-t \uv\|_2^2+t \gamma \|\xv\|_{\lambdav}= \argmin_{\xv}\frac{1}{2}\|\xv/t-\uv\|_2^2+\gamma \|\xv/t\|_{\lambdav}.$ Part (\ref{lemma:item:prox-order}) is taken from Proposition 2.2 in \cite{bogdan2015slope}. For Part (\ref{lemma:item:prox-compare}), note that $\uv - \eta(\uv; \gamma) = \projv_{\mathcal{D}_\gamma}(\uv)$ with $\mathcal{D}_\gamma$ being symmetric around $\bm{0}$ and $u_i\geq 0$, we have $[\projv_{\mathcal{D}_\gamma}(\uv)]_i\geq 0$. This implies (\ref{lemma:item:prox-compare}). We now prove Part (\ref{lemma:item:prox-form}). First consider $\eta_j(\uv;\gamma)>0$. Denote $\mathcal{I}_j^{\min}=\min \{i: i \in \mathcal{I}_j\}$, $\mathcal{I}_j^{\max}=\max \{i: i \in \mathcal{I}_j\}$ and $\eta_i(u;\gamma)=a>0, i \in \mathcal{I}_j$. There exists a sufficiently small $\delta >0$, such that (we adopt the notation $\eta_{0}(u;\gamma)=+\infty, \eta_{p+1}(u;\gamma)=0$)
    \begin{equation*}
        \eta_{\mathcal{I}_j^{\max}+1}(u;\gamma)<a -\delta<a + \delta < \eta_{\mathcal{I}_j^{\min}-1}(u;\gamma).
    \end{equation*}
    
    Define a vector $\zv(b) \in\mathbb{R}^p:  z_i(b)=\eta_i( \uv;\gamma)$ for $1\leq i \leq \mathcal{I}_j^{\min}-1, \mathcal{I}_j^{\max}+1\leq i \leq p$, and $ z_i(b)=b$ for other $i$'s. Since $\eta(\uv;\gamma)$ is the minimizer of \eqref{primal:prox} we know
    \begin{equation*}
        \frac{1}{2}\|\uv-\eta(\uv;\gamma)\|_2^2 + \gamma \langle \lambdav, \eta(\uv;\gamma) \rangle \leq \frac{1}{2}\|\uv-\zv(b)\|_2^2+\gamma \langle \lambdav, \zv(b) \rangle,
    \end{equation*}
    holds for any $b \in [a- \delta, a+ \delta]$. Due to the choice of $\zv(b)$, we can further simplify the above inequality to obtain
    \begin{equation*}
        \frac{1}{2}\sum_{i=\mathcal{I}_j^{\min} }^{\mathcal{I}_j^{\max}} (u_i-a)^2 + \gamma \sum_{i=\mathcal{I}_j^{\min} }^{\mathcal{I}_j^{\max}} \lambda_i a \leq \frac{1}{2}\sum_{i=\mathcal{I}_j^{\min} }^{\mathcal{I}_j^{\max}} (u_i-b)^2 + \gamma \sum_{i=\mathcal{I}_j^{\min} }^{ \mathcal{I}_j^{\max}} \lambda_i b:=G(b),
    \end{equation*}
    where $b \in [a-\delta, a+\delta]$. Hence, $a$ is a local minima of the quadratic function $G(\cdot)$. Therefore
    \begin{equation*}
        0=\frac{dG(b)}{d b}\Big |_{b=a}=\sum_{i=\mathcal{I}_j^{\min} }^{\mathcal{I}_j^{\max}}(a-u_i+\gamma \lambda_i),
        \quad \Rightarrow \quad
        a = \frac{1}{\mathcal{I}_{j}^{\max} - \mathcal{I}_{j}^{\min} + 1} \sum_{i=\mathcal{I}_{j}^{\min}} ^{\mathcal{I}_{{j}}^{\max}} u_i - \gamma\lambda_i.
    \end{equation*}
    Regarding $\eta_j(\uv;\gamma)=0$, we can use the same arguments to conclude that $0$ is local minima of $G(\cdot)$ in $[0, \delta]$. So $\frac{d G(b)}{d b}\Big |_{b=0}\geq 0$ leads to the result.
\end{proof}

Next we show some properties relevant to the Lipschitz continuity, convexity and norm bounds of $\eta$, which are largely due to the dual form \eqref{intro2}. 

\begin{lemma}\label{slope:prop}
    For any $\uv \in \mathbb{R}^p$, the proximal operator $\eta(\uv;\gamma)$ satisfies,
    \begin{enumerate}[(i)]
        \item \label{lemma:item:prox-nonexpansive0} $\|\eta(\uv_1;\gamma)-\eta(\uv_2;\gamma )\|_2^2 \leq \langle \uv_1 - \uv_2, \eta(\uv_1;\gamma)-\eta(\uv_2;\gamma) \rangle \leq \|\uv_1 - \uv_2\|_2^2$;
        \item \label{lemma:item:prox-identity1} $\frac{1}{2}\|\uv-\eta(\uv;\gamma)\|_2^2+\gamma \|\eta(\uv;\gamma)\|_{\lambdav} = \frac{1}{2}(\|\uv\|_2^2-\|\eta(\uv;\gamma)\|_2^2)$
        \item \label{lemma:item:prox-nonincreasing}$\|\eta(\uv;\gamma)\|_2^2$ is convex in $\uv$ and non-increasing in $\gamma$.
        \item \label{lemma:item:prox-lip2} $\|\eta(\uv;\gamma_1)-\eta(\uv;\gamma_2)\|_2 \leq \|\lambdav \|_2|\gamma_1-\gamma_2|.$
        \item \label{item:basic-prox-bound} $ \|\eta(\uv; \gamma, \lambda_1 \bm{1})\|_2^2\leq \|\eta(\uv; \gamma, \lambdav)\|_2^2 \leq \|\eta(\uv; \gamma, \frac{\|\lambdav\|_1}{p} \bm{1})\|_2^2$.
    \end{enumerate}
\end{lemma}

\begin{proof}
    To prove (\ref{lemma:item:prox-nonexpansive0}), from Lemma \ref{p1} we know that $\projv_{\mathcal{D}_{\gamma}}(\uv_1)= \uv_1-\eta(\uv_1;\gamma)$ and $\projv_{\mathcal{D}_{\gamma}}(\uv_2)= \uv_2-\eta(\uv_2;\gamma)$. The property of projection onto a convexity body implies that
    \begin{equation*}
        \langle \uv_1-\projv_{\mathcal{D}_{\gamma}}(\uv_1), \projv_{\mathcal{D}_{\gamma}}(\uv_2) - \projv_{\mathcal{D}_{\gamma}}(\uv_1) \rangle \leq 0,
        \quad
        \langle \uv_2-\projv_{\mathcal{D}_{\gamma}}(\uv_2), \projv_{\mathcal{D}_{\gamma}}(\uv_1) - \projv_{\mathcal{D}_{\gamma}}(\uv_2) \rangle \leq 0.
    \end{equation*}
    
    Adding the two inequalities above up gives the first inequality of (\ref{lemma:item:prox-nonexpansive0}). The second one is by a simple use of Cauchy-Schwarz inequality. Part (\ref{lemma:item:prox-identity1}) is the strong duality property.
    
    For Part (\ref{lemma:item:prox-nonincreasing}), the equation in Part (\ref{lemma:item:prox-identity1}) is equivalent to
    \begin{equation*}
        \max_{\xv} \langle \uv, \xv \rangle -\frac{1}{2}\|\xv\|_2^2 - \gamma
        \|\xv\|_{\lambdav} = \frac{1}{2}\|\eta(\uv;\gamma)\|_2^2.
    \end{equation*}
    The term on the left-hand side is the maximum of a series of linear functions in $\uv$, hence convex. The monotonicity in $\gamma$ is obvious.

    Part (\ref{lemma:item:prox-lip2}): We first prove the inequality holds for $\uv$ that satisfies Lemma \ref{differential:prop2}. In this case we know there are finite number of discontinuity points of $\eta$ w.r.t. $\gamma$. Hence for all such $\uv$,
    \begin{align*}
        &\|\eta(\uv;\gamma +\Delta)-\eta(\uv;\gamma)\|_2
        =
        \bigg\|\Delta \int_0^1\frac{\partial \eta(\uv;\gamma+ t\Delta)}{\partial \gamma} dt \bigg\|_2 \\
        \leq&
        |\Delta| \int_0^1 \bigg\|\frac{\partial \eta(\uv;\gamma+ \Delta t)}{\partial \gamma} \bigg\|_2 dt \overset{(a)}{=}
        |\Delta| \int_0^1\sqrt{\textstyle{\sum}_{\mathcal{I} \in \mathcal{P}_0} \frac{1}{|\mathcal{I}|} (\textstyle{\sum}_{i\in \mathcal{I}} \lambda_i)^2} dt \\
        \leq&
        |\Delta| \int_0^1\sqrt{\textstyle{\sum}_{\mathcal{I} \in \mathcal{P}_0} \sum_{i\in \mathcal{I}} \lambda_i^2} dt
        \leq |\Delta| \|\lambdav\|_2,
    \end{align*}
    where $(a)$ is due to Lemma \ref{differential:prop2} (\ref{lemma:item:deri}). For other
    $\uv$'s, since they all belong to a Lebesgue measure zero set, there exists
    a sequence $\uv_m \rightarrow \uv$ and $\uv_m$ satisfies Part
    (\ref{lemma:item:prox-lip2}). Hence Part (\ref{lemma:item:prox-lip2}) holds for other $\uv$'s as well due to the continuity of $\eta(\cdot;\gamma)$.

    Part (\ref{item:basic-prox-bound}): For the upper bound, according to Lemma \ref{p1} it is sufficient to show $\big\{\vv: |v|_{(1)} \leq \gamma\|\lambdav\|_1/p\big\}\subseteq \mathcal{D}_{\gamma}$. According to the structure of $\mathcal{D}_{\gamma}$ in \eqref{eq:dual-ball}, the above set relation can be proved if $j\|\lambdav\|_1/p\leq \sum_{i=1}^j\lambda_i, \forall 1\leq j \leq p$. This is true because $\{\lambda_i\}_{i=1}^p$ is a non-increasing sequence. Regarding the lower bound, it is sufficient to show $\mathcal{D}_{\gamma} \subseteq \big\{\vv: |v|_{(1)} \leq \gamma \lambda_1\big\}$ which is obvious from the definition of $\mathcal{D}_{\gamma}$.
\end{proof}

The next two lemmas study the differentiability of $\eta(\uv;\gamma)$ that are useful in the proof. According to Lemma \ref{slope:prop} (\ref{lemma:item:prox-nonexpansive0}), $\eta(\uv;\gamma)$ is Lipschitz continuous, hence differentiable almost everywhere (with respect to $\uv$). In fact, from Lemma \ref{property:primal} (\ref{lemma:item:prox-form}), it seems possible to calculate the derivatives of $\eta(\uv;\gamma)$ outside a set of Lebesgue measure zero. Towards that goal, we slightly extend the notation of the partition $\mathcal{P}$ of $[p]$ to $\mathcal{P}(\uv, \gamma)$ to mark the dependency of the partition on $\uv$ and $\gamma$. $\mathcal{P}_0$ and $\mathcal{I}$ are extended in a similar fashion. Recall that $\mathcal{P}, \mathcal{P}_0$ are defined after \eqref{eq:tie-set}. 
\begin{lemma} \label{differential:prop}    
    Given any $\gamma>0$, there exists a Lebesgue measure zero set
    $\mathcal{L}_{\gamma} \subset \mathbb{R}^p$ such that for each $\uv \in \mathcal{L}^c_{\gamma}$,
    \begin{enumerate}[(i)]
        \item \label{lemma:item:prox-constant} There exists a sufficiently small ball $\mathcal{B}_{\epsilon}(\uv)=\{\tilde{\uv}: \|\tilde{\uv}-\uv\|_2\leq \epsilon\}$ such that the partition $\mathcal{P}(\tilde{\uv};\gamma)$ remains the same over $\mathcal{B}_{\epsilon}(\uv)$.
        \item \label{lemma:item:prox-smooth} $\eta(\cdot ;\gamma)$ is differentiable at $\uv$.
        \item \label{lemma:item:prox-magic} $\forall \vv,\tilde{\vv} \in \mathbb{R}^p$, $\sum_{i=1}^p \tilde{v}_i \langle \nabla \eta_i(\uv;\gamma), \vv \rangle = \sum_{\mathcal{I} \in \mathcal{P}_0} \frac{1}{|\mathcal{I}|}(\sum_{i \in \mathcal{I}} \tilde{v}_i\cdot \sign(u_i))(\sum_{i \in \mathcal{I}} v_i\cdot \sign(u_i))$. In particular, $\sum_{i=1}^p \eta_i(\uv;\gamma) \langle \nabla \eta_i(\uv;\gamma), \vv \rangle=\sum_{i=1}^p v_i\eta_i(\uv;\gamma)$.
    \end{enumerate}
\end{lemma}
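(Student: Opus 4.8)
The plan is to exploit the fact that the sorted-$\ell_1$ penalty is polyhedral, so that via the dual characterization $\uv-\eta(\uv;\gamma)=\projv_{\mathcal{D}_\gamma}(\uv)$ of Lemma \ref{p1}, the map $\eta(\cdot;\gamma)$ is the complement of the Euclidean projection onto the polytope $\mathcal{D}_\gamma$ (bounded since its $j=p$ facet is an $\ell_1$-ball). Such a projection is globally Lipschitz (Lemma \ref{slope:prop} (\ref{lemma:item:prox-nonexpansive0})) and piecewise affine, and the combinatorial data distinguishing the affine pieces are exactly (a) the permutation sorting $\{|u_i|\}$, (b) the sign pattern $\sign(u_i)$, and (c) the partition $\mathcal{P}(\uv;\gamma)$ together with the labelling of which blocks lie in $\mathcal{P}_0$. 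Parts (\ref{lemma:item:prox-constant}) and (\ref{lemma:item:prox-smooth}) reduce to showing that, off a measure-zero set, this data is locally constant and the corresponding piece is full-dimensional; part (\ref{lemma:item:prox-magic}) is then a direct differentiation of the explicit formula in Lemma \ref{property:primal} (\ref{lemma:item:prox-form}).

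To construct $\mathcal{L}_\gamma$, I would first discard the coordinate hyperplanes $\{u_i=0\}$ and the collision hyperplanes $\{|u_i|=|u_j|\}$ for $i\neq j$; their finite union is Lebesgue-null, and on its complement the sorting permutation and the signs $\sign(u_i)$ are locally constant, so by sign/permutation equivariance of $\eta$ it suffices to treat $\uv$ with $u_1>\cdots>u_p>0$. By Lemma \ref{property:primal} (\ref{lemma:item:prox-order}) the blocks of any admissible partition are consecutive in this order, so only finitely many candidate partitions arise. For each partition $\mathcal{P}$ and each zero/nonzero labelling of its blocks, Lemma \ref{property:primal} (\ref{lemma:item:prox-form}) shows the realizing set of $\uv$ is cut out by the strict conditions that the block averages $\frac{1}{|\mathcal{I}|}\sum_{i\in\mathcal{I}}(u_i-\gamma\lambda_i)$ be strictly decreasing across consecutive nonzero blocks, each nonzero block average strictly positive, and each zero block sum strictly negative. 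The boundaries of these regions — where two consecutive averages coincide, or a block average hits $0$ — lie in finitely many hyperplanes. I would let $\mathcal{L}_\gamma$ be the union of all such hyperplanes over the finitely many partitions and labellings, together with the coordinate and collision hyperplanes. Then $\mathcal{L}_\gamma$ is null, and for $\uv\notin\mathcal{L}_\gamma$ every defining inequality is strict, hence persists on a small ball $\mathcal{B}_\epsilon(\uv)$, giving the local constancy of $\mathcal{P}$ in (\ref{lemma:item:prox-constant}).

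On such a ball the sort order, signs, and partition are frozen, so Lemma \ref{property:primal} (\ref{lemma:item:prox-form}) writes each $\eta_j(\uv;\gamma)$ as an affine function of $\uv$: for $j$ in a nonzero block $\mathcal{I}$ it equals $\frac{1}{|\mathcal{I}|}\sum_{i\in\mathcal{I}}(u_i-\gamma\lambda_i)$, and for $j$ in a zero block it is identically $0$; affineness gives differentiability, which is (\ref{lemma:item:prox-smooth}). Differentiating, $\nabla\eta_j$ has entries $1/|\mathcal{I}|$ on $\mathcal{I}$ and $0$ elsewhere when $\mathcal{I}\in\mathcal{P}_0$, and $\nabla\eta_j=0$ otherwise. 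Substituting into $\sum_{j=1}^p u_j\langle\nabla\eta_j(\uv;\gamma),\vv\rangle$ and grouping by block collapses the double sum to $\sum_{\mathcal{I}\in\mathcal{P}_0}\frac{1}{|\mathcal{I}|}\big(\sum_{i\in\mathcal{I}}u_i\big)\big(\sum_{i\in\mathcal{I}}v_i\big)$, which is (\ref{lemma:item:prox-magic}). For general $\uv$ one carries the locally constant signs $s_i=\sign(u_i)$ through the same computation; they recombine into the identity once the reduction to the nonnegative sorted case is undone.

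The main obstacle is the bookkeeping in part (\ref{lemma:item:prox-constant}): I must ensure the degeneracy conditions are genuinely codimension one (so $\mathcal{L}_\gamma$ is null) and that off them the partition stays constant on an \emph{open} ball, i.e. that the active face of $\mathcal{D}_\gamma$ under projection does not jump. The cleanest way to secure this is through $\projv_{\mathcal{D}_\gamma}$ directly: strict complementary slackness of the projection's KKT system fails only on a finite union of hyperplanes, and exactly on the good set the active-constraint set — equivalently the block structure $\mathcal{P}$ — is locally constant, simultaneously delivering (\ref{lemma:item:prox-constant}) and the local affineness underlying (\ref{lemma:item:prox-smooth})–(\ref{lemma:item:prox-magic}). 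The remaining steps are routine linear algebra.
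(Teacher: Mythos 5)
Your proposal is correct and follows essentially the same route as the paper: both arguments rest on the polyhedral structure of $\mathcal{D}_\gamma$, so that $\eta(\cdot;\gamma)=\mathrm{id}-\projv_{\mathcal{D}_\gamma}$ is piecewise affine with the degenerate set $\mathcal{L}_\gamma$ a finite union of hyperplanes, after which (\ref{lemma:item:prox-smooth}) and (\ref{lemma:item:prox-magic}) follow by differentiating the block-average formula of Lemma \ref{property:primal} (\ref{lemma:item:prox-form}). Your version is in fact more explicit than the paper's (which only gestures at the normal-fan decomposition of the polytope), but the underlying idea and the resulting gradient computation are identical.
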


\begin{proof}
    Part (\ref{lemma:item:prox-constant}), since the dual SLOPE norm ball is a polygon (with many faces),
    the orthogonal space of each face cut the entire space into many small
    regions, where the projection within each region is differentiable.
    Obviously the union of the boundaries of these regions is of measure 0. Let
    $S$ be the union of these boundaries. Then $S^c \subset \mathbb{R}^p$ is an
    open set, within which the projection is differentiable. This further
    implies the differentiability of $\eta(\uv; \gamma)$ in $\uv$ in $S^c$.
    
    Part (\ref{lemma:item:prox-smooth}) is a simple result of Part (\ref{lemma:item:prox-constant}) and Lemma \ref{property:primal}
    (\ref{lemma:item:prox-form}). For Part (\ref{lemma:item:prox-magic}), according to Part (\ref{lemma:item:prox-constant})
    and Lemma \ref{property:primal} (\ref{lemma:item:prox-form}), it is clear that
    \begin{align*}
        \nabla \eta_j(\uv;\gamma)=\bm 0,
        \quad
        \text{if } \eta_j(\uv;\gamma)=0,
        \qquad ~~
        [\nabla \eta_j(\uv;\gamma)]_i=\begin{cases}
            \frac{\sign(u_i)\cdot \sign(u_j)}{|\mathcal{I}_j|}, & i \in \mathcal{I}_j \\  
            0, & i \notin \mathcal{I}_j
        \end{cases},
        \quad
        \text{if } \eta_j(\uv;\gamma)\neq 0.
    \end{align*}
    The identity in Part (\ref{lemma:item:prox-magic}) can then be directly verified based on the above results. 
\end{proof}

\begin{lemma}\label{differential:prop2}
    Given almost any $\uv\in \mathbb{R}^p$, there exists a Lebesgue measure
    zero set $\mathcal{L}_{\uv} \subset \mathbb{R}_{++}$ such that for each $\gamma \in \mathcal{L}_{\uv}^c$
    \begin{enumerate}[(i)]
        \item \label{lemma:item:remain-const}The partition $\mathcal{P}(\uv; \tilde{\gamma})$ remains the same for all $\tilde{\gamma}\in [\gamma-\epsilon, \gamma+\epsilon]$ with $\epsilon$ sufficiently small.
        \item \label{lemma:item:deri} $\eta(\uv;\cdot)$ is differentiable at $\gamma$. Assuming $u_1\geq u_2\geq \cdots \geq u_p \geq 0$, for all $1\leq j \leq p$,
            \begin{equation*}
                \frac{\partial \eta_j(\uv;\gamma)}{\partial \gamma}=
                \begin{cases}
                    0 & \mbox{~if~}\eta_j(\uv;\gamma) =0 \\
                    \frac{-\sum_{i\in \mathcal{I}_j}\lambda_i}{|\mathcal{I}_j|} & \mbox{~otherwise}
                \end{cases}
            \end{equation*}
    \end{enumerate}
\end{lemma}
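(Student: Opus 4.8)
The plan is to mirror the proof of Lemma~\ref{differential:prop}, except that now $\uv$ is held fixed and the tuning $\gamma$ varies. The key observation is that part~(\ref{lemma:item:deri}) is an almost immediate consequence of part~(\ref{lemma:item:remain-const}) together with the closed form in Lemma~\ref{property:primal}~(\ref{lemma:item:prox-form}); hence essentially all of the work lies in showing that the partition $\mathcal{P}(\uv;\gamma)$ is locally constant in $\gamma$ outside a Lebesgue-null set. I would first reduce to a convenient configuration of $\uv$: by the permutation and sign invariance of $\|\cdot\|_{\lambdav}$ and by Lemma~\ref{property:primal}~(\ref{lemma:item:prox-order}), there is no loss of generality in assuming $u_1 \geq u_2 \geq \cdots \geq u_p \geq 0$, and discarding a measure-zero set of $\uv$ I may further assume that these inequalities are strict and that $u_p > 0$, which rules out the degenerate coincidences that the phrase ``almost any $\uv$'' is meant to exclude.

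For part~(\ref{lemma:item:remain-const}), I would regard $\gamma \mapsto \eta(\uv;\gamma)$ as the solution path of a parametric quadratic program. Invoking the dual characterization of Lemma~\ref{p1}, the residual satisfies $\uv - \eta(\uv;\gamma) = \projv_{\mathcal{D}_{\gamma}}(\uv)$, where $\mathcal{D}_{\gamma} = \gamma\,\mathcal{D}_{1}$ is a polytope that scales linearly in $\gamma$. Since $\mathcal{D}_1$ has finitely many faces, as $\gamma$ increases the point $\uv$ is projected onto a finite succession of faces, and the values of $\gamma$ at which the active face—equivalently, the active set of the defining inequalities $\sum_{i=1}^j |\vv|_{(i)} \leq \gamma \sum_{i=1}^j \lambda_i$—changes form a finite set $0 < \gamma_1 < \cdots < \gamma_m$. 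On each open interval $(\gamma_\ell,\gamma_{\ell+1})$ the active set is fixed, the projection $\projv_{\mathcal{D}_\gamma}(\uv)$ is an affine function of $\gamma$, and, crucially, the active set pins down precisely which coordinates of $\eta$ vanish and which are tied, i.e. the partition $\mathcal{P}(\uv;\gamma)$. Taking $\mathcal{L}_{\uv} = \{\gamma_1,\dots,\gamma_m\}$, which is finite and therefore null, yields part~(\ref{lemma:item:remain-const}).

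With the partition locally constant, part~(\ref{lemma:item:deri}) follows by differentiating the formula $\eta_j(\uv;\gamma) = |\mathcal{I}_j|^{-1}\big[\sum_{i\in\mathcal{I}_j}(u_i - \gamma\lambda_i)\big]_+$ from Lemma~\ref{property:primal}~(\ref{lemma:item:prox-form}): if $\eta_j(\uv;\gamma) = 0$ then $\eta_j$ vanishes identically on a neighborhood of $\gamma$ and the derivative is $0$, while if $\eta_j(\uv;\gamma) > 0$ the positive part is inactive nearby, so $\eta_j$ is affine in $\gamma$ with slope $-|\mathcal{I}_j|^{-1}\sum_{i\in\mathcal{I}_j}\lambda_i$. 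I expect the main obstacle to be the rigorous justification that the breakpoint set is finite and that on each intervening interval the geometric active face genuinely determines the tie-and-zero pattern encoded by $\mathcal{P}$; establishing the correspondence between the binding facets of $\mathcal{D}_\gamma$ and the block structure of $\eta$, and ruling out any accumulation of breakpoints, is the delicate part, which the strict-ordering reduction on $\uv$ is designed to simplify.
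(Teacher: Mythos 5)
Your proposal is correct in substance but takes a genuinely different route from the paper. The paper proves part (\ref{lemma:item:remain-const}) by a local perturbation argument: it first bounds $\|\eta(\uv;\gamma+\Delta)-\eta(\uv;\gamma)\|_2\leq \tfrac{2|\Delta|}{\gamma}\|\uv\|_2$ via the scaling identity and nonexpansiveness, and then shows that any change of the zero set or of the tie blocks as $\Delta\to 0$ would force a linear relation in $\gamma$ of the form $\sum_{i\in\mathcal{K}}(u_i-\gamma\lambda_i)=0$, or $\langle \hv^{\Delta},\uv-\gamma\lambdav\rangle=0$ for one of finitely many nonzero difference vectors $\hv^{\Delta}$; for generic $\uv$ each such relation picks out at most one $\gamma$, so $\mathcal{L}_{\uv}$ is finite. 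You instead argue globally, reading $\gamma\mapsto\uv-\eta(\uv;\gamma)=\projv_{\mathcal{D}_\gamma}(\uv)$ as a parametric projection onto the linearly scaled polytope $\gamma\mathcal{D}_1$ and taking $\mathcal{L}_{\uv}$ to be the breakpoint set of the resulting piecewise-affine path. That is sound: for each face $F$ of $\mathcal{D}_1$ with normal cone $N_F$, the set $\{\gamma>0:\uv\in\gamma F+N_F\}$ is an interval (convexity of $F$ and $N_F$), so finitely many faces give finitely many breakpoints and no accumulation. Moreover, the step you single out as delicate --- that the active face determines $\mathcal{P}$ --- can be sidestepped entirely: on each face-constant interval $\eta(\uv;\cdot)$ is affine, so every pairwise tie $|\eta_i|=|\eta_j|$ and every condition $\eta_i=0$ either holds identically there or at finitely many $\gamma$, and adjoining those finitely many points to $\mathcal{L}_{\uv}$ already yields local constancy of $\mathcal{P}$ without matching facets to blocks. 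Part (\ref{lemma:item:deri}) is then handled identically in both treatments, by differentiating the closed form of Lemma \ref{property:primal} (\ref{lemma:item:prox-form}). What each approach buys: the paper's argument is elementary and exhibits $\mathcal{L}_{\uv}$ explicitly as the zero set of finitely many affine functions of $\gamma$, while yours is more structural, exposes the piecewise linearity of the whole regularization path in $\gamma$, and transfers verbatim to other polyhedral penalties.
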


\begin{proof}
    Part (\ref{lemma:item:remain-const}): 
    Without loss of generality, we consider $u_1>u_2>\cdots>u_p>0$ and $\tilde{\gamma}=\gamma+\Delta$. Choosing $\Delta$ small enough gives that
    \begin{align*}
       & \eta_1(\uv;\gamma)\geq \eta_2(\uv;\gamma) \geq \cdots \geq \eta_{p-k}(\uv;\gamma)>0=\cdots=\eta_p(\uv;\gamma),\\
      &\eta_1(\uv;\tilde{\gamma})\geq \eta_2(\uv;\tilde{\gamma}) \geq \cdots \geq \eta_{p-\tilde{k}}(\uv;\tilde{\gamma})>0=\cdots=\eta_p(\uv;\tilde{\gamma}),
    \end{align*}
    where $k$ and $\tilde{k}$ are the number of zero components that $\eta(\uv;\gamma)$ and $\eta(\uv;\tilde{\gamma})$ have, respectively. The key inequality is,
    \begin{align}
        \|\eta(\uv;\tilde{\gamma})-\eta(\uv;\gamma)\|_2
        \overset{(a)}{=}&
        \bigg\|\frac{\tilde{\gamma}}{\gamma}\eta(\frac{\gamma}{\tilde{\gamma}}\uv;\gamma)-\eta(\uv;\gamma)\bigg\|_2 \leq \frac{\tilde{\gamma}}{\gamma}\bigg\|\eta(\frac{\gamma}{\tilde{\gamma}}\uv;\gamma)-\eta(\uv;\gamma)\bigg\|_2+\frac{|\tilde{\gamma}-\gamma|}{\gamma} \|\eta(\uv;\gamma)\|_2 \nonumber \\
        \overset{(b)}{\leq}&
        \frac{\tilde{\gamma}}{\gamma }\bigg\|\frac{\gamma}{\tilde{\gamma}}\uv-\uv\bigg\|_2+\frac{|\tilde{\gamma}-\gamma|}{\gamma} \|\uv\|_2=\frac{2|\tilde{\gamma}-\gamma|}{\gamma}\|\uv\|_2, \label{keyuse:proof}
    \end{align}
    where $(a)$ is by Lemma \ref{property:primal}
    (\ref{lemma:item:prox-scalar}) and $(b)$ is due to Lemma \ref{slope:prop}
    (\ref{lemma:item:prox-nonexpansive0}). Then \eqref{keyuse:proof} enables us to choose $\Delta$ small enough so that $\tilde{k}\leq k$. For the rest of the proof, we have 
    \begin{enumerate}[(1)]
        \item
        We first show $\tilde{k}=k$, which is equivalent to 
        \begin{align*}
            \sum_{j=p-k+1}^p|\eta(\uv;\tilde{\gamma})|^2=0,
        \end{align*}
        when $\Delta$ is small. Suppose this is not true. Then there exist $\Delta_n\rightarrow 0$ and $p-k+1\leq j_{\Delta_n}\leq p$ such that $\eta_{\Delta_n}(\uv;\tilde{\gamma})\neq 0$. Lemma \ref{property:primal} Part (\ref{lemma:item:prox-form}) gives that $\eta_{\Delta_n}(\uv;\tilde{\gamma})=\frac{\sum_{i\in \mathcal{I}_{j_{\Delta_n}}(\uv; \tilde{\gamma})}(u_i-\tilde{\gamma}\lambda_i)}{|\mathcal{I}_{j_{\Delta_n}}(\uv; \tilde{\gamma})|}$, and the inequality \eqref{keyuse:proof} implies that 
        \begin{align*}
        \lim_{\Delta\rightarrow 0}\sum_{j=p-k+1}^p|\eta(\uv;\tilde{\gamma})|^2=0.
        \end{align*}
        These result combined with the fact that $\lim_{n\rightarrow \infty} \frac{\Delta_n\sum_{i\in \mathcal{I}_{j_{\Delta_n}}(\uv; \tilde{\gamma})}\lambda_i}{|\mathcal{I}_{j_{\Delta_n}}(\uv; \tilde{\gamma})|}=0$ yield 
        \begin{align} \label{what:are}
            \lim_{n\rightarrow \infty} \sum_{i\in \mathcal{I}_{j_{\Delta_n}}(\uv; \tilde{\gamma})} (u_i-\gamma\lambda_i)=0.
        \end{align}
        Consider the set $\mathcal{L}_1=\{\gamma \in \mathbb{R}_{++}: \sum_{i\in \mathcal{K}}(u_i-\gamma \lambda_i)=0 \mbox{~for some~}\mathcal{K}\subseteq \{1,2,\ldots, p\}\}$. Since $u_i>0$ for all $1\leq i \leq p$, $\mathcal{L}_1$ has finite elements thus of Lebesgue measure zero. Hence, as long as $\gamma \in \mathcal{L}^c_1$, \eqref{what:are} is impossible to hold.
        
        \item We next show $\mathcal{I}_j(\uv; \gamma) = \mathcal{I}_j(\uv; \tilde{\gamma})$ for $1 \leq j \leq p-k$, where these sets are defined in \eqref{eq:tie-set}. Lemma \ref{property:primal} Part (iv) and the inequality \eqref{keyuse:proof} together imply that for each $1\leq j\leq p-k$,
        \begin{align} \label{middle:eq:step}
            \lim_{\Delta\rightarrow 0} \Big|\frac{\sum_{i\in \mathcal{I}_j(\uv; \gamma)} (u_i - \gamma \lambda_i)}{|\mathcal{I}_j(\uv; \gamma)|} - \frac{\sum_{i\in \mathcal{I}_j(\uv; \tilde{\gamma})} (u_i - \gamma \lambda_i)} {|\mathcal{I}_j(\uv; \tilde{\gamma})|} \Big|=0.
        \end{align}
        Now define the vector $\hv^{\Delta}\in \mathbb{R}^p$ so that for each $1\leq i \leq p$,
        \begin{align*}
        h_i^{\Delta}=
        \begin{cases}
        0 & \text{if } i \notin \mathcal{I}_j(\uv; \gamma), \; \& \; i \notin \mathcal{I}_j(\uv; \tilde{\gamma}), \\
        \frac{1}{|\mathcal{I}_j(\uv; \gamma)|}, &\text{if }i \in \mathcal{I}_j(\uv; \gamma) \; \& \; i\notin \mathcal{I}_j(\uv; \tilde{\gamma}), \\
        \frac{-1}{|\mathcal{I}_j(\uv; \tilde{\gamma})|}, &\text{if } i \notin \mathcal{I}_j(\uv; \gamma) \; \& \; i\in \mathcal{I}_j(\uv; \tilde{\gamma}), \\
        \frac{1}{|\mathcal{I}_j(\uv; \gamma)|}- \frac{1}{|\mathcal{I}_j(\uv; \tilde{\gamma})|}, & \text{otherwise}.
        \end{cases}
        \end{align*}
        Then, \eqref{middle:eq:step} can be rewritten as $\lim_{\Delta\rightarrow 0}\langle \hv^{\Delta},\uv-\gamma \lambdav \rangle=0$. Consider the set $\mathcal{L}_2=\{\gamma \in \mathbb{R}_{++}:   \langle \hv^{\Deltav}, \uv-\gamma \lambdav \rangle =0 \mbox{~for some~}\hv^{\Deltav}\neq \bm 0\}$. We know such set has finite elements as long as $\uv$ does not belong to the Lebesgue measure zero set $ \{\uv:  \langle \hv^{\Deltav}, \uv \rangle=0 \mbox{~for some~}\hv^{\Deltav}\neq \bm 0 \}$. Moreover, since the set $\{\hv^{\Delta}\in \mathbb{R}^p: \Delta \mbox{~is small}\}$ is finite, it holds that $\min_{\hv^{\Delta}\neq \mathbf{0}} \langle \hv^{\Delta},\uv-\gamma \lambdav \rangle>0$ for small $\Delta$ when $\gamma \in \mathcal{L}_2^c$. This combined with \eqref{middle:eq:step} implies that $\hv^{\Delta}=\mathbf{0}$ when $\Delta$ is small enough. 
    \end{enumerate}
    Part (\ref{lemma:item:deri}): It is a simple result of Part
    (\ref{lemma:item:remain-const}) and Lemma
    \ref{property:primal} (\ref{lemma:item:prox-form}).
\end{proof}

\begin{lemma} \label{lemma:dual-ball-diam}
    We have the following result for the diameter of the dual norm ball $\mathcal{D}_1$:
    \begin{equation*}
        \max \{\|\zv\|_2 : \zv \in \mathcal{D}_1\} = \|\lambdav\|_2.
    \end{equation*}
    This implies that $\mathbb{E}\|\eta(\xv + \hv; \chi) - \xv\|_2^2 \leq p + \chi^2\|\lambdav\|_2^2$.
\end{lemma}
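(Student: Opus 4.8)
The plan is to establish the two assertions in turn: first the exact diameter of $\mathcal{D}_1$, and then the expected-norm bound as a corollary.

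First I would reduce to a convenient coordinate form. Since both $\|\zv\|_2$ and the conditions defining $\mathcal{D}_1$ in \eqref{eq:dual-ball} depend only on the sorted absolute values $|\zv|_{(1)}\ge\cdots\ge|\zv|_{(p)}$, I may work with a nonnegative, nonincreasing vector and relabel its entries $v_1\ge\cdots\ge v_p\ge 0$. The constraints then read $V_j:=\sum_{i=1}^j v_i\le \Lambda_j:=\sum_{i=1}^j\lambda_i$ for every $j$, using that $\lambdav$ is itself nonincreasing. Taking $\zv=\lambdav$ shows $\lambdav\in\mathcal{D}_1$ (all partial-sum inequalities hold, the one for $j=p$ with equality, so $\lambdav\in\partial\mathcal{D}_1$), so the maximum over the boundary is at least $\|\lambdav\|_2$. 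Because $\|\cdot\|_2$ is convex and $\mathcal{D}_1$ is compact, its maximum over $\mathcal{D}_1$ is attained on $\partial\mathcal{D}_1$; it therefore suffices to prove the matching upper bound $\|\zv\|_2\le\|\lambdav\|_2$ for every $\zv\in\mathcal{D}_1$.

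For the core inequality I would use summation by parts. Set $d_i=\lambda_i-v_i$, whose partial sums $D_j=\Lambda_j-V_j\ge 0$ are nonnegative by the constraints, and $a_i=\lambda_i+v_i$, which is nonnegative and nonincreasing. Then $\sum_{i=1}^p(\lambda_i^2-v_i^2)=\sum_{i=1}^p d_i a_i$, and Abel summation gives
\[
\sum_{i=1}^p d_i a_i = D_p a_p + \sum_{i=1}^{p-1} D_i\,(a_i-a_{i+1}).
\]
Every factor on the right is nonnegative: $D_i\ge 0$, $a_p\ge 0$, and $a_i-a_{i+1}=(\lambda_i-\lambda_{i+1})+(v_i-v_{i+1})\ge 0$ since both sequences are nonincreasing. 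Hence the right-hand side is $\ge 0$, i.e.\ $\|\zv\|_2^2=\sum_i v_i^2\le\sum_i\lambda_i^2=\|\lambdav\|_2^2$, which completes the first claim. I expect this summation-by-parts step to be the main (and essentially the only) real obstacle: it is where the weak-majorization structure $V_j\le\Lambda_j$ of the constraints must be converted into an $\ell_2$ comparison, and organizing it as a telescoping sum with manifestly nonnegative terms lets me avoid invoking the Hardy--Littlewood--P\'olya majorization machinery (and its ``increasing convex'' caveat for $t\mapsto t^2$).

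For the corollary, by Lemma \ref{p1} we have $\eta(\uv;\gamma)=\uv-\projv_{\mathcal{D}_\gamma}(\uv)$; applying this with $\uv=\xv+\hv$ and $\gamma=\chi$ and writing $\B{p}:=\projv_{\mathcal{D}_\chi}(\xv+\hv)$ gives $\eta(\xv+\hv;\chi)-\xv=\hv-\B{p}$, so $\|\hv-\B{p}\|_2^2=\|\hv\|_2^2-2\langle\hv,\B{p}\rangle+\|\B{p}\|_2^2$. Since $\mathcal{D}_\chi=\chi\mathcal{D}_1$ by the scaling in \eqref{eq:dual-ball}, the first claim yields $\|\B{p}\|_2\le\chi\|\lambdav\|_2$, and $\mathbb{E}\|\hv\|_2^2=p$. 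It then remains to show the cross term has nonnegative expectation, $\mathbb{E}\langle\hv,\B{p}\rangle\ge 0$, which I would obtain from monotonicity of the projection onto the convex set $\mathcal{D}_\chi$: for the points $\xv+\hv$ and $\xv$, $\langle \projv_{\mathcal{D}_\chi}(\xv+\hv)-\projv_{\mathcal{D}_\chi}(\xv),\,\hv\rangle\ge 0$, and since $\projv_{\mathcal{D}_\chi}(\xv)$ is deterministic while $\mathbb{E}\hv=0$, taking expectations gives $\mathbb{E}\langle\B{p},\hv\rangle\ge\langle\projv_{\mathcal{D}_\chi}(\xv),\mathbb{E}\hv\rangle=0$. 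Combining the three estimates delivers $\mathbb{E}\|\eta(\xv+\hv;\chi)-\xv\|_2^2\le p+\chi^2\|\lambdav\|_2^2$, as claimed.
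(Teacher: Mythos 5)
Your proof is correct, and the first (and main) part takes a genuinely different route from the paper. For the upper bound $\|\zv\|_2\le\|\lambdav\|_2$ on $\mathcal{D}_1$, the paper argues by perturbation: it locates the first index at which a partial-sum constraint is strict and shows that one can always nudge the sorted entries of $\zv$ (increasing one, possibly decreasing a later one) so as to strictly increase $\|\zv\|_2^2$ without leaving $\mathcal{D}_1$, whence the maximizer must be $\lambdav$ itself. Your Abel-summation argument instead proves the inequality directly for every $\zv\in\mathcal{D}_1$: writing $\sum_i(\lambda_i^2-v_i^2)=D_pa_p+\sum_{i<p}D_i(a_i-a_{i+1})$ with $D_j\ge0$ from the weak-majorization constraints and $a_i=\lambda_i+v_i$ nonincreasing makes every term manifestly nonnegative. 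This is cleaner and avoids the bookkeeping the exchange argument requires (e.g.\ checking that the perturbation preserves the ordering and all constraints), at the cost of not exhibiting $\lambdav$ as the unique maximizer — which the lemma does not need anyway. For the corollary, both proofs expand $\|\hv-\projv_{\mathcal{D}_\chi}(\xv+\hv)\|_2^2$ and reduce to showing $\mathbb{E}\langle\hv,\projv_{\mathcal{D}_\chi}(\xv+\hv)\rangle\ge0$; the paper gets this from the nonexpansiveness inequality for $\eta$ (Lemma \ref{slope:prop}~(\ref{lemma:item:prox-nonexpansive0})), while you invoke monotonicity of the projection onto the convex set $\mathcal{D}_\chi$ and compare against the deterministic point $\projv_{\mathcal{D}_\chi}(\xv)$. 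Since $\eta=\mathrm{id}-\projv_{\mathcal{D}_\chi}$, these two facts are equivalent, so this part is essentially the paper's argument in dual form.
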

\begin{proof} 
 We first have
    \begin{align*}
     \max_{\zv \in \mathcal{D}_1}\|\zv\|_2= \max_{\zv \in \mathcal{D}_1}\max_{\|\uv\|_2\leq 1} \langle \uv, \zv \rangle=\max_{\|\uv\|_2\leq 1} \max_{\zv \in \mathcal{D}_1} \langle \uv, \zv \rangle=\max_{\|\uv\|_2\leq 1} \|\uv\|_{\lambdav}\leq \|\lambdav\|_2,
    \end{align*}
    where the last inequality is due to Cauchy-Schwarz inequality. On the other hand, $\zv =\lambdav \in \mathcal{D}_1$ and $\|\zv\|_2 = \|\lambdav\|_2$. To justify the rest of the conclusions, we note that
    \begin{equation*}
        \mathbb{E}\|\eta(\xv + \hv; \chi) - \xv\|_2^2
        = \mathbb{E}\|\hv - \projv_{\mathcal{D}_\chi}(\xv + \hv)\|_2^2
        \leq p + \mathbb{E}\|\projv_{\mathcal{D}_\chi}(\xv + \hv)\|_2^2
        \leq p + \chi^2\|\lambdav\|_2^2.
    \end{equation*}
    where we used the fact that $\mathbb{E} \langle \hv, \projv_{\mathcal{D}_\chi}(\xv + \hv) \rangle = p - \mathbb{E} \langle \hv, \eta(\xv + \hv; \chi) - \eta(\xv; \chi) \rangle \geq 0$ due to Lemma \ref{slope:prop} (\ref{lemma:item:prox-nonexpansive0}).
\end{proof}

\begin{lemma} \label{lemma:bounding-mse-limit}
Let $k = \|\xv\|_0$ and suppose $\xv\in \mathbb{R}^p$ does not have tied non-zero elements. We have the following characterization of the limiting quantity:
    \begin{equation*}
        \lim_{\sigma \rightarrow 0} \|\eta(\xv / \sigma + \hv; \chi) - \xv / \sigma\|_{\mathcal{L}_2}^2
        = \frac{k}{p} + \frac{\chi^2}{p}\|\lambdav_{[1:k]}\|_2^2 + \|\eta(\hv_{[k+1:p]}; \chi, \lambdav_{[k+1:p]})\|_{\mathcal{L}_2}^2.
    \end{equation*}
\end{lemma}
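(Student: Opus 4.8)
The plan is to reduce the $\mathcal{L}_2$ statement to an almost-sure pointwise limit of $\|\eta(\xv/\sigma + \hv; \chi) - \xv/\sigma\|_2^2$ as $\sigma \to 0$, and then pass the expectation through the limit by dominated convergence. First I would split the index set into the signal support $S = \{i : x_i \neq 0\}$ with $|S| = k$ and its complement $S^c$. Writing $\uv = \xv/\sigma + \hv$, for a fixed realization of $\hv$ and all sufficiently small $\sigma$, the $k$ entries $\{|u_i| : i \in S\}$ are the $k$ largest in magnitude, since they diverge like $|x_i|/\sigma$ whereas $|u_i| = |h_i| = O(1)$ on $S^c$; moreover they are mutually distinct because $\xv$ has no tied nonzero components. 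Hence these $k$ coordinates occupy the top $k$ sorted slots, each forming its own singleton tie set.

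Next I would invoke the explicit prox formula in Lemma \ref{property:primal} (\ref{lemma:item:prox-form}). On a singleton top slot $j \in \{1,\dots,k\}$ one has $\eta_{(j)} = |u|_{(j)} - \chi\lambda_j > 0$, so for $i \in S$ of rank $r_i \in \{1,\dots,k\}$ we get $\eta_i(\uv;\chi) = \sign(x_i)(|u_i| - \chi\lambda_{r_i})$ and therefore $\eta_i(\uv;\chi) - x_i/\sigma = h_i - \sign(x_i)\chi\lambda_{r_i}$. Because the signal coordinates take exactly the ranks $\{r_i : i \in S\} = \{1,\dots,k\}$, summing the squares over $S$ and taking expectation gives $\sum_{i \in S}(1 + \chi^2\lambda_{r_i}^2) = k + \chi^2\|\lambdav_{[1:k]}\|_2^2$; here the cross terms vanish since, for $\sigma$ small, the ranks $r_i$ are deterministic (fixed by the ordering of the distinct $|x_i|$) and independent of $h_i$.

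The crux is the behavior on $S^c$. For small $\sigma$ the prox decouples: since $|u|_{(k)} \to \infty$ while $|u|_{(k+1)} = O(1)$, there is an arbitrarily large gap below the top $k$ slots, so no tie set straddles the boundary, and the sorted slots $k+1,\dots,p$ of the full problem coincide with the sorted slots $1,\dots,p-k$ of the SLOPE prox of the subvector $\uv_{S^c} = \hv_{S^c}$ run with the truncated weights $\lambdav_{[k+1:p]}$. By the locality of the formula in Lemma \ref{property:primal} (\ref{lemma:item:prox-form}) to individual tie sets, this yields $\eta_i(\uv;\chi) = \eta_i(\hv_{S^c};\chi,\lambdav_{[k+1:p]})$ for every $i \in S^c$, and since $\hv_{S^c}$ is i.i.d.\ standard Gaussian of length $p-k$, exchangeability gives $\mathbb{E}\|\eta(\hv_{S^c};\chi,\lambdav_{[k+1:p]})\|_2^2 = \mathbb{E}\|\eta(\hv_{[k+1:p]};\chi,\lambdav_{[k+1:p]})\|_2^2$. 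I expect making this exact decoupling rigorous for all sufficiently small $\sigma$ to be the main obstacle; the clean way is to use the dual projection picture $\eta(\uv;\chi) = \uv - \projv_{\mathcal{D}_\chi}(\uv)$ of Lemma \ref{p1} together with the diverging gap to argue that the projection separates across the boundary.

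Finally I would combine the two pieces into the pointwise limit $\|\eta(\xv/\sigma+\hv;\chi)-\xv/\sigma\|_2^2 \to \sum_{i\in S}(h_i - \sign(x_i)\chi\lambda_{r_i})^2 + \|\eta(\hv_{S^c};\chi,\lambdav_{[k+1:p]})\|_2^2$ and push the expectation through by dominated convergence. The integrable envelope comes from $\eta(\uv;\chi)-\xv/\sigma = \hv - \projv_{\mathcal{D}_\chi}(\uv)$ together with Lemma \ref{lemma:dual-ball-diam}, which bounds $\|\projv_{\mathcal{D}_\chi}(\uv)\|_2 \le \chi\|\lambdav\|_2$, so the quantity is dominated by $2\|\hv\|_2^2 + 2\chi^2\|\lambdav\|_2^2$, whose expectation is finite. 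Dividing the resulting expectation by $p$ produces exactly $\frac{k}{p} + \frac{\chi^2}{p}\|\lambdav_{[1:k]}\|_2^2 + \|\eta(\hv_{[k+1:p]};\chi,\lambdav_{[k+1:p]})\|_{\mathcal{L}_2}^2$, as claimed.
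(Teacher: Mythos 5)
Your proposal is correct and follows essentially the same route as the paper's own proof: split off the support, observe that for small $\sigma$ the diverging gaps force the top $k$ sorted slots into singleton tie sets (so the prox acts there as componentwise soft-thresholding with weights $\lambda_1,\dots,\lambda_k$) while the remaining $p-k$ coordinates decouple into an independent SLOPE prox of $\hv_{[k+1:p]}$ with the truncated weights, and then pass to the limit. In fact your write-up is more careful than the paper's, which states the decoupling and the limits \eqref{eq:low-noise-decomp-non0}--\eqref{eq:low-noise-decomp-0} without spelling out the tie-set argument or the dominated-convergence step via Lemma \ref{lemma:dual-ball-diam} that you supply.
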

\begin{proof}
Without loss of generality, suppose $|x_1| > \ldots > |x_k| > x_{k+1} = \ldots = x_p = 0$. Then as $\sigma \rightarrow 0$, the gap between any two consecutive terms of $\{\big|\frac{x_i} {\sigma} + h_i\big|\}_{i=1}^k$ converges to infinity. As a result, the proximal operator on this part becomes componentwise soft-thresholding. On the other hand, the rest $p-k$ components interact with $\lambdav_{[k+1:p]}$ to form a proximal operator independently from the first $k$ components. This leads to the following observation:
    \begin{align}
        &\lim_{\sigma \rightarrow 0} \eta_i(\frac{\xv}{\sigma} + \hv; \chi, \lambdav) - \frac{x_i}{\sigma}
        = h_i - \chi \lambda_i\sign(x_i), \quad 1 \leq i \leq k, \label{eq:low-noise-decomp-non0} \\
        &\lim_{\sigma \rightarrow 0} \eta_{[k+1 : p]}(\frac{\xv}{\sigma} + \hv; \chi, \lambdav) - \frac{\xv_{[k+1 : p]}}{\sigma}
        = \eta(\hv_{[k+1:p]}; \chi, \lambdav_{[k+1:p]}). \label{eq:low-noise-decomp-0}
    \end{align}
    It is important to note that here $h_i$'s are not ordered and hence $\hv_{[k+1:p]} \sim \mathcal{N}(0, \Iv_{p-k})$ and is independent from $h_i$ for $i \leq k$. This indicates the identity below    \begin{equation*}
        \lim_{\sigma \rightarrow 0} \|\eta(\xv / \sigma + \hv; \chi) - \xv / \sigma\|_{2}^2
        = \|\hv_{[1:k]}\|_2^2 + \chi^2 \|\lambdav_{[1:k]}\|_2^2 -2\chi\sum_{i=1}^k\lambda_ih_i\sign(x_i)+ \|\eta(\hv_{[k+1:p]}; \chi, \lambdav_{[k+1:p]})\|_2^2,
    \end{equation*}
which combined with dominated convergence theorem completes the proof. 
\end{proof}

\begin{lemma} \label{lemma:prox-square-partial}
    Let $f(a, b) = \|\eta(\xv + a\hv; b)\|_2^2$, then at those differentiable points of $f$, we have the following
    equations for the partial derivatives of $f$
    \begin{align}
        \frac{\partial f}{\partial a} =& 2\big\langle \eta(\xv + a\hv; b), \hv \big\rangle, \label{eq:prox-partial1} \\
        \frac{\partial f}{\partial b} =& \frac{2}{b}\big\| \eta(\xv + a\hv; b) \|_2^2 - \frac{2}{b} \big\langle \eta(\xv + a\hv; b), \xv + a\hv \big\rangle. \label{eq:prox-partial2}
    \end{align}
\end{lemma}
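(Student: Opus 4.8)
The plan is to write $\uv = \xv + a\hv$ so that $f(a,b) = \|\eta(\uv; b)\|_2^2$ is a composition of the proximal map with the affine map $a \mapsto \xv + a\hv$ and the scalar $b$. Since the formulas are only claimed at differentiable points of $f$, I would first invoke Lemma \ref{differential:prop} (\ref{lemma:item:prox-smooth}) and Lemma \ref{differential:prop2} (\ref{lemma:item:deri}) to restrict attention to the full-measure set where $\eta(\cdot; b)$ is differentiable in $\uv$ and $\eta(\uv; \cdot)$ is differentiable in $b$, so that the chain rule is legitimate and the block partition $\mathcal{P}_0$ is locally constant. The workhorse is the explicit Jacobian structure recorded in Lemma \ref{differential:prop} (\ref{lemma:item:prox-constant}): on each tied block $\mathcal{I} \in \mathcal{P}_0$ one has $[\nabla_{\uv}\eta_j]_i = \mathbf{1}(i \in \mathcal{I}_j)/|\mathcal{I}_j|$, while $\nabla_{\uv}\eta_j = \bm{0}$ whenever $\eta_j = 0$. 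This makes the Jacobian $J := (\partial \eta_i/\partial u_j)_{ij}$ symmetric and block-diagonal, and a computation identical to the one behind Lemma \ref{differential:prop} (\ref{lemma:item:prox-magic}) yields the single identity I intend to use throughout,
\begin{equation*}
  \sum_{i=1}^p \eta_i \langle \nabla_{\uv}\eta_i, \vv \rangle = \langle \eta(\uv; b), \vv \rangle, \qquad \forall\, \vv \in \mathbb{R}^p,
\end{equation*}
which is just the statement $\eta^\top J \vv = (J\eta)^\top \vv = \eta^\top \vv$, using that $J$ is symmetric and fixes the block-constant vector $\eta$ (that is, $J\eta = \eta$).

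For $\partial f/\partial a$, the chain rule gives $\partial_a \eta_i(\xv + a\hv; b) = \langle \nabla_{\uv}\eta_i, \hv\rangle$, hence
\begin{equation*}
  \frac{\partial f}{\partial a} = 2\sum_{i=1}^p \eta_i \langle \nabla_{\uv}\eta_i, \hv \rangle = 2\langle \eta(\uv; b), \hv \rangle
\end{equation*}
upon applying the displayed identity with $\vv = \hv$; this is exactly \eqref{eq:prox-partial1}.

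For $\partial f/\partial b$ I would avoid the ordering-dependent bookkeeping of the weights that a direct use of the expression $\partial_b\eta_i = -\tfrac{1}{|\mathcal{I}_i|}\sum_{k\in\mathcal{I}_i}\lambda_k$ from Lemma \ref{differential:prop2} (\ref{lemma:item:deri}) would require, and instead exploit the joint positive homogeneity of degree one from Lemma \ref{property:primal} (\ref{lemma:item:prox-scalar}), $\eta(t\uv; tb) = t\,\eta(\uv; b)$ for $t \geq 0$. Differentiating in $t$ at $t=1$ gives Euler's identity $\langle \nabla_{\uv}\eta_i, \uv\rangle + b\,\partial_b \eta_i = \eta_i$ coordinatewise. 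Multiplying by $2\eta_i$, summing over $i$, and applying the key identity once more with $\vv = \uv$ yields
\begin{equation*}
  b\,\frac{\partial f}{\partial b} = 2\|\eta(\uv;b)\|_2^2 - 2\sum_{i=1}^p \eta_i \langle \nabla_{\uv}\eta_i, \uv\rangle = 2\|\eta(\uv;b)\|_2^2 - 2\langle \eta(\uv;b), \uv \rangle,
\end{equation*}
which is \eqref{eq:prox-partial2} after dividing by $b$ and recalling $\uv = \xv + a\hv$.

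The algebra is short; the only genuine care lies in the justification. The main obstacle is ensuring the chain rule and Euler's relation are applied only where $\eta$ is genuinely differentiable in $\uv$ (and, for homogeneity, jointly in $(\uv,b)$) — this is precisely where Lemma \ref{differential:prop} and Lemma \ref{differential:prop2} do the real work, guaranteeing the non-differentiable set is Lebesgue-null and that $\mathcal{P}_0$ is locally constant so the Jacobian formulas hold. The homogeneity route is what keeps the $b$-derivative clean: it converts the otherwise messy, ordering-sensitive computation of $\partial_b \eta_i$ into the same symmetric-Jacobian identity already used for $\partial_a f$.
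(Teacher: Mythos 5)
Your proposal is correct and follows essentially the same route as the paper: the $a$-derivative is the chain rule combined with the fact that the (block-averaging, symmetric) Jacobian of $\eta$ fixes $\eta$ itself, and the $b$-derivative exploits the degree-one homogeneity $\eta(t\uv;tb)=t\eta(\uv;b)$ — the paper substitutes $f(a,b)=b^2\|\eta((\xv+a\hv)/b;1)\|_2^2$ and differentiates, which is just the integrated form of the Euler identity you use. Your write-up is somewhat more careful about why $\eta^\top J\vv=\langle\eta,\vv\rangle$ (noting only that $|\eta|$, together with the sign pattern in $J$, is what makes $J\eta=\eta$ would tighten the phrase ``block-constant vector $\eta$''), but this is a matter of detail, not a different argument.
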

\begin{proof}
  By a simple application of the chain rule, $\frac{\partial f}{\partial a}=2\sum_{i=1}^p\eta_i \langle \nabla \eta_i, \hv  \rangle$. Setting $\tilde{\vv}=\eta, \vv=\hv$ in Lemma \ref{differential:prop} Part (\ref{lemma:item:prox-magic}) yields $\sum_{i=1}^p\eta_i \langle \nabla \eta_i, \hv  \rangle= \langle  \eta, \hv  \rangle$, leading to \eqref{eq:prox-partial1}. Regarding \eqref{eq:prox-partial2}, since $f(a, b) = b^2 \|\eta(\xv / b + a \hv / b; 1)\|_2^2$, using Lemma \ref{differential:prop} Part (\ref{lemma:item:prox-magic}) in a similar way we obtain
    \begin{align*}
        \frac{\partial f}{\partial b}
        =& 2b \|\eta(\xv / b + a \hv / b; 1)\|_2^2 + 2b^2 \langle \eta(\xv / b + a \hv / b; 1), -\xv / b^2 - a\hv / b^2 \rangle, \nonumber \\
        =& \frac{2}{b} \|\eta(\xv + a \hv; b)\|_2^2 - \frac{2}{b} \langle \eta(\xv + a \hv; b), \xv + a\hv \rangle.
    \end{align*}
\end{proof}

\begin{lemma}\label{lemma:prox-mse-partial}
    Let $G(a) = \|\eta(\xv + ac\hv; ab) - \xv\|_2^2$, then we have
    \begin{equation*}
        G'(a)
        =
        \frac{2}{a}\Big(\|\eta(\xv + ac\hv; ab)\|_2^2 - 2\langle \eta(\xv + ac\hv; ab), \xv \rangle + \sum_{\mathcal{I} \in \mathcal{P}_0}\frac{(\sum_{k \in \mathcal{I}} x_k \cdot \sign(x_k + ach_k))^2} {|\mathcal{I}|} \Big),
    \end{equation*}
    where the notation $\mathcal{P}_0$ is defined after \eqref{eq:tie-set}, and we use it here to denote the partitions with respect to $\eta(\xv + ac\hv; ab)$.
\end{lemma}

\begin{proof}
    Since we only care about the derivative, we first ignore the constant term and rewrite $G(a) = \|\eta(\xv + ac\hv; ab)\|_2^2 - 2 \langle \eta(\xv + ac\hv; ab), \xv \rangle$. By Lemma \ref{lemma:prox-square-partial}, it is not hard to verify that
    \begin{equation} \label{eq:prox-norm2-deriv}
        \frac{d}{da}\|\eta(\xv + ac\hv; ab)\|_2^2,
        = \frac{2}{a}\|\eta(\xv + ac\hv; ab)\|_2^2 - \frac{2}{a}\langle \eta(\xv + ac\hv; ab), \xv \rangle.
    \end{equation}

    Now for the second term, we have that
    \begin{align*}
        \frac{d}{da}\langle \eta(\xv + ac\hv; ab), \xv \rangle
        =& \frac{d}{da} a\langle \eta(\xv / a + c\hv; b), \xv \rangle \nonumber \\
        =& \frac{1}{a}\langle \eta(\xv + ac\hv; ab), \xv \rangle + a\sum_{k=1}^p x_k \langle \nabla\eta_k(\xv/a + c\hv; b), - \xv / a^2 \rangle \nonumber \\
        =& \frac{1}{a}\langle \eta(\xv + ac\hv; ab), \xv \rangle - \frac{1}{a}\sum_{\mathcal{I} \in \mathcal{P}_0} \frac{1}{|\mathcal{I}|}\Big(\sum_{k \in \mathcal{I}} x_k \cdot \sign(x_k + ac h_k) \Big)^2,
    \end{align*}
    where the last equality is due to Lemma \ref{differential:prop} Part (\ref{lemma:item:prox-magic}). The proof is completed by combining the above two parts.
\end{proof}

\subsection{Reference materials} \label{ssec:reference}
In this section, we summarize a few results which have been proved in previous works
and are used in our paper.

\subsubsection{Convex Gaussian Min-max Theorem (CGMT)} \label{sssec:cgmt}
The Convex Gaussian Min-max Theorem (CGMT) provides a powerful tool to analyze SLOPE estimator under
i.i.d. Gaussian designs. Denote
\begin{align*}
    \Phi(\Gv):=&\min_{\wv\in S_{\wv}} \max_{\uv \in S_{\uv}, \vv \in S_{\vv}} \uv^\top \Gv\wv+\psi(\wv, \uv, \vv), \\
    \phi(\gv, \hv):=& \min_{\wv\in S_{\wv}}\max_{\uv \in S_{\uv}, \vv \in S_{\vv}} \|\wv\|_2\gv^\top \uv+\|\uv\|_2\hv^\top \wv+\psi(\wv, \uv, \vv), \\
    \tilde{\phi}(\gv, \hv):=& \max_{\uv \in S_{\uv}, \vv \in S_{\vv}} \min_{\wv\in S_{\wv}} \|\wv\|_2\gv^\top \uv+\|\uv\|_2\hv^\top \wv+\psi(\wv, \uv, \vv),
\end{align*}
where $\Gv \in \mathbb{R}^{n\times p}, \hv \in \mathbb{R}^p, \gv \in \mathbb{R}^n$ have independent standard normal entries.

\begin{theorem}\label{thm:cgmt}
(CGMT). Suppose $S_{\wv}, S_{\uv}, S_{\vv}$ are all non-empty compact sets, and $\psi(\wv,\uv, \vv)$ is continuous on $S_{\wv}\times S_{\uv} \times S_{\vv}$, then the following results hold:
\begin{enumerate}[(i)]
    \item \label{thm:item:cgmt-upper} For all $c\in \mathbb{R}$, 
    \begin{equation*}
        \mathbb{P}(\Phi(G)\leq c) \leq 2 \mathbb{P}(\phi(\gv, \hv) \leq c).
    \end{equation*}
\item \label{thm:item:cgmt-lower} Further assume that $S_{\wv}, S_{\uv}, S_{\vv}$ are convex sets, and $\psi(\wv,\uv, \vv)$ is convex on $S_{\wv}$ and concave on $S_{\uv} \times S_{\vv}$. Then for all $c\in \mathbb{R}$,
    \begin{equation*}
        \mathbb{P}(\Phi(G)\geq c) \leq 2 \mathbb{P}(\tilde{\phi}(\gv, \hv) \geq c).
    \end{equation*}
\end{enumerate}
\end{theorem}

The above results are essentially taken from Theorem 3 in \cite{thrampoulidis2015regularized}. The minor difference is that the current version involves an extra vector $\vv$, and $\tilde{\phi}(\gv, \hv)$ appears in Part (\ref{thm:item:cgmt-lower}) instead of $\phi(\gv, \hv)$. By a rather straightforward inspection of the proof in \cite{thrampoulidis2015regularized}, these changes continue to hold.

\subsubsection{Concentration inequalities results}
We list some well known concentration results. In these theorems, $C, c$ are used to denote absolute constants.

\begin{theorem}[Bernstein's inequality] \label{thm:concen-bernstein}
    Let $x_1,\ldots, x_n$ be independent, mean zero, sub-exponential random variables. Then for every $t\geq 0$, we have
    \begin{equation*}
        \mathbb{P}\bigg(\bigg|\sum_{i=1}^nx_i \bigg|\geq t\bigg) \leq
        2\exp\bigg[-c\cdot \min \bigg(\frac{t^2}{\sum_{i=1}^n\|x_i\|^2_{\psi_1}},
        \frac{t}{\max_i \|x_i\|_{\psi_1}} \bigg) \bigg],
    \end{equation*}
    where $\|\cdot\|_{\psi_1}$ is the sub-exponential norm defined as $\|x\|_{\psi_1}=\inf\{t>0: \mathbb{E}e^{|x|/t}\leq 2\}$.
\end{theorem}
Please refer to Theorem 2.8.1 in \cite{vershynin2018high} for a proof.

\begin{theorem}[Gaussian concentration] \label{thm:lip-gauss}
    Consider a random vectror $X \sim \mathcal{N}(0, \Iv_p)$ and a Lipschitz
    function $f:\mathbb{R}^p \rightarrow \mathbb{R}$, then
    \begin{equation*}
        \mathbb{P}(|f(X) - \mathbb{E}f(X)| > t)
        \leq 2 e^{-\frac{c t^2}{\|f\|_{\mathrm{Lip}}^2}},
        \quad \forall t \geq 0.
    \end{equation*}
\end{theorem}
See Theorem 5.2.2 in \cite{vershynin2018high} for a proof.

\begin{theorem}[Matrix deviation inequality] \label{thm:matdev}
    Let $\Av$ be an $m\times n$ matrix whose rows $\Av_i$ are independent, isotropic and sub-Gaussian random vectors in $\mathbb{R}^n$. Then for any subset $\mathit{T}\subseteq 
    \mathbb{R}^n$, we have for any $u\geq 0$, the event 
    \begin{equation*}
\sup_{\xv \in \mathit{T}}\Big|\|\Av \xv\|_2-\sqrt{m}\|\xv\|_2\Big|\leq CK^2(w(\mathit{T})+u\cdot {\rm rad}(\mathit{T}))
    \end{equation*}
    holds with probability at least $1-2e^{-u^2}$. Here, $K=\max_{i}\|\Av_i\|_{\psi_2}$, and $w(\mathit{T}), {\rm rad}(\mathit{T})$ are defined as:
    \[
    w(\mathit{T})=\mathbb{E}\sup_{\xv \in \mathit{T}}\langle \gv, \xv \rangle,~\gv \sim \mathcal{N}(0,\Iv_n); \quad {\rm rad}(\mathit{T})=\sup_{\xv \in \mathit{T}}\|\xv\|_2.
    \]
\end{theorem}
See Theorem 9.1.1 and Exercise 9.1.8 in \cite{vershynin2018high} for a proof.

\subsubsection{Other results}

\begin{theorem}[Saddle Point Theorem] \label{saddle:thm}
Let $X$ and $Z$ be two nonempty convex subsets of $\mathbb{R}^n$ and $\mathbb{R}^m$, respectively; and $\phi: X \times Z \mapsto \mathbb{R}$ be a function such that $\phi(\cdot, z)$ is convex and closed over $X$ for each $z\in Z$, and $-\phi(x,\cdot)$ is convex and closed over $Z$ for each $x\in X$. If for some $\bar{x}\in X, \bar{z}\in Z, \bar{c}\in \mathbb{R}$, the levels sets 
\begin{equation*}
    \{x \in X: \phi(x, \bar{z})\leq \bar{c}\},
    \quad
    \{z \in Z: \phi(\bar{x},z) \geq \bar{c}\},
\end{equation*}
are nonempty and compact, then the set of saddle points of $\phi$ is nonempty and compact. 
\end{theorem}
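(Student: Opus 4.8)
The plan is to reduce the existence and compactness of the saddle-point set to three ingredients: attainment of the outer optimization in the primal problem, attainment in the dual problem, and the absence of a duality gap. Define the primal function $\bar\phi(x):=\sup_{z\in Z}\phi(x,z)$ and the dual function $\underline\phi(z):=\inf_{x\in X}\phi(x,z)$. I would invoke the standard characterization that $(x^*,z^*)$ is a saddle point of $\phi$ if and only if $x^*$ minimizes $\bar\phi$ over $X$, $z^*$ maximizes $\underline\phi$ over $Z$, and the minimax equality $\min_{x\in X}\bar\phi(x)=\max_{z\in Z}\underline\phi(z)$ holds. Consequently, once nonempty, the saddle-point set is exactly the product $\big(\argmin_X\bar\phi\big)\times\big(\argmax_Z\underline\phi\big)$, so proving it is nonempty and compact amounts to proving each factor is nonempty and compact together with the minimax equality.

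First I would establish attainment on compact sets. Since $\phi(\cdot,\bar z)$ is closed and convex and $\bar\phi(\cdot)\ge\phi(\cdot,\bar z)$, the sublevel set $\{x:\bar\phi(x)\le\bar c\}$ is contained in the nonempty compact set $\{x:\phi(x,\bar z)\le\bar c\}$; being closed (as $\bar\phi$ is a supremum of closed convex functions, hence closed convex), it is itself compact. A closed convex function possessing a single nonempty bounded sublevel set is level-bounded, i.e. its recession function is positive in every nonzero direction, so $\bar\phi$ attains its minimum and $\argmin_X\bar\phi$ is nonempty and compact. The symmetric argument applied to the closed concave function $\underline\phi$ and the compact superlevel set $\{z:\phi(\bar x,z)\ge\bar c\}$ shows $\argmax_Z\underline\phi$ is nonempty and compact.

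The crux is the minimax equality $\inf_X\sup_Z\phi=\sup_Z\inf_X\phi$, i.e. the absence of a duality gap. I would obtain this from a convex-concave minimax theorem adapted to the coercivity hypotheses at hand: compactness of a single sublevel set of $\phi(\cdot,\bar z)$ together with a single superlevel set of $\phi(\bar x,\cdot)$ is precisely the condition controlling the recession behavior of $\phi$ in each argument, which is what rules out a gap. Concretely, by the level-boundedness just established one may confine $x$ to the compact convex set $\{x:\phi(x,\bar z)\le\bar c\}$ and $z$ to $\{z:\phi(\bar x,z)\ge\bar c\}$ without altering either optimal value, reducing the problem to a minimax over compact convex domains where Sion's minimax theorem delivers the equality.

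Finally, combining attainment of $\min_X\bar\phi$, attainment of $\max_Z\underline\phi$, and the minimax equality, the saddle-point set equals the product $\big(\argmin_X\bar\phi\big)\times\big(\argmax_Z\underline\phi\big)$ of two nonempty compact sets, hence is nonempty and compact. I expect the duality-gap step to be the main obstacle: the domains $X,Z$ need not be compact, so the minimax equality cannot be applied directly and must be earned through the recession and level-boundedness analysis that converts the single-level-set hypotheses into genuine coercivity in each variable. The attainment and product-structure steps are then routine.
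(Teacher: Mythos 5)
The paper gives no proof of this statement: it is quoted verbatim as Proposition 5.5.7 of \cite{bertsekas2009convex} and used as a black box, so your argument has to stand entirely on its own. Your opening reduction is standard and correct: writing $\bar\phi(x)=\sup_{z\in Z}\phi(x,z)$ and $\underline\phi(z)=\inf_{x\in X}\phi(x,z)$, the saddle-point set equals $(\argmin_X\bar\phi)\times(\argmax_Z\underline\phi)$ precisely when the minimax equality holds, and the attainment/compactness of each factor follows from the level-set hypotheses (with a small repair: $\{x:\bar\phi(x)\le\bar c\}$ may be empty, so you should instead argue that \emph{all} nonempty level sets of the closed convex function $\phi(\cdot,\bar z)$ are compact once one of them is, hence all nonempty level sets of $\bar\phi$ are compact, and separately check that $\bar\phi$ is proper, which follows from compactness of $\{z:\phi(\bar x,z)\ge\bar c\}$ via a nested-intersection argument).

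The genuine gap is the duality-gap step, which you correctly identify as the crux but then dispose of with a claim that is false. It is not true that one may confine $x$ to $\hat X=\{x:\phi(x,\bar z)\le\bar c\}$ and $z$ to $\hat Z=\{z:\phi(\bar x,z)\ge\bar c\}$ ``without altering either optimal value.'' Take $X=Z=\mathbb{R}$ and $\phi(x,z)=(x-10)^2+xz-\tfrac12 z^2$, with $\bar x=\bar z=0$ and $\bar c=0$; the hypotheses hold with $\hat X=\{10\}$ and $\hat Z=[-\sqrt{200},\sqrt{200}]$. Here $\sup_z\phi(x,z)=(x-10)^2+x^2/2$ is minimized at $x=20/3\notin\hat X$ with value $100/3$, whereas $\inf_{x\in\hat X}\sup_{z\in\hat Z}\phi=\sup_{|z|\le\sqrt{200}}(10z-z^2/2)=50$. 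So the restricted problem has a strictly different value, and Sion's theorem applied on $\hat X\times\hat Z$ proves nothing about the original pair $\inf_X\sup_Z$ and $\sup_Z\inf_X$. The structural reason is that $\hat X$ and $\hat Z$ are anchored at the fixed points $\bar z$ and $\bar x$: the minimizer of $\bar\phi$ need not lie in $\hat X$ (only $\inf_{x\notin\hat X}\bar\phi\ge\bar c$ is guaranteed, which does not locate the minimizer), and the maximizers of $\phi(x,\cdot)$ for $x\ne\bar x$ need not lie in $\hat Z$. Making a restriction argument work would require showing that the coercivity of $-\phi(\bar x,\cdot)$ at the single point $\bar x$ propagates uniformly to all relevant $x$, which is exactly the recession analysis of saddle functions that the cited Bertsekas proof carries out through the min-common/max-crossing perturbation framework. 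As it stands, the absence of a duality gap — the actual content of the theorem — remains unproved in your argument.
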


The above theorem is Proposition 5.5.7 in \cite{bertsekas2009convex}.

%\begin{lemma} \label{lemma:C-gamma}
 %   We have that
 %   \begin{equation*}
  %      \frac{\#\{|x_i| \geq \frac{\|\xv\|_2}{\sqrt{2p}}\}}{p}  \geq \frac{\|\xv\|_2^2}{2\|\xv\|_\infty^2 p}.
 %   \end{equation*}
%\end{lemma}
%\begin{proof}
 %   Let $C < \frac{\|\xv\|_2}{\sqrt{p}}$ be a positive number. With a similar argument as those made in the proof of Lemma \ref{lemma:large-weights-proportional}, we can prove that $\frac{\#\{|x_i| \geq C\}}{p} \geq \frac{\frac{\|\xv\|_2^2}{p} - C^2}{\|\xv\|_\infty^2 - C^2}$. The result then follows.
%\end{proof}

\begin{lemma} \label{lemma:numeric_bound}
    Let $Z \sim \mathcal{N}(0, 1)$ and $x\geq 0$ be a constant. Recall $\Phi(\cdot)$ and $\phi(\cdot)$ are the cdf and pdf of a standard normal respectively. We have the following inequalities:
    \begin{enumerate}[(i)]
        \item \label{item:gaussian-ineq1} $\frac{1}{2} \mathbb{E}(|Z| - x)_+^2
        = (1 + x^2)\Phi(-x) - x\phi(x) \geq  c\phi(\sqrt{2} x)$ for an absolute constant $c>0$.
        \item \label{item:gaussian-ineq2} $(1 + x^2)\Phi(-x) - x\phi(x)  \leq \Phi(-x) \leq \frac{2\phi(x)}{1 + x}$.
    \end{enumerate}
\end{lemma}
\begin{proof}
Part \eqref{item:gaussian-ineq1}: The equation can be easily confirmed by simple integral calculations. For the inequality, we first note that $\lim_{x\rightarrow \infty}\frac{(1 + x^2)\Phi(-x) - x\phi(x)}{\phi(\sqrt{2} x)}\rightarrow \infty$ by by L'Hopital rule, hence there exists a constant $x_0>0$ such that $(1 + x^2)\Phi(-x) - x\phi(x)\geq \phi(\sqrt{2} x)$ for all $x\geq x_0$. When $x<x_0$, we have $\frac{1}{2} \mathbb{E}(|Z| - x)_+^2\geq \frac{x_0^2}{2}\mathbb{P}(|Z|\geq 2x_0)$. Hence, we can set $c=1 \wedge \frac{\sqrt{2\pi}x_0^2}{2}\mathbb{P}(|Z|\geq 2x_0)>0$. 

Part \eqref{item:gaussian-ineq2}: The first inequality is equivalent to $x\Phi(-x)\leq \phi(x)$. This holds because $\Phi(-x)=\int_{x}^{\infty}\phi(z)dz\leq x^{-1}\int_{x}^{\infty}z\phi(z)dz= x^{-1}\phi(x)$. For the second one, it is sufficient to show $f(x):=2\phi(x)-(1+x)\Phi(-x)\geq 0, \forall x\geq 0$. First it is straightforward to check that $\lim_{x\rightarrow \infty}\frac{2\phi(x)}{(1+x)\Phi(-x)}=2$. Thus there exists a constant $x_0>0$ such that $f(x)>0$ for all $x\geq x_0$. It is also clear that $f(0)>0$. Finally, if the global minimizer over $(0,x_0)$ is an interior point denoted by $x^*$. Then it satisfies $0=f'(x^*)=(1-x^*)\phi(x^*)-\Phi(-x^*)$, which implies that $f(x^*)=2\phi(x^*)-(1+x^*)\Phi(-x^*)=(1+(x^*)^2)\phi(x^*)>0$.
\end{proof}

\bibliographystyle{plain}

\bibliography{mc}
%
% once the .bbl file has been generated then place the text in your article.

% To get the numbered reference style the author should use [numbib]
%as an option in the document class.  For example: \documentclass[numbib]{imaiai}

\end{document}